\documentclass[main]{article}

 \PassOptionsToPackage{numbers, compress}{natbib}
\usepackage[main, final]{neurips_2026}
 \usepackage{orcidlink}
 \usepackage{algorithm}
 \usepackage{algpseudocode}
 \usepackage{amsmath, amssymb, amsthm}

\theoremstyle{plain}
\newtheorem{theorem}{Theorem}[section]

\newtheorem{lemma}[theorem]{Lemma}

\theoremstyle{definition}

\newtheorem{assumption}[theorem]{Assumption}
\theoremstyle{remark}
\newtheorem{remark}[theorem]{Remark}

\usepackage[utf8]{inputenc} 
\usepackage[T1]{fontenc}    
\usepackage{hyperref}       
\usepackage{url}            
\usepackage{booktabs}       
\usepackage{amsfonts}       
\usepackage{nicefrac}       
\usepackage{microtype}      
\usepackage{xcolor}         
\usepackage{empheq}
\usepackage[most]{tcolorbox}
\usepackage{wrapfig}

\usepackage{multirow}
\usepackage{cancel}

\usepackage{subcaption}
\usepackage{soul}

\DeclareMathOperator{\Var}{\mathbb{V}ar}

\title{Robust-by-Design Distributional Learning from Contaminated Samples}

\author{%
  Nevena~Gligi\'c$^{1,2}$ \qquad Arya~Farahi$^{1,2}$ \\
  $^{1}$Department of Statistics and Data Science, The University of Texas at Austin, USA \\
  $^{2}$The NSF-Simons AI Institute for Cosmic Origins, USA \\
  \texttt{\{nevena.gligic, arya.farahi\}@utexas.edu}
}

\begin{document}

\maketitle

\begin{abstract}
Modern data analysis pipelines increasingly rely on large datasets assembled through retrieval, scraping, automated filtering, and post-hoc curation, where the observed sample is often a contaminated mixture rather than the target population. Standard distributional objectives either ignore contamination and converge to the wrong target, or retain only a small trusted subset and discard most of the data. We study supervised and unsupervised learning when a clean reference sample is compared with a pooled sample from $r=(1-\epsilon)q+\epsilon z$, where $q$ is the latent target distribution and $z$ is an unknown contaminant. Motivated by settings with noisy membership scores for all observations and exact labels for a small audited subset, we propose contamination-corrected maximum mean discrepancy (CC--MMD), a design-based estimator that combines proxy scores with audited residual corrections to recover the oracle MMD that would have been computed from the latent clean sample. CC--MMD requires no calibration assumption on the proxy scores and no structural assumptions on the contaminant beyond standard kernel regularity. We prove finite-population design-unbiasedness of the corrected kernel sums, establish asymptotic normality of the audit correction, and show consistency for the target discrepancy $\mathrm{MMD}^2(p,q;k)$. Empirically, CC--MMD tracks the oracle under increasing contamination, improves parameter recovery and prediction in contaminated regression, and enables generative models trained on heavily corrupted mixtures to recover the clean target distribution. These results show that contaminated data need not be discarded or treated as the estimand. With sparse trusted labels and abundant noisy supervision, distributional objectives can be made robust by design.
\end{abstract}

\vspace{-5mm}
\section{Introduction}
\vspace{-2mm}

Ancient DNA research was reshaped by the problem of contamination \cite{hofreiter2001ancient, allentoft2013recovering, nobelprize2022medicineadvanced}. Early attempts to recover DNA from ancient human remains repeatedly encountered modern human DNA introduced during excavation, handling, and laboratory processing, and later work showed that such contamination can bias downstream genetic analyses toward the ancestry of the contaminating source \cite{racimo2016joint, peyregne2020authentict}. Consequently, paleogenomics adopted authentication protocols, clean-room standards, and explicit \textit{contamination-estimation methods} as part of the inferential pipeline \cite{orlando2021ancient}. Recent work continues to show that contamination can distort downstream procedures such as genotype imputation and PCA-based population-genetic analysis \cite{garrido2024assessing}. Modern machine learning confronts an analogous structural problem at web scale: datasets built through scraping, retrieval, automated filtering, and post-hoc curation often mix target examples with observations drawn from the wrong population, whether as out-of-domain documents in a web corpus \cite{dodge2021documenting, kreutzer2022quality}, false positives in a clinical cohort \cite{rao2025cohortdiagnostics}, or interlopers in an astronomical catalog \cite{farahi2016galaxy, pullen2016interloper}. In each case, the observed sample is not simply the population of interest but a contaminated mixture, and procedures applied naively to that sample risk learning or inferring the wrong distribution.

We study distribution matching under mixture contamination \cite{huber1965robust, huber1992robust}. The observed pool follows
\[
r = (1-\epsilon)q + \epsilon z,
\]
where $q$ denotes the target distribution and $z$ an unknown contaminating distribution. When procedures are applied directly to the observed pool target $r$, discrepancy estimation, parameter recovery, and generative learning can converge to the wrong objective even with very large samples.

Many data analysis pipelines already produce partial information about sample membership \cite{ratner2017snorkel, banda2018advances, davis2023hetdex, huang2024alchemist, wang2024human}. Automated systems assign classifier scores, retrieval scores, or heuristic quality flags to every example, and a smaller subset can often be audited through expert review or higher-fidelity measurement. This yields a common regime of abundant noisy supervision and sparse trusted labels. Existing practice often takes one of two unsatisfactory routes. Either most of the data is discarded, and only the audited subset is used, or the full contaminated pool is used, and bias toward the mixture accepted. For instance, see signal-to-noise cuts in spectroscopic surveys \cite{mukae2026enhancing}, quality-control pipelines in genomics \cite{hemstrom2024next, sealock2025tutorial}, and threshold-based event selection in high-energy physics \cite{hance2025learning}. Neither route utilizes the full potential of available information. 

This gap is especially visible in distribution-based learning and inference applications. Kernel discrepancies and related distribution-matching objectives are used to compare samples \cite{gretton2012kernel, gretton2006kernel}, fit implicit models \cite{briol2019statistical}, estimate parameters \cite{bharti2023optimally}, and train generative models \cite{li2015generative, li2017mmd}. Under contamination, naive objectives absorb the contaminant component and push the analysis toward the wrong population. Our experiments (Section~\ref{sec:experiments}) demonstrate this behavior: contamination biases discrepancy estimation, distorts regression-style parameter recovery, and degrades generative modeling when standard objectives are optimized on the contaminated pool.

We fill this gap by constructing a contamination-corrected distributional objective that combines proxy membership scores for all observations with a design-based residual correction from a small audited subset. In the kernel setting studied here, this construction recovers the oracle kernel sums in expectation under the audit design, does not require calibrated proxy scores, and supports learning objectives for inferential, predictive, and generative models. These results show that large contaminated datasets can still support valid inference and faithful generation when the learning objective targets the latent clean component rather than the observed mixture. 

\section{Data Contamination Problem and the Target Statistic}
\label{sec:contamination_problem}

This section formalizes maximum mean discrepancy (MMD) estimation under contamination, where one sample is clean, and the other is a mixture of the target distribution and an unknown contaminant distribution. Because the analyst does not know which pooled observations are genuine target samples, they instead observe noisy membership scores for all points and exact membership labels for a small uniformly random subset.

\paragraph{Problem Setup.} Suppose H\"uber’s contamination setup \citep{huber1992robust, chen2016general}. See Appendix for illustrative, real-world examples. Let $\mathcal{X}$ denote the observation space and let $k:\mathcal{X}\times\mathcal{X}\to\mathbb{R}$ be a bounded positive definite kernel. We observe two samples: a clean reference sample
\begin{equation}
X_1,\dots,X_n \overset{iid}{\sim} p,    
\end{equation}
and a pooled sample
\begin{equation}
W_1,\dots,W_N \overset{iid}{\sim} r, 
\qquad 
r=(1-\epsilon)q+\epsilon z,\quad \epsilon\in[0,1),
\end{equation}
where $q$ is the target distribution of interest and $z$ is an unknown nuisance distribution. The pooled sample mixes valid target points with contaminants. 
To formalize this, each pooled point $W_j$ has an unobserved membership indicator $I_j\in\{0,1\}$, where $I_j=1$ means that $W_j$ is genuinely drawn from target $q$, and $I_j=0$ means that it comes from distribution $z$. We do not observe $I_j$ directly for most points. Instead, we observe a proxy score $\widetilde I_j\in[0,1]$ (e.g., confidence score of a classifier), intended to approximate $I_j$; the special case $\widetilde I_j\in\{0,1\}$ corresponds to hard classification.
We make no calibration assumption on $\widetilde I_j$. In particular, we do not assume $\widetilde I(w)=\mathbb{P}(I=1\mid W=w)$. Such posterior calibration is difficult to guarantee in practice \citep{hebert2018multicalibration, vashistha2025mathcali, collina2026optimal}. The true conditional law of $I$ given $W$ is unknown, proxy scores are typically produced by learned models fitted to finite, possibly shifted data, and empirical diagnostics can only assess calibration approximately on limited labeled samples. In high-dimensional settings, verifying exact conditional calibration is generally infeasible. Our setup, therefore, treats $\widetilde I_j$ only as an imperfect proxy, not as a trusted posterior probability.

To correct this proxy, we assume access to a small audited subset $L\subset\{1,\dots,N\}$, with $|L|=m\ll N$, for which the true labels $I_j$ are observed without error. We model $L$ as a simple random sample without replacement from the pooled indices, so that $\Pr(j\in L)=\frac{m}{N}$ and $\Pr(j,\ell\in L)=\frac{m(m-1)}{N(N-1)},\quad j\neq \ell$. Throughout this section, expectations are taken with respect to the random draw of $L$, conditional on the realized pooled sample and proxy scores. On labeled points, we define the residual $\Delta_j := I_j-\widetilde I_j$. The quantity $\Delta_j$ measures the local error of the proxy score on audited examples and will be used to debias estimators constructed from $\widetilde I_j$.

In this work, we treat the noisy proxy labels or scores $\tilde I$, the audited labels $I$, and the analysis sample as fixed outputs of an upstream data collection and processing pipeline. The downstream analyst may lack the access, authority, labels, computational resources, or time required to retrain or redesign that pipeline. Our goal is therefore to enable statistically valid downstream learning and inference from these fixed outputs when only a limited audited subset is available. Appendix~\ref{app:additional-results} discusses this setup further.

\paragraph{Target Quantity and Oracle Estimator.} Our goal is to estimate the squared MMD between the clean reference distribution $p$ and the target distribution $q$,
\begin{equation}
\label{eq:mmd_target}
\mathrm{MMD}^2(p,q;k) = \mathbb{E}[k(X,X')] + \mathbb{E}[k(Y,Y')] - 2\,\mathbb{E}[k(X,Y)],
\end{equation}
where $X,X'\overset{iid}{\sim} p$ and $Y,Y'\overset{iid}{\sim} q$ are independent.

If the true membership indicators $I_1,\dots,I_N$ were observed for the pooled sample $W_1,\dots,W_N$, we could estimate the $q$-terms by restricting to the latent target points, equivalently by weighting each $W_j$ by $I_j$. Using ordered pairs with diagonals excluded, define
\begin{align}
\label{eq:oracle_sums}
A_{pp}^\star := \sum_{i\neq i'} k(X_i,X_{i'}),\quad A_{qq}^\star :&= \sum_{j\neq j'} I_j I_{j'}\,k(W_j,W_{j'}),\quad A_{pq}^\star := \sum_{i=1}^n \sum_{j=1}^N I_j\,k(X_i,W_j), \nonumber\\
B_{pp}^\star := n(n-1), \quad B_{qq}^\star :&= \sum_{j\neq j'} I_j I_{j'}, \quad B_{pq}^\star := n\sum_{j=1}^N I_j.
\end{align}
The corresponding oracle estimator is
\begin{equation}
\label{eq:mmd_oracle}
\widehat{\mathrm{MMD}}^2_\star = \frac{A_{pp}^\star}{B_{pp}^\star} + \frac{A_{qq}^\star}{B_{qq}^\star} - 2\,\frac{A_{pq}^\star}{B_{pq}^\star}.
\end{equation}
This is the oracle estimator we aim to estimate. Specifically, our aim is to recover it, in expectation over the audit design, using only the proxy scores $\widetilde I_j$ and the audited subset $L$.

\paragraph{Contamination-Corrected MMD Estimator.} We treat $\widetilde I_j$ as a noisy proxy for the unobserved indicator $I_j$ and use the audited subset to correct its error. Let
\[
\Delta_j := I_j-\widetilde I_j, \qquad j\in L,
\]
denote the observed residual on audited points. Automatically, we get algebraic decompositions
\begin{equation}
\label{eq:I_identity}
I_j = \widetilde I_j + \Delta_j,
\qquad
I_j I_{j'} = \widetilde I_j\widetilde I_{j'} + \Delta_j\widetilde I_{j'} + \widetilde I_j\Delta_{j'} + \Delta_j\Delta_{j'},
\quad j\neq j'.
\end{equation}
We estimate the residual terms by Horvitz--Thompson reweighting under the simple-random-sampling design for $L$ \citep{horvitz1952generalization}. For the $qq$ term, define
\begin{align}
\label{eq:Aqq_hat}
\widehat A_{qq}
:=
&\sum_{j\neq j'} \widetilde I_j \widetilde I_{j'}\,k(W_j,W_{j'})
+
\frac{N}{m}\sum_{j\in L}\sum_{j'\neq j} \Delta_j \widetilde I_{j'}\,k(W_j,W_{j'})
+
\frac{N}{m}\sum_{j'\in L}\sum_{j\neq j'} \widetilde I_j \Delta_{j'}\,k(W_j,W_{j'}) \nonumber\\
&\quad+
\frac{N(N-1)}{m(m-1)}
\sum_{\substack{j,j'\in L\\ j\neq j'}}
\Delta_j \Delta_{j'}\,k(W_j,W_{j'}).
\end{align}
For the $pq$ term, define
\begin{align}
\label{eq:Apq_hat}
\widehat A_{pq} := \sum_{i=1}^n\sum_{j=1}^N \widetilde I_j\,k(X_i,W_j) + \frac{N}{m}\sum_{i=1}^n\sum_{j\in L} \Delta_j\,k(X_i,W_j).
\end{align}
Let $\widehat B_{qq}$ and $\widehat B_{pq}$ be defined by replacing $k(W_j,W_{j'})$ with $1$ throughout. See Equation~\eqref{eq:Bpq_hat} in Supplementary Materials. The $pp$ component is fully observed and requires no correction. We then define the contamination-corrected estimator by plugging these corrected sums into the oracle ratio form: \vspace{-2mm}
\begin{tcolorbox}[
colframe=teal!90,
colback=teal!5]
\vspace{-2mm}
\begin{equation}
\label{eq:mmd_cc}
\widehat{\mathrm{MMD}}^2_{\mathrm{CC}} = \frac{A_{pp}^\star}{B_{pp}^\star} + \frac{\widehat A_{qq}}{\widehat B_{qq}} - 2\,\frac{\widehat A_{pq}}{\widehat B_{pq}}.    
\end{equation}
\vspace{-4mm}
\end{tcolorbox}\vspace{-1mm}
By construction, the correction leaves the directly observed $pp$ term unchanged and adjusts the $qq$ and $pq$ terms using the audited residuals. See Algorithm~\ref{alg:pp_mmd_contam} in Supplementary Materials.

\paragraph{Design-based guarantees.} We show that, under the simple random labeling design, the corrected \emph{unnormalized} kernel sums are unbiased for their oracle counterparts. This design-based recovery is analogous to the unbiased recovery of pair counts in prediction-powered inference \citep{angelopoulos2023prediction} and related Horvitz--Thompson corrections for spatial and point-process U-statistics \citep{farahi2026two}.

\begin{theorem}
\label{thm:unbiased_sums}
Let $m\!\ge\!2$ and $N$ be finite. Under Assumption~\ref{ass:srs}, conditioned on the realized reference sample $\{X_i\}_{i=1}^n$, we have $\mathbb{E}_L[\widehat A_{qq}]\!=\!A_{qq}^\star$,
$\mathbb{E}_L[\widehat B_{qq}]\!=\!B_{qq}^\star$,
$\mathbb{E}_L[\widehat A_{pq}]\!=\!A_{pq}^\star$, and
$\mathbb{E}_L[\widehat B_{pq}]\!=\!B_{pq}^\star$.
\end{theorem}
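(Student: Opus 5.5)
The plan is to treat everything except the audit set $L$ as fixed. That means conditioning on $\{X_i\}$, $\{W_j\}$, $\{I_j\}$ and $\{\widetilde I_j\}$. Then the kernel values $k(W_j,W_{j'})$, $k(X_i,W_j)$, the proxies $\widetilde I_j$ and the residuals $\Delta_j=I_j-\widetilde I_j$ are all deterministic constants for every $j\in\{1,\dots,N\}$. The only randomness is in the inclusion indicators $\delta_j:=\mathbf 1\{j\in L\}$. We can define $\Delta_j$ for all $j$ even though it is observed only on $L$, because it only ever enters multiplied by $\delta_j$.

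Each correction term is then a linear combination of $\delta_j$ or $\delta_j\delta_{j'}$. Writing, for instance, $\sum_{j\in L}\sum_{j'\neq j}\Delta_j\widetilde I_{j'}k(W_j,W_{j'})=\sum_{j\neq j'}\delta_j\Delta_j\widetilde I_{j'}k(W_j,W_{j'})$ lets linearity of expectation apply directly.

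Under Assumption~\ref{ass:srs} (simple random sampling without replacement), the design gives $\mathbb{E}_L[\delta_j]=m/N$ and, for $j\neq j'$, $\mathbb{E}_L[\delta_j\delta_{j'}]=m(m-1)/(N(N-1))$. The second quantity is well defined and nonzero because $m\ge 2$. Taking expectations term by term in \eqref{eq:Aqq_hat}:
\begin{itemize}
\item the first term is deterministic;
\item the two single-residual terms become $\sum_{j\neq j'}\Delta_j\widetilde I_{j'}k(W_j,W_{j'})$ and $\sum_{j\neq j'}\widetilde I_j\Delta_{j'}k(W_j,W_{j'})$, since the Horvitz--Thompson factor $N/m$ cancels the inclusion probability;
\item the double-residual term becomes $\sum_{j\neq j'}\Delta_j\Delta_{j'}k(W_j,W_{j'})$, since $N(N-1)/(m(m-1))$ cancels the pairwise inclusion probability.
\end{itemize}
Summing the four terms and applying the pairwise identity in \eqref{eq:I_identity} pointwise for each $j\neq j'$ gives $\sum_{j\neq j'}I_jI_{j'}k(W_j,W_{j'})=A^\star_{qq}$.

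The $pq$ sum is handled the same way. The correction term in \eqref{eq:Apq_hat} has expectation $\sum_i\sum_j\Delta_jk(X_i,W_j)$, and adding the proxy term and using $I_j=\widetilde I_j+\Delta_j$ yields $A^\star_{pq}$. The statements for $\widehat B_{qq}$ and $\widehat B_{pq}$ follow by the identical computation with the kernel replaced by $1$, as in \eqref{eq:Bpq_hat}.

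No step is a genuine obstacle; this is an exact finite-population identity. The only care needed is the bookkeeping of diagonal exclusion. The double-residual correction sums over distinct $j,j'\in L$, so only second-order inclusion probabilities for distinct indices arise, and these match the oracle's $j\neq j'$ sum. The single-residual sums exclude $j'=j$, consistent with the oracle's ordered off-diagonal pairs. Finally, conditioning on the reference sample (and on the pool) is what makes the kernel evaluations constants, so the conclusion holds conditionally and hence also unconditionally.
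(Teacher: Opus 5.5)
Your proposal is correct and follows essentially the same route as the paper. The paper packages the computation $\Pr(j\in L)=m/N$, $\Pr(j,j'\in L)=m(m-1)/\{N(N-1)\}$ into a Horvitz--Thompson lemma for generic single-unit and ordered-pair functions, applies it term by term to the four-piece expansion of $I_jI_{j'}$ and to $I_j=\widetilde I_j+\Delta_j$, and sets $k\equiv 1$ for the denominators; your inclusion-indicator calculation with $\delta_j$ is simply that lemma written inline.
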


Theorem~\ref{thm:unbiased_sums} states that the corrected \emph{unnormalized} kernel sums match the oracle kernel sums \emph{in expectation over the labeled subset}, with no assumptions on the quality of $\widetilde I$.
Ratios of unbiased estimators are generally biased; hence, the ratio estimator Equation~\eqref{eq:mmd_cc} is not necessarily unbiased. The estimator requires kernel sums over $X$--$X$, $W$--$W$, and $X$--$W$ interactions, with additional terms restricted to $L$. These sums can be computed exactly (via kernel matrices) or approximately (via mini-batching or random feature maps), without changing the algebra of the correction.

\section{Asymptotic and Finite Sample Guarantees}
\label{sec:theory_pp_mmd}

We now relate the CC-MMD estimator to both the oracle finite-sample estimator and the population target $\mathrm{MMD}^2(p,q;k)$. The analysis has two layers. First, we show that the audit correction is design-unbiased for the oracle kernel sums, conditional on the realized pooled sample. Second, we show that the oracle estimator itself is consistent for, and asymptotically normal around, the population quantity $\mathrm{MMD}^2(p,q;k)$. A key takeaway is that no structural assumptions on the contaminant distribution $z$ are needed when the kernel $k$ is bounded \footnote{In this case, boundedness alone controls the relevant kernel terms under the mixture $r=(1-\epsilon)q+\epsilon z$. When $k$ is unbounded, the required regularity enters only through moment conditions under the mixture distribution.}.

These results rely on two distinct notions of unbiasedness. The first is \emph{design-unbiasedness}: conditional on the realized finite population $\{(W_j,I_j,\widetilde I_j)\}_{j=1}^N$. This justifies CC-MMD as an unbiased recovery of the oracle estimator under the labeling design. The second is \emph{statistical unbiasedness}: under the mixture-with-indicator model, the oracle estimator targets the population discrepancy $\mathrm{MMD}^2(p,q;k)$. The first statement is finite-population and design-based; the second links the finite-sample oracle to the underlying population parameter. We formalize these two steps next.

Theorem~\ref{thm:unbiased_sums} shows that the corrected kernel sums are design-unbiased for their oracle counterparts. This is the key first step, but it does not by itself imply that the ratio-form CC-MMD estimator is unbiased. To control the estimator's bias, we use a deterministic ratio expansion. This isolates the leading contribution of numerator and denominator errors and shows that the discrepancy between CC-MMD and the oracle estimator is controlled.

\begin{lemma}[Ratio linearization]
\label{lem:ratio_linearization}
Let $A>0$, $B>0$, and let $\widehat A,\widehat B$ be random variables such that $\widehat B>0$ almost surely. Then
\begin{equation}
\label{eq:ratio_identity}
\frac{\widehat A}{\widehat B} - \frac{A}{B}
=
\frac{\widehat A-A}{B}
-
\frac{A}{B^2}(\widehat B-B)
+
R,
\end{equation}
where
\begin{equation}
\label{eq:ratio_remainder_exact}
R
= -\frac{(\widehat A-A)(\widehat B-B)}{B\widehat B} + \frac{A(\widehat B-B)^2}{B^2\widehat B},
\end{equation}
and therefore
\begin{equation}
\label{eq:ratio_remainder_bound}
|R|
\le \frac{|\widehat A-A|\,|\widehat B-B|}{B\widehat B} + \frac{A}{B^2\widehat B}\,|\widehat B-B|^2.
\end{equation}
\end{lemma}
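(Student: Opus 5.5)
The identity \eqref{eq:ratio_identity} is purely algebraic, so the plan is to verify it pointwise on each realization (using only $\widehat B>0$, $B>0$) and then read off the bound. Write $a:=\widehat A-A$ and $b:=\widehat B-B$, so $\widehat A = A+a$ and $\widehat B = B+b$. The strategy is to subtract the two linear terms from the left-hand side, put everything over the common denominator $B^2\widehat B$, and show that what remains is exactly the expression for $R$ in \eqref{eq:ratio_remainder_exact}.

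Concretely, I would first compute
\[
\frac{\widehat A}{\widehat B}-\frac{A}{B}=\frac{B\widehat A - A\widehat B}{B\widehat B}=\frac{B a - A b}{B\widehat B},
\]
using $B\widehat A - A\widehat B = B(A+a)-A(B+b)$. Next I would subtract $\frac{a}{B}-\frac{A b}{B^2}$, rewritten over $B^2\widehat B$ as $\frac{B\widehat B a - A\widehat B b}{B^2\widehat B}$. The difference has numerator
\[
B^2 a - AB b - B\widehat B a + A\widehat B b = B a(B-\widehat B) + A b(\widehat B - B) = -B a b + A b^2 .
\]
Dividing by $B^2\widehat B$ gives $-\frac{ab}{B\widehat B}+\frac{A b^2}{B^2\widehat B}$, which is \eqref{eq:ratio_remainder_exact}.

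For \eqref{eq:ratio_remainder_bound}, I would apply the triangle inequality to the two terms of $R$. Since $A>0$, $B>0$ and $\widehat B>0$ almost surely, the denominators and the prefactor $A/(B^2\widehat B)$ are positive, so the absolute values pass through to $|a|\,|b|$ and $|b|^2$. There is no real obstacle; the only point needing care is the sign bookkeeping in the numerator, together with the remark that $\widehat B>0$ a.s. is exactly what makes every expression well defined, with no requirement on $\widehat A$.
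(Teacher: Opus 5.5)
Your proof is correct and follows essentially the same route as the paper. Both arguments start from the common denominator $B\widehat B$ and verify the identity by direct algebra, then obtain the bound from the triangle inequality using positivity of $A$, $B$, $\widehat B$. The only cosmetic difference is that the paper splits $\frac{B(\widehat A-A)-A(\widehat B-B)}{B\widehat B}$ into two pieces and expands each by adding and subtracting the first-order term, whereas you subtract both linear terms at once over $B^2\widehat B$ and reduce the numerator to $-Bab+Ab^2$ in your notation.
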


We note that each nontrivial component of CC-MMD, $\frac{\widehat A_{qq}}{\widehat B_{qq}}$ and
$\frac{\widehat A_{pq}}{\widehat B_{pq}}$, is a ratio estimator. Lemma~\ref{lem:ratio_linearization} shows that once the corrected sums and denominators concentrate around their oracle counterparts, the induced ratio error is second-order up to the explicit remainder term. In particular, combined with Theorem~\ref{thm:unbiased_sums}, it reduces the analysis of CC-MMD to controlling the finite-population fluctuations of the corrected sums and guarantees that the oracle denominators remain nondegenerate.

\paragraph{Design-based Asymptotic Normality and CLT.} The following result complements Theorem~\ref{thm:unbiased_sums}. Design-unbiasedness shows that the correction recovers the oracle sums in expectation, while the central limit theorem below quantifies the scale of the remaining design fluctuation. With Lemma~\ref{lem:ratio_linearization}, it implies that the discrepancy between CC-MMD and the oracle ratio estimator is asymptotically Gaussian at the $\sqrt m$ scale, provided the denominators remain nondegenerate.

\begin{theorem}[CLT]
\label{thm:design_clt_full}
Under Assumptions~\ref{ass:kernel}--\ref{ass:pair_clt}, conditionally on the realized samples,
\begin{equation}
\sqrt m\,
\Big(
\frac{\widehat A_{pq}-A_{pq}^\star}{nN},\;
\frac{\widehat B_{pq}-B_{pq}^\star}{nN},\;
\frac{\widehat A_{qq}-A_{qq}^\star}{N^2},\;
\frac{\widehat B_{qq}-B_{qq}^\star}{N^2}
\Big)
\Rightarrow
\mathcal{N}(0,\Sigma_{\mathrm{design}})
\end{equation}
for some finite covariance matrix $\Sigma_{\mathrm{design}}$. If Assumption~\ref{ass:denom_nondeg} also holds, the difference between $\widehat{\mathrm{MMD}}^2_{\mathrm{CC}}$ and the oracle estimator is asymptotically Gaussian at rate $\sqrt m$.
\end{theorem}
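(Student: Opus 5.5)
The plan is to reduce all four corrected sums to one common form, apply a single finite-population central limit theorem for simple random sampling without replacement, and then push the result through Lemma~\ref{lem:ratio_linearization} by the delta method.

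\textbf{Step 1: centred errors as finite-population sums.} Fix the realized samples, so that only the audit set $L$ is random. Write $\pi=m/N$ and let $S_j=\mathbf 1\{j\in L\}$. The $pq$ errors are linear in the sampling indicators:
\[
\frac{\widehat A_{pq}-A_{pq}^\star}{nN}=\frac{1}{m}\sum_{j}(S_j-\pi)\,a_j,\qquad a_j:=\Delta_j\frac1n\sum_i k(X_i,W_j),
\]
and the same holds for $\widehat B_{pq}$ with $a_j=\Delta_j$. For the $qq$ sums I use Theorem~\ref{thm:unbiased_sums} to centre, and split each error into two pieces.
\begin{itemize}
\item The two single-$\Delta$ terms are already linear:
\[
\frac1m\sum_j (S_j-\pi)\,c_j,\qquad c_j:=\frac{2}{N}\,\Delta_j\sum_{j'\neq j}\widetilde I_{j'}k(W_j,W_{j'})
\]
(using symmetry of $k$).
\item The $\Delta\Delta$ term is a sampled U-statistic. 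I apply a Hoeffding/Hájek decomposition under SRSWOR. Its projection onto $\{S_j\}$ is again linear, namely $\frac1m\sum_j(S_j-\pi)\,d_j$ with $d_j:=\frac{2}{N}\Delta_j\sum_{j'\neq j}\Delta_{j'}k(W_j,W_{j'})$.
\item The degenerate remainder has design variance of order $1/m^2$. Hence after multiplying by $\sqrt m$ it is $o_P(1)$, by Chebyshev and the exact second-order inclusion probabilities.
\end{itemize}
The same decomposition applies to $\widehat B_{qq}$ with $k\equiv1$.

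\textbf{Step 2: joint CLT.} After Step 1, the scaled error vector equals
\[
\sqrt m\cdot\frac1m\sum_j (S_j-\pi)\,v_j+o_P(1),
\]
where each $v_j\in\mathbb R^4$ is bounded by $4\sup|k|$, since $|\Delta_j|\le1$ and $\widetilde I_j\in[0,1]$. I then use Cramér--Wold. For each direction $t$, I invoke Hájek's CLT for SRSWOR applied to $t^\top v_j$. Its Lindeberg-type condition holds automatically for bounded populations as long as $m\to\infty$, $N-m\to\infty$, and the finite-population variance
\[
\Sigma_N=(1-\pi)\frac{1}{N}\sum_j (v_j-\bar v)(v_j-\bar v)^\top
\]
converges to a finite limit $\Sigma_{\mathrm{design}}$. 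I take this convergence, along with any sampling-fraction condition, to be what Assumptions~\ref{ass:kernel}--\ref{ass:pair_clt} provide; Assumption~\ref{ass:pair_clt} presumably also covers the pair-term negligibility.

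\textbf{Step 3: the ratio.} Apply Lemma~\ref{lem:ratio_linearization} to the pairs $(\widehat A_{qq},\widehat B_{qq})$ and $(\widehat A_{pq},\widehat B_{pq})$. Assumption~\ref{ass:denom_nondeg} keeps $B_{qq}^\star/N^2$ and $B_{pq}^\star/(nN)$ bounded away from zero, and Step 2 gives $\widehat B/B\to1$ in probability. Together these show that the remainder $R$ is $O_P(1/m)$, hence $o_P(m^{-1/2})$. Therefore
\[
\sqrt m\big(\widehat{\mathrm{MMD}}^2_{\mathrm{CC}}-\widehat{\mathrm{MMD}}^2_\star\big)
\]
is a fixed linear functional $g^\top$ of the normalized vector, with gradient entries such as $\tfrac{N^2}{B_{qq}^\star}$ and $-\tfrac{A_{qq}^\star N^2}{(B_{qq}^\star)^2}$. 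Its limit is then $\mathcal N(0,g^\top\Sigma_{\mathrm{design}}g)$.

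\textbf{Main obstacle.} I expect the hardest step to be bounding the degenerate part of the $\Delta\Delta$ pair term uniformly under sampling without replacement, where the inclusion indicators are dependent. My first attempt is a direct second-moment calculation using the exact second- to fourth-order SRSWOR inclusion probabilities. Should that become unwieldy, I would fall back to a coupling with Bernoulli (Poisson) sampling.
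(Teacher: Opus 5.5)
\textbf{Verdict: correct, by a genuinely different route.} Your proof is correct and differs from the paper's where it matters. Both arguments treat the $pq$ errors and the linear part of the $qq$ errors with the standard SRSWOR finite-population CLT. For the ratio, the paper (in Theorem~\ref{thm:cc_to_oracle}) applies the delta method to $\phi(a,b,c,d)=a/b-2c/d$, which is equivalent to your Step~3.

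\textbf{Where you depart from the paper.} For the sampled-pair correction $V_{A,N}$, and for joint convergence, the paper proves nothing. It invokes Assumption~\ref{ass:pair_clt}, a joint Gaussian limit for the pair terms together with the linear terms, and then applies Cram\'er--Wold. You prove what the paper assumes. With $\delta_j=S_j-\pi$ and $\pi_2=m(m-1)/\{N(N-1)\}$, the pair error is
\[
\pi_2^{-1}\sum_{j\neq j'}\bigl\{\pi(\delta_j+\delta_{j'})+\delta_j\delta_{j'}-\mathbb E\,\delta_j\delta_{j'}\bigr\}b_{jj'}.
\]
The first part is your projection. Its exact prefactor is $(N-1)/\{N(m-1)\}$ rather than $1/m$, which is immaterial.

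\textbf{The step you flagged as the obstacle goes through.} Your direct second-moment calculation works without double-centering $b$. For four distinct units, $\mathrm{Cov}(\delta_j\delta_{j'},\delta_l\delta_{l'})=2(\pi_2-\pi^2)^2+O(N^{-3})=O(N^{-2})$. With one shared unit it is $O(N^{-1})$. Hence the remainder sum has variance $O(m^2)$, and after dividing by $N^2\pi_2$ the remainder has variance $O(m^{-2})$. So you do not need Assumption~\ref{ass:pair_clt} for this step at all.

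\textbf{What each route buys.} Yours shows that the $qq$ fluctuation is asymptotically linear, with explicit influence values $c_j+d_j\approx \frac{2}{N}\Delta_j\sum_{j'\neq j}I_{j'}k(W_j,W_{j'})$. This yields an explicit $\Sigma_{\mathrm{design}}$, which is useful for the audit-size remark. The remainder bound needs only $m\to\infty$, not $m/N\to\rho>0$. It also replaces a high-level assumption that is essentially the conclusion by a primitive one. The price is that you need convergence of the full $4\times4$ matrix $\Sigma_N$, including cross-covariances and the $d_j$ contributions. Assumptions~\ref{ass:linear_variance}--\ref{ass:linear_variance_qq} give only the diagonal variances of the linear parts. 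You should state this convergence as an explicit assumption, or derive it from Assumption~\ref{ass:pair_clt} via uniform integrability of the bounded, scaled errors.

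\textbf{Two small loose ends.}
\begin{enumerate}
\item Directions with $t^\top\Sigma t=0$, and the case $\rho=1$ where $N-m$ need not diverge, should be handled by Chebyshev rather than by Hájek's CLT, as the paper does.
\item Your functional $g$ depends on $N$, so you also need $g_N\to g$. This follows from $N_q^\star/N\to 1-\epsilon$ and the law of large numbers for the oracle kernel averages under Assumption~\ref{ass:mixture_model}.
\end{enumerate}
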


The theorem shows that, with a proportional audit budget $m/N\to\rho>0$, the audit correction does not introduce a slower asymptotic scale than the underlying empirical MMD fluctuations. A complete discussion, including separate CLTs for the linear $pq$ terms and quadratic $qq$ terms, the decomposition of the sampled-pair correction, and full proofs, is deferred to Appendix~\ref{app:design_clt_proofs}. We now transfer the design-based CLT for the corrected kernel sums into a corresponding result for the ratio-form CC-MMD estimator in Equation~\eqref{eq:mmd_cc}. Specifically, the result below shows that the same $\sqrt m$ behavior carries over to the full ratio estimator, provided the denominators remain nondegenerate.

Recall $B_{qq}^\star\!\!=\!\!\sum_{j\neq j'} I_jI_{j'}$ and $B_{pq}^\star\!=\!n\sum_{j=1}^N I_j$. Under the mixture-with-indicator model, the number of genuine target points in the pool is $N_q^\star\!:=\!\sum_{j=1}^N I_j$, so that $B_{qq}^\star\!=\!N_q^\star(N_q^\star-1)$ and $B_{pq}^\star\!=\!nN_q^\star$. Thus, if the pool contains a nonvanishing fraction of genuine $q$-points, both oracle denominators are of the natural orders $N^2$ and $nN$, respectively. Assumption~\ref{ass:denom_nondeg} makes this requirement explicit and also ensures that the corrected denominators remain positive with high probability.

\begin{theorem} \label{thm:cc_to_oracle}
Under Assumptions~\ref{ass:kernel}--\ref{ass:denom_nondeg}, condition on the realized samples $\{X_i\}_{i=1}^n$ and 
$\{(W_j,I_j,\widetilde I_j)\}_{j=1}^N$. Then, as $m\to\infty$,
\begin{equation}
\label{eq:cc_oracle_consistency}
\widehat{\mathrm{MMD}}^2_{\mathrm{CC}}-\widehat{\mathrm{MMD}}^2_\star
\;\xrightarrow{\mathbb P_L}\;0,
\end{equation}
and moreover
\begin{equation}
\label{eq:cc_oracle_clt}
\sqrt m\Bigl(\widehat{\mathrm{MMD}}^2_{\mathrm{CC}}-\widehat{\mathrm{MMD}}^2_\star\Bigr)
\;\Rightarrow\;
\mathcal N(0,V_{\mathrm{design}})
\end{equation}
for some finite variance $V_{\mathrm{design}}\ge 0$, determined by the joint design-based covariance of
\[
(\widehat A_{qq}-A_{qq}^\star,\,
\widehat B_{qq}-B_{qq}^\star,\,
\widehat A_{pq}-A_{pq}^\star,\,
\widehat B_{pq}-B_{pq}^\star).
\]
\end{theorem}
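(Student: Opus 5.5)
The argument is a delta-method transfer of Theorem~\ref{thm:design_clt_full} to the ratio form, using Lemma~\ref{lem:ratio_linearization} to make the linearization exact. Since the $pp$ term is common to $\widehat{\mathrm{MMD}}^2_{\mathrm{CC}}$ and $\widehat{\mathrm{MMD}}^2_\star$, it cancels, and
\[
\widehat{\mathrm{MMD}}^2_{\mathrm{CC}}-\widehat{\mathrm{MMD}}^2_\star
=\Bigl(\tfrac{\widehat A_{qq}}{\widehat B_{qq}}-\tfrac{A^\star_{qq}}{B^\star_{qq}}\Bigr)
-2\Bigl(\tfrac{\widehat A_{pq}}{\widehat B_{pq}}-\tfrac{A^\star_{pq}}{B^\star_{pq}}\Bigr).
\]
Each ratio is invariant under rescaling numerator and denominator by the same constant. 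I will therefore work with the normalized quantities $a_{qq}=A_{qq}^\star/N^2$, $b_{qq}=B_{qq}^\star/N^2$, $a_{pq}=A^\star_{pq}/(nN)$, $b_{pq}=B^\star_{pq}/(nN)$ and their hatted versions; these are exactly the scalings in Theorem~\ref{thm:design_clt_full}. Assumption~\ref{ass:kernel} (bounded kernel) gives $|a_{qq}|\le \|k\|_\infty b_{qq}$ and similarly for $pq$, so $a/b$ is bounded. Assumption~\ref{ass:denom_nondeg} gives $b_{qq}\ge c>0$ and $b_{pq}\ge c>0$ uniformly along the sequence.

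\textbf{Consistency.} Theorem~\ref{thm:design_clt_full} gives $\hat a-a=O_{\mathbb P_L}(m^{-1/2})$ and $\hat b-b=O_{\mathbb P_L}(m^{-1/2})$ for each of the four normalized quantities. In particular $\hat b\ge c/2$ with probability tending to one, so on that event the ratio is well defined. The hypothesis of Lemma~\ref{lem:ratio_linearization} requires $\hat B>0$ almost surely; I handle this by restricting to the event $\{\hat b_{qq}\ge c/2,\ \hat b_{pq}\ge c/2\}$, whose complement has vanishing probability, and this is enough for convergence in probability and in distribution. Applying the lemma to each ratio on this event, the linear terms are $O_{\mathbb P_L}(m^{-1/2})$. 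The remainder obeys the bound
\[
|R|\le \frac{|\hat a-a||\hat b-b|}{b\,\hat b}+\frac{|a|}{b^2\hat b}|\hat b-b|^2 \le \frac{2}{c^2}\Bigl(1+\|k\|_\infty\Bigr)O_{\mathbb P_L}(m^{-1}),
\]
so $R=O_{\mathbb P_L}(m^{-1})$. Adding the pieces gives~\eqref{eq:cc_oracle_consistency}.

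\textbf{CLT.} Multiplying by $\sqrt m$, the remainders contribute $\sqrt m\,O_{\mathbb P_L}(m^{-1})=o_{\mathbb P_L}(1)$. This leaves
\[
\sqrt m\,(\widehat{\mathrm{MMD}}^2_{\mathrm{CC}}-\widehat{\mathrm{MMD}}^2_\star)=g^\top \sqrt m\,(\hat\theta-\theta)+o_{\mathbb P_L}(1),
\]
with $\theta=(a_{pq},b_{pq},a_{qq},b_{qq})$ and gradient
\[
g=\bigl(-2/b_{pq},\;2a_{pq}/b_{pq}^2,\;1/b_{qq},\;-a_{qq}/b_{qq}^2\bigr).
\]
By Theorem~\ref{thm:design_clt_full} and Slutsky's lemma, this converges to $\mathcal N(0,g^\top\Sigma_{\mathrm{design}}g)$, so $V_{\mathrm{design}}=g^\top\Sigma_{\mathrm{design}}g\ge0$. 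The variance is finite because $g$ is bounded by the denominator lower bound and kernel boundedness. By construction it depends only on the joint design covariance of the four corrected-sum errors, as the statement asserts.

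\textbf{Main obstacle.} The delicate point is that $g$ depends on the realized finite population and hence on $m,N$, so the continuous-mapping step has to handle a triangular array. I will use the convergence of $g$ guaranteed by the assumptions, i.e.\ $g\to g_\infty$; if only boundedness of $g$ is available, I will pass to subsequences and identify the limit through the assumed convergence of $\Sigma_{\mathrm{design}}$. A second check is that Assumption~\ref{ass:denom_nondeg} really delivers $\hat b$ bounded away from zero with probability $\to1$. This should follow directly from the $O_{\mathbb P}(m^{-1/2})$ fluctuations combined with the lower bound on $b$.
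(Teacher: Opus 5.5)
Your proposal is correct and follows essentially the same route as the paper. Both cancel the $pp$ term, normalize by $N^2$ and $nN$, and combine Lemma~\ref{lem:ratio_linearization} with Theorem~\ref{thm:design_clt_full} to get $O_{\mathbb P_L}(m^{-1/2})$ linear terms and $O_{\mathbb P_L}(m^{-1})$ remainders. Both then transfer the CLT through the gradient of $(a,b,c,d)\mapsto a/b-2c/d$. The paper calls that last step the multivariate delta method; your exact-expansion-plus-Slutsky version is the same argument made explicit, and it handles the population-dependent centering a bit more carefully.

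The convergence $g\to g_\infty$ you need comes, as in the paper, from the strong law of large numbers for the normalized oracle sums under Assumption~\ref{ass:mixture_model}. Your subsequence fallback is therefore unnecessary, and convergence of $\Sigma_{\mathrm{design}}$ alone would not identify a unique limit for $g$ anyway.
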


\paragraph{Population Consistency.} Theorem~\ref{thm:cc_to_oracle} is purely design-based. Conditional on the realized finite population, it shows that CC-MMD asymptotically reproduces the oracle estimator that would have been computed if the true membership labels were fully observed. The remaining discrepancy is due solely to finite audit noise, which vanishes at the standard $m^{-1/2}$ rate. To conclude consistency for the population target $\mathrm{MMD}^2(p,q;k)$, it remains to relate the oracle estimator to the population expectations in Equation~\eqref{eq:mmd_target}. 

\begin{theorem}[Consistency]
\label{thm:consistency_population}
Under Assumptions~\ref{ass:kernel}--\ref{ass:denom_nondeg},
\begin{equation}
\label{eq:cc_consistency_population}
\widehat{\mathrm{MMD}}^2_{\mathrm{CC}}
\;\xrightarrow{\mathbb P}\;
\mathrm{MMD}^2(p,q;k),
\end{equation}
where the probability is over all sources of randomness in the sample generation and audit design.
\end{theorem}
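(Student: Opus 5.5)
The plan is to split the error into a design part and a statistical part:
\[
\widehat{\mathrm{MMD}}^2_{\mathrm{CC}}-\mathrm{MMD}^2(p,q;k)
=\bigl(\widehat{\mathrm{MMD}}^2_{\mathrm{CC}}-\widehat{\mathrm{MMD}}^2_\star\bigr)
+\bigl(\widehat{\mathrm{MMD}}^2_\star-\mathrm{MMD}^2(p,q;k)\bigr),
\]
and to show that each bracket tends to zero in probability under the joint law of the samples and the audit design. The first bracket is handled by Theorem~\ref{thm:cc_to_oracle}, which is conditional on the realized samples. The second bracket involves only the oracle estimator. It is handled by classical U-statistic and law-of-large-numbers arguments under the mixture-with-indicator model.

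\textbf{Design term.} Theorem~\ref{thm:cc_to_oracle} gives, for almost every realization of $\{X_i\}$ and $\{(W_j,I_j,\widetilde I_j)\}$ in the event where Assumption~\ref{ass:denom_nondeg} holds, $\mathbb P_L(|\widehat{\mathrm{MMD}}^2_{\mathrm{CC}}-\widehat{\mathrm{MMD}}^2_\star|>\eta\mid \text{samples})\to 0$. I will integrate this conditional probability over the sample law. It is bounded by $1$, so dominated convergence yields unconditional convergence in probability. If the nondegeneracy event only holds with probability tending to one, I will add its complement probability to the bound.

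The main obstacle is that Theorem~\ref{thm:cc_to_oracle} is stated pointwise, for a fixed finite population with $m\to\infty$. Here, however, $N,n,m$ grow jointly (e.g.\ $m/N\to\rho>0$), so the conditioning population changes with the index. I will make the argument uniform rather than relying on pointwise convergence. Because $k$ is bounded and $\widetilde I_j\in[0,1]$, the residuals satisfy $|\Delta_j|\le 1$. A direct finite-population variance bound for simple random sampling without replacement then gives
\[
\operatorname{Var}_L\!\Bigl(\tfrac{\widehat A_{pq}-A^\star_{pq}}{nN}\Bigr)\le \tfrac{C}{m},
\qquad
\operatorname{Var}_L\!\Bigl(\tfrac{\widehat A_{qq}-A^\star_{qq}}{N^2}\Bigr)\le \tfrac{C}{m},
\]
and similarly for the $B$ terms, with a constant $C$ depending only on $\sup|k|$. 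By Chebyshev, the normalized sum errors are $O_P(m^{-1/2})$ uniformly over populations. Lemma~\ref{lem:ratio_linearization} will convert these into ratio errors. The denominators $B^\star_{qq}/N^2$ and $B^\star_{pq}/(nN)$ are bounded below by Assumption~\ref{ass:denom_nondeg}, so the corrected denominators $\widehat B$ stay bounded away from zero with probability tending to one. The remainder $R$ is then $O_P(1/m)$, and the linear terms are $O_P(m^{-1/2})$.

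\textbf{Oracle term.} First, $A^\star_{pp}/B^\star_{pp}$ is a U-statistic with bounded kernel, so it converges to $\mathbb E k(X,X')$ by the strong law for U-statistics. Next, given the indicators, the points with $I_j=1$ are i.i.d.\ from $q$ and independent of the $X_i$. Moreover, $N_q^\star/N\to 1-\epsilon>0$ almost surely. Hence $A^\star_{qq}/B^\star_{qq}$ is a U-statistic over $N_q^\star$ i.i.d.\ $q$-draws, which converges to $\mathbb E k(Y,Y')$. Likewise, $A^\star_{pq}/B^\star_{pq}$ is a two-sample mean converging to $\mathbb E k(X,Y)$. No assumption on $z$ is needed, since contaminant points receive weight zero. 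Combining the three limits with the continuous mapping theorem gives convergence of $\widehat{\mathrm{MMD}}^2_\star$ to \eqref{eq:mmd_target}. A final application of Slutsky's lemma to the two brackets completes the argument.
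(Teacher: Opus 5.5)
Your argument is correct and follows the paper's skeleton. You use the same split into a design term and an oracle term, and essentially the same oracle argument: the SLLN for $N_q^\star/N$, retained points i.i.d.\ from $q$, U-statistic and two-sample laws of large numbers, then Slutsky.

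The difference is in the design term. The paper quotes Theorem~\ref{thm:cc_to_oracle} and passes from $\mathbb P_L$-convergence to the joint law in one line, ``since $m\to\infty$''. That theorem rests on the CLT of Theorem~\ref{thm:design_clt_full}, so the paper's consistency proof inherits the high-level Assumptions~\ref{ass:linear_variance}--\ref{ass:pair_clt}. It also does not address that the conditioning population itself changes with $N$.

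Your route avoids both issues. You use uniform $C/m$ bounds on the design variances of the normalized errors, with a constant depending only on $K$ because $|\Delta_j|\le 1$. Chebyshev, the ratio identity and averaging over the sample law then close the gap. This shows that consistency needs only bounded $k$, the SRS design, the mixture model and $m\to\infty$. What the paper's route buys instead is the $\sqrt m$ Gaussian limit, which consistency does not require.

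Two points to tighten:
\begin{itemize}
\item The lower bounds on $B^\star_{qq}/N^2$ and $B^\star_{pq}/(nN)$ come from $N_q^\star/N\to 1-\epsilon>0$ under Assumption~\ref{ass:mixture_model}, not from Assumption~\ref{ass:denom_nondeg}, which only asserts positivity.
\item For the sampled-pair parts of $\widehat A_{qq}$ and $\widehat B_{qq}$, the $C/m$ bound is not the textbook $(1-m/N)S^2/m$ formula. It is the variance of a degree-two U-statistic under sampling without replacement, which is $O(K^2/m)$ by its H\'ajek projection decomposition; state this explicitly.
\end{itemize}
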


Theorem~\ref{thm:consistency_population} combines the two parts of the analysis. The design-based results show that auditing corrects the proxy labels well enough to recover the oracle estimator, while the population argument shows that the oracle estimator itself converges to the desired discrepancy between $p$ and $q$. In this sense, CC-MMD is consistent even though the target sample is contaminated and the proxy labels may be arbitrarily miscalibrated.

\begin{remark}[Asymptotic Normality.]
The assumptions developed so far are sufficient for consistency, but not by themselves for a full population-level CLT around $\mathrm{MMD}^2(p,q;k)$. To strengthen Theorem~\ref{thm:consistency_population} into asymptotic normality, one would additionally need a limit theorem for the oracle empirical MMD itself (see Corollary 16 in \citep{gretton2012kernel}), and then combine that oracle fluctuation with the design-based fluctuation from Theorem~\ref{thm:cc_to_oracle}.
\end{remark}

\begin{remark}
If $m/N\to 0$ but $m\to\infty$, the correction remains consistent for the oracle quantities, but the design variance is larger because it scales with the smaller audit size. In practice, this means slower convergence and greater sensitivity to audit noise.
\end{remark}

\begin{remark}[Choosing the audit size]
The audit size required in practice is governed primarily by the variance of the audit correction, $V_{\mathrm{design}}$, in Equation~\eqref{eq:cc_oracle_clt}. It is generally not controlled by the number of contamination mechanisms present in the pool. In this respect, the audit plays a role similar to a validation sample in prediction-powered inference \citep{angelopoulos2023prediction}: once it estimates the discrepancy between proxy scores and true labels with sufficient precision, additional audits yield diminishing returns, as observed in our audit-size ablation, Figure~\ref{fig:ablation-audit-size}.

When the contamination structure is unknown, one can begin with a pilot audit, estimate the correction variance, $V_{\mathrm{design}}$, and then choose $m$ so that $V_{\mathrm{design}}/m$ falls below a target threshold, such as the variance of the audit–only MMD estimator. When contamination structure can be modeled or simulated, the correction and MMD variances can instead be estimated through simulation to guide the audit budget.
\end{remark}

\vspace{-2mm}
\section{Experiments}
\label{sec:experiments}

The details of all experiments and additional results are in the Appendix.
The code to reproduce our experiments is publicly available at \url{https://github.com/NevenaGligic/CC-MMD}.  

{\bf Experiment 1 -- MMD Estimation.}  \vspace{-1.8mm}

\begin{wrapfigure}{r}{0.50\textwidth}
\vspace{-12pt}
\centering
\includegraphics[width=\linewidth]{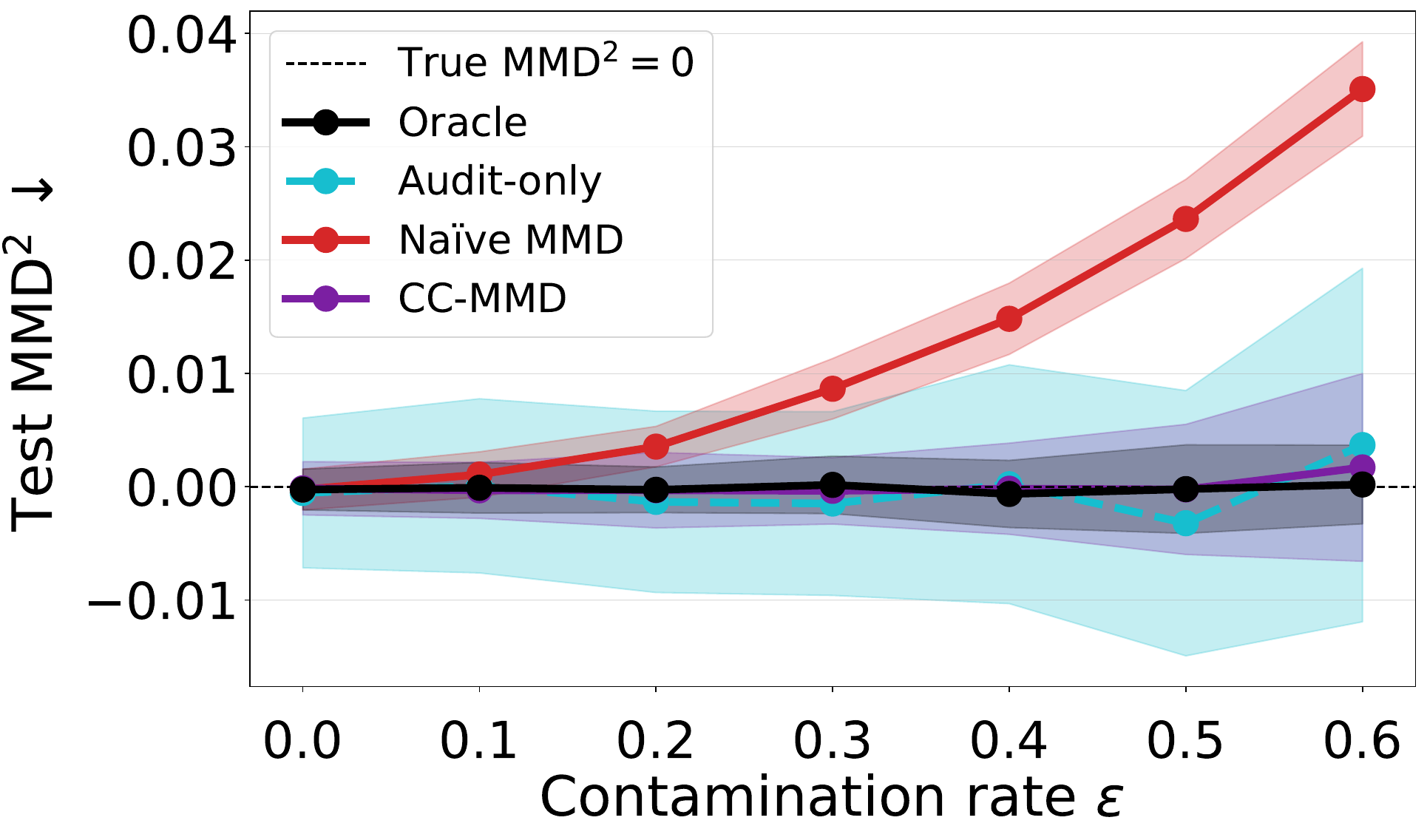}
\vspace{-15pt}
\caption{\textbf{MMD estimation under increasing contamination.} Na\"ive MMD becomes increasingly biased as $\epsilon$ grows, whereas CC--MMD tracks the oracle and is less variable than the audit-only estimator.}
\label{fig:sim_experiments}
\vspace{-10pt}
\end{wrapfigure}
Figure~\ref{fig:sim_experiments} evaluates the proposed estimator across increasing contamination levels in a setting where the reference and target distributions coincide, so the population target is $\mathrm{MMD}^2(p,q)=0$. Thus, departures from zero reflect contamination-induced or finite-sample bias. The oracle estimator, which uses the true latent membership indicators, remains centered near zero and serves as the benchmark. In contrast, the na\"ive estimator increasingly overestimates the discrepancy as $\epsilon$ grows, reflecting bias from treating the contaminated pool as the target sample. The audit-only estimator is unbiased but has a higher variance due to a smaller sample size. CC--MMD closely follows the oracle throughout the contamination sweep (Theorems~\ref{thm:cc_to_oracle} and \ref{thm:consistency_population}), indicating that the audit-based residual correction removes most contamination-induced bias while retaining the efficiency benefits of using the full pool. 

\newpage

{\bf Experiment 2 -- Regression.} \vspace{-1.8mm}

\begin{wraptable}{r}{0.58\textwidth}
\vspace{-11pt}
\centering
\footnotesize
\setlength{\tabcolsep}{3pt}
\begin{tabular}{lccc}
  \toprule
  Method & Intercept & Slope & $\ell_2$ error \\
  \midrule
  True        & 1.000 & 2.000 & -- \\
  MSE         & $0.621 \pm 0.854$ & $0.923 \pm 0.634$ & $1.449 \pm 0.573$ \\
  Huber       & $0.859 \pm 0.670$ & $1.319 \pm 0.654$ & $0.888 \pm 0.756$ \\
  Tukey       & $0.924 \pm 0.622$ & $1.452 \pm 0.725$ & $0.733 \pm 0.825$ \\
  Na\"ive MMD & $0.966 \pm 0.231$ & $2.146 \pm 0.136$ & $0.259 \pm 0.164$ \\
  Audit-only  & $0.988 \pm 0.164$ & $1.994 \pm 0.106$ & $0.169 \pm 0.098$ \\
  CC--MMD     & $\mathbf{1.000 \pm 0.105}$ & $\mathbf{2.001 \pm 0.072}$ & $\mathbf{0.110 \pm 0.063}$ \\
  \bottomrule
  \end{tabular}
\vspace{-5pt}
\caption{\textbf{Regression task benchmarking.} Mean $\pm$ s.d.\ over 200 independent realizations; best in bold. CC--MMD is centered on the true parameters and has the smallest spread of any method on both, attaining $35\%$ lower $\ell_2$ error than audit-only, the strongest baseline.}
\label{tab:regression}
\vspace{-10pt}
\end{wraptable}
Table~\ref{tab:regression} evaluates parameter recovery in a contaminated linear regression problem over 200 independent realizations. Clustered outliers induce bias in least-squares estimation, pulling the fitted line toward the contaminant clusters and away from the ground truth, so MSE substantially underestimates the slope. Robust regression does not meaningfully mitigate this effect: Huber regression and Tukey's biweight reduce this bias but do not remove it, and remain highly variable across realizations. The uncorrected MMD objective is less sensitive to outliers, consistent with known robustness properties of MMD under Huber contamination \citep{cherief2022finite}, but it still targets the contaminated mixture and overestimates the slope. An audit-only baseline avoids contamination entirely but discards most of the sample, at a cost in precision.

In contrast, CC--MMD uses the proxy scores and audited labels to correct the distribution-matching objective, and is both the least biased and the least variable method: its mean intercept and slope are centered on the true values, with the smallest standard deviation of any method on both parameters. Its mean $\ell_2$ error is about 35\% lower than that of audit-only, the strongest baseline, and six to thirteen times lower than MSE, Huber and Tukey. The margin over audit-only is what the corrected objective recovers from the unaudited observations that audit-only discards. This holds even though the proxy is weak (AUROC $0.66 \pm 0.11$), reflecting that the audit correction requires no assumption on proxy quality (Theorem~\ref{thm:unbiased_sums}). These results illustrate the distinction between stability and correction. Bounded-kernel MMD can limit the influence of contamination \citep{cherief2022finite, alquier2024universal, briol2019statistical}, explaining the relative performance of na\"ive MMD, but with fixed non-negligible $\epsilon$, the contamination-induced estimand shift persists asymptotically and can be relatively large. CC--MMD removes most of this bias while retaining the distribution-matching advantages of MMD. Individual realizations and the full distribution of estimates are shown in Appendix~\ref{sec:appendix_linear_regression}, Figures~\ref{fig:regression_multi} and~\ref{fig:regression_200}.

\begin{figure}[th]
  \centering
    \includegraphics[width=0.48\linewidth]{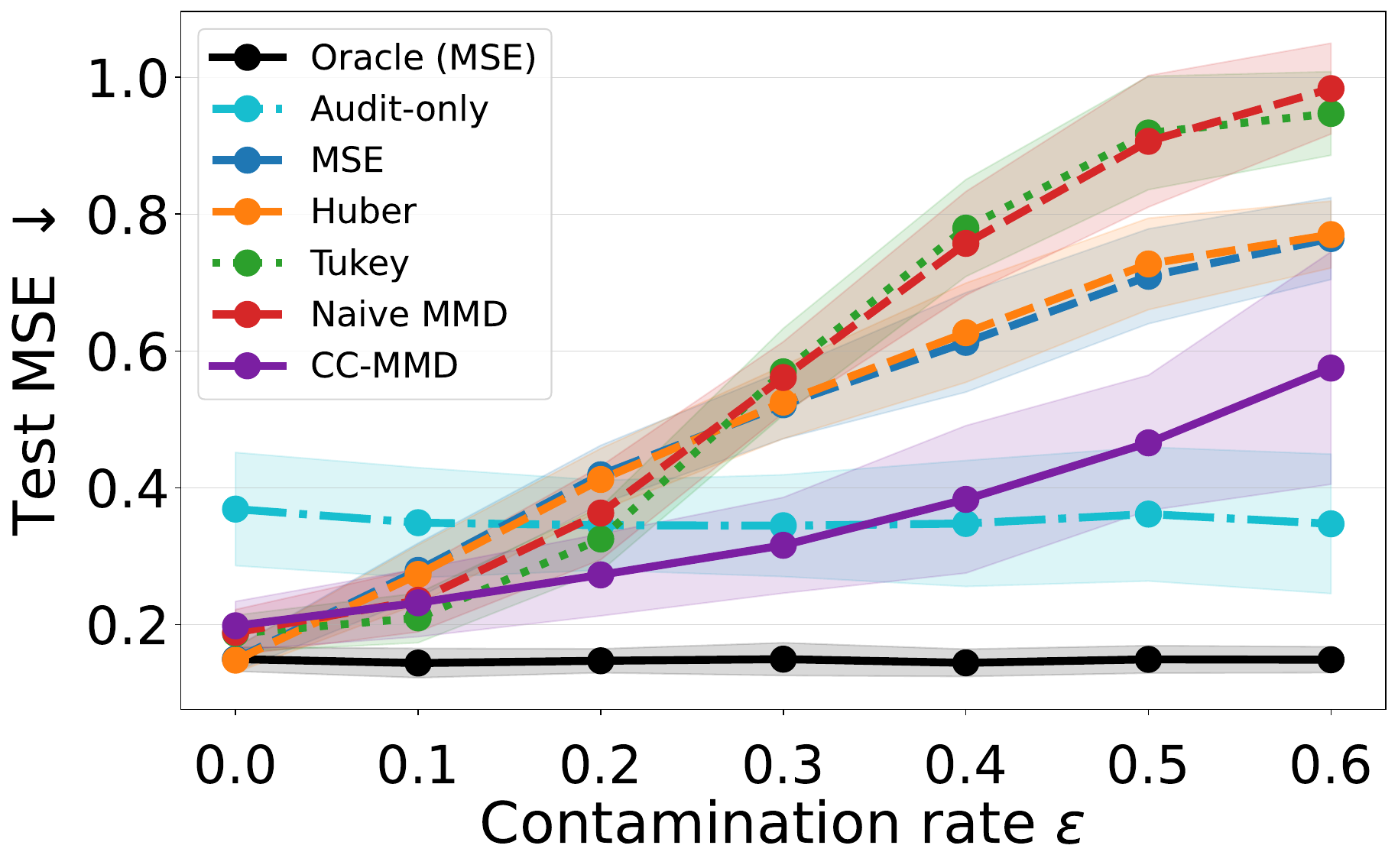}
    \includegraphics[width=0.48\linewidth]{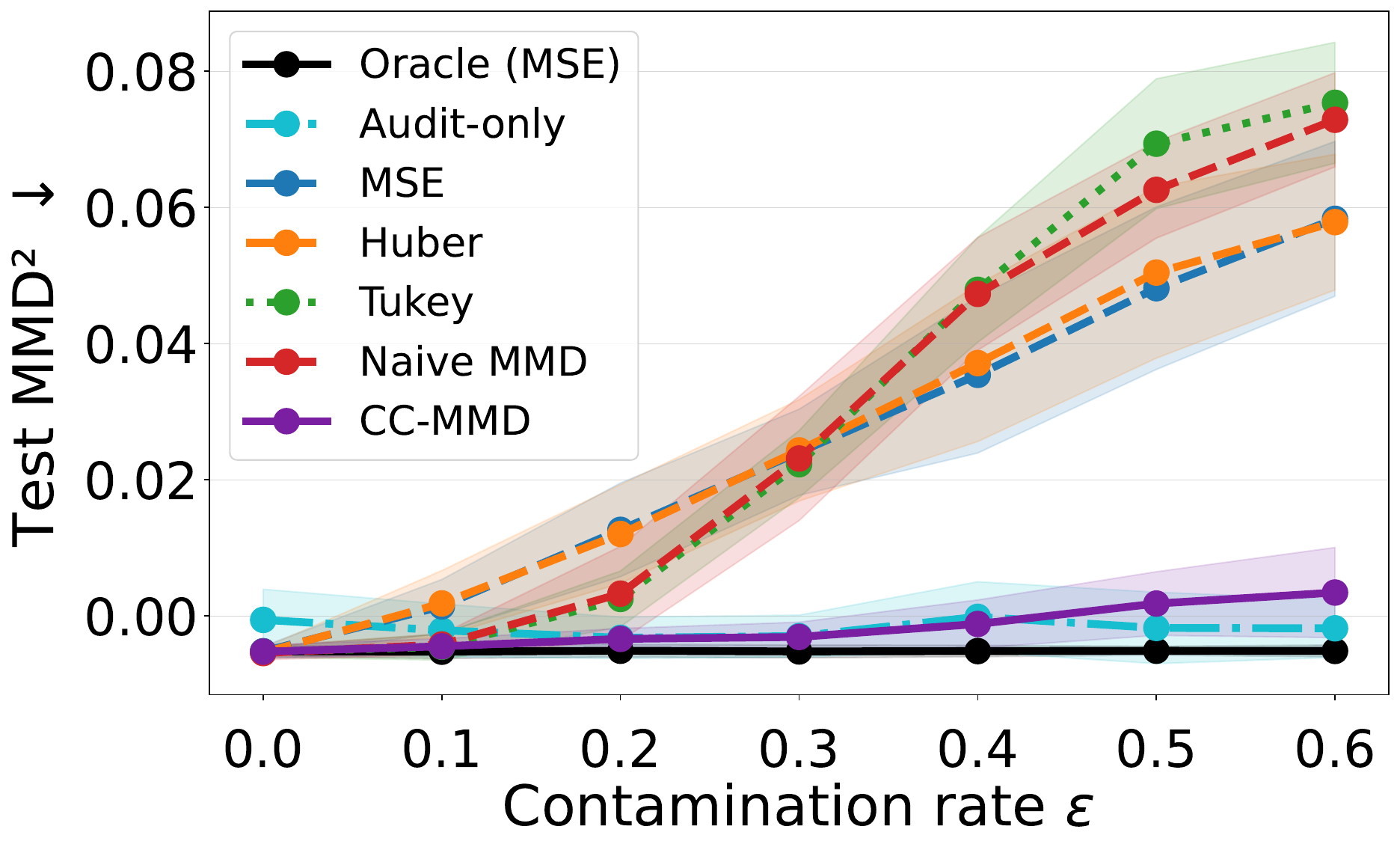}
    \vspace{-1mm}
  \caption{\textbf{Prediction task benchmarking.} Shaded regions denote $\pm 1$ standard deviation over 20 seeds. {\bf Left:} Test MSE vs.\ $\epsilon$.
      CC-MMD is the best-performing method trained on full data for $\epsilon > 0.1$, achieving up to 39\% lower MSE than the next-best baseline. {\bf Right:} Test MMD$^2$ between predicted and true output distributions vs.\ $\epsilon$. CC-MMD maintains near-zero divergence across all contamination levels, matching the Oracle, while other methods trained on full data diverge monotonically.}
      \vspace{-3mm}
  \label{fig:exp4_quant}
\end{figure}


{\bf Experiment 3 -- Prediction (Concrete Compressive Strength).}  \vspace{-1.8mm}

Figure~\ref{fig:exp4_quant} evaluates CC--MMD in a supervised prediction task on the UCI Concrete Compressive Strength dataset~\citep{yeh1998modeling}. We construct a natural contamination setting by treating 28-day cured concrete as the target population, $q$, and early-cured concrete ($\leq 7$ days) as contaminants, $z$. The curing-age variable is removed from the features, so the contaminating samples are not trivially identifiable. Unlike the synthetic outliers in Experiment~2, these contaminants are not extreme residual outliers: they occupy overlapping feature support but induce a systematic downward shift in the conditional relationship between material composition and compressive strength. This makes the experiment a harder and more realistic test of whether the method can correct distributional contamination rather than merely resist large residuals.

For each contamination level $\epsilon \in \{0.0,0.1,\ldots,0.6\}$, we train MLP regressors on a contaminated dataset containing 300 samples, fixed, from the target population and the rest from the contaminating population. The performance is tested on a clean held-out test set with 125 samples from $q$. The audit budget is $20\%$ of the contaminated training dataset. All methods use the same architecture and optimizer. Implementation details are provided in Appendix~\ref{sec:exp_concrete_full}. We compare the oracle model trained on the true inliers, audit-only training, MSE, Huber, Tukey's biweight, na\"ive MMD, and CC--MMD.

At $\epsilon=0$, CC--MMD is competitive with the oracle and standard losses, indicating that the correction introduces no meaningful degradation when the pool is clean. As contamination increases, however, the baselines deteriorate sharply. MSE and Huber reach test errors of approximately $0.76$ and $0.77$ at $\epsilon=0.6$, respectively, showing that Huber robustness does not help when contaminants have moderate residuals. Tukey's biweight performs even worse at high contamination, reaching approximately $0.95$, because the contaminating distribution inflates the residual scale and causes unreliable downweighting. Na\"ive MMD also fails, reaching approximately $0.98$. 
In contrast, CC--MMD is the best-performing non-oracle method which uses full training data for every $\epsilon > 0.1$. At $\epsilon=0.5$, it obtains test MSE $0.47 \pm 0.10$, a $34\%$ reduction relative to the next-best baseline; at $\epsilon=0.6$, it obtains $0.58 \pm 0.17$, a $25\%$ reduction. The gains are the largest at intermediate contamination levels, where CC--MMD improves over MSE by up to $39\%$. The right panel of Figure~\ref{fig:exp4_quant} shows the same pattern distributionally: CC--MMD maintains test MMD$^2$ near the oracle level, while all uncorrected methods which use full training data diverge as contamination increases.

Figure~\ref{fig:exp4_scatter} shows that, at $\epsilon=0.5$, MSE, Huber, Tukey, and na\"ive MMD underpredict high-strength concrete because early-cured contaminants pull predictions downward. In contrast, CC--MMD tracks the oracle and stays close to the identity line. Figure~\ref{fig:exp4_qq} confirms this pattern: CC--MMD recovers the clean marginal output distribution, while the baselines compress the predicted range. These results show that robustness alone does not correct contamination; CC--MMD uses audit information to adjust the distribution-matching objective and recover clean-target performance.

{\bf Experiment 4 -- Image Generation.}  \vspace{-1.8mm}

We evaluate CC--MMD on image generative modeling with contaminated data. The target is clean MNIST~\citep{deng2012mnist}, and the contaminating distributions are K-MNIST~\citep{clanuwat2018deep} and Omniglot~\citep{lake2015human}. We construct a contaminated training pool with $N=10{,}000$ images and severe contamination, $\epsilon=0.60$. We audit $m=1{,}000$ images ($10\%$ of the pool) and evaluate three generator architectures: inverse autoregressive flow (IAF, \cite{kingma2016improved}), variational autoencoders (VAE, \cite{kingma2013auto}), generative moment matching networks (GMMN, \cite{li2015generative}). For each architecture, CC--MMD is compared against na\"ive MMD and the corresponding architecture's standard training objective: sliced Wasserstein distance (SWD, \cite{rabin2011wasserstein}) for IAF; ELBO with MSE and Huber reconstruction for VAE; adversarial training for GMMN (GAN, \cite{goodfellow2020generative}). All comparisons fix the architecture, initialization, optimizer, and training budget. Implementation details are provided in Appendix~\ref{sec:appendix_generative}. We evaluate generated samples against 10,000 clean MNIST test images using test MMD$^2$ and Fréchet Inception Distance (FID). See Figure~\ref{fig:ccmmd_contam_metrics} for the results. 

\begin{figure}[htp]
    \centering
    \includegraphics[width=0.49\linewidth]{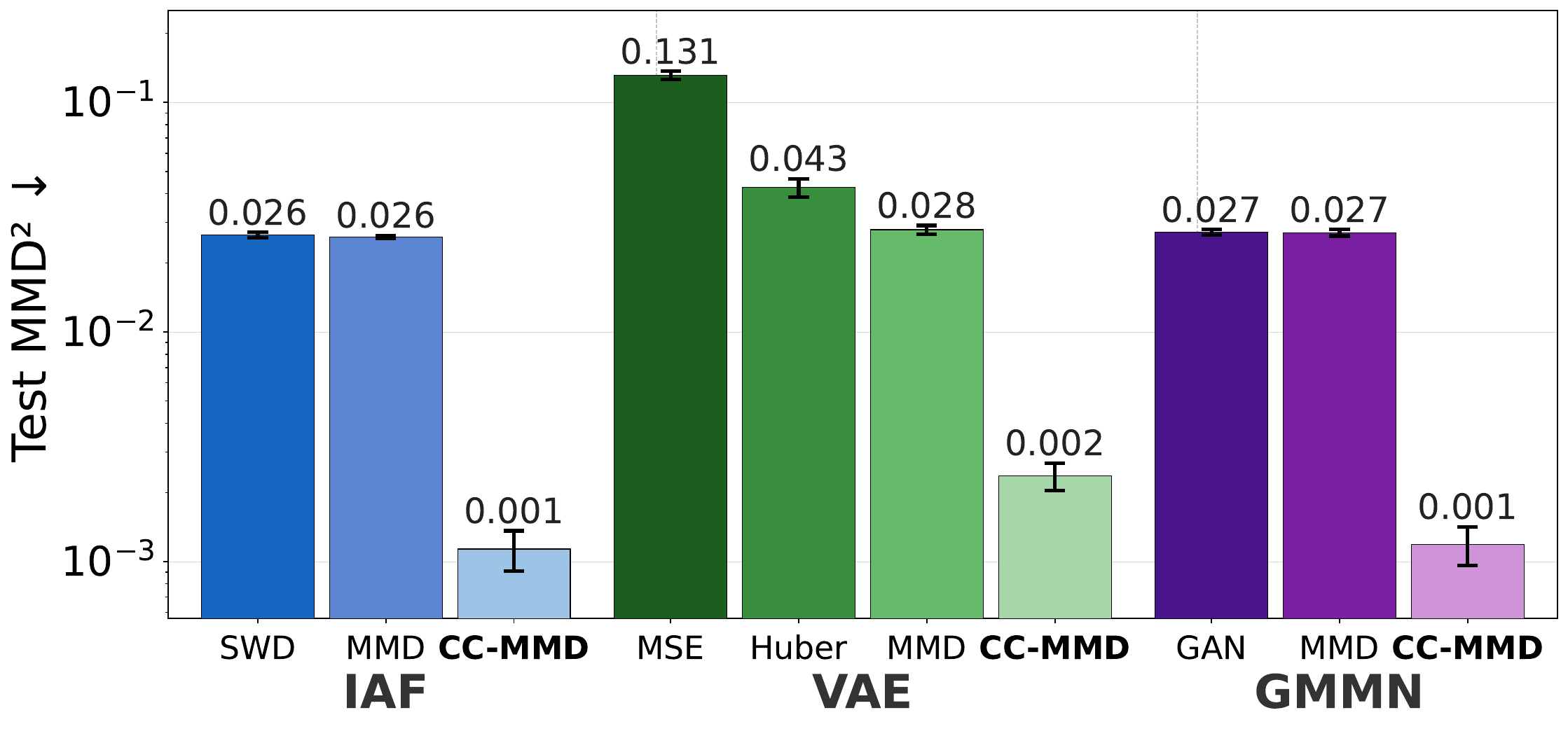}
    \hfill
    \includegraphics[width=0.49\linewidth]{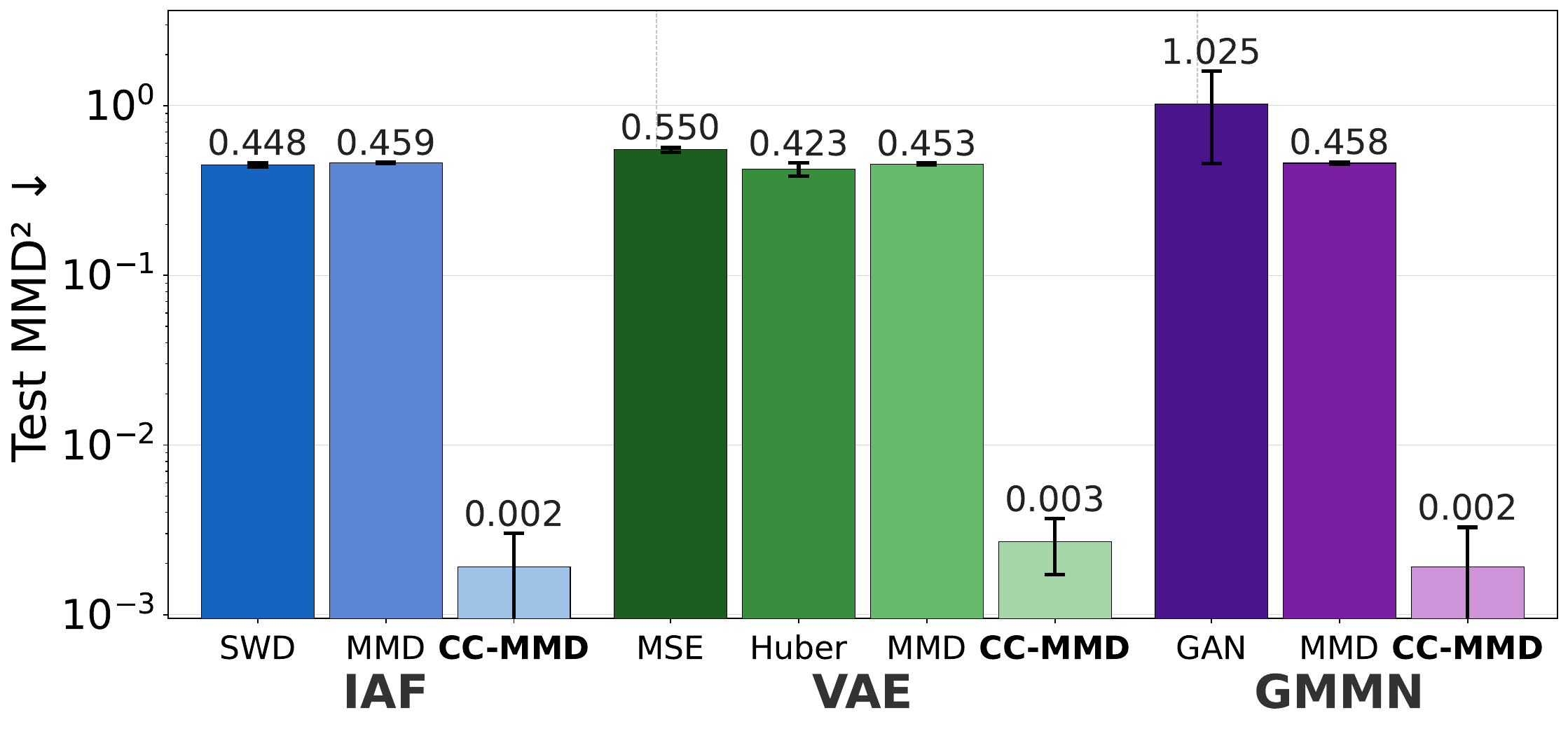} \\
    \includegraphics[width=0.49\linewidth]{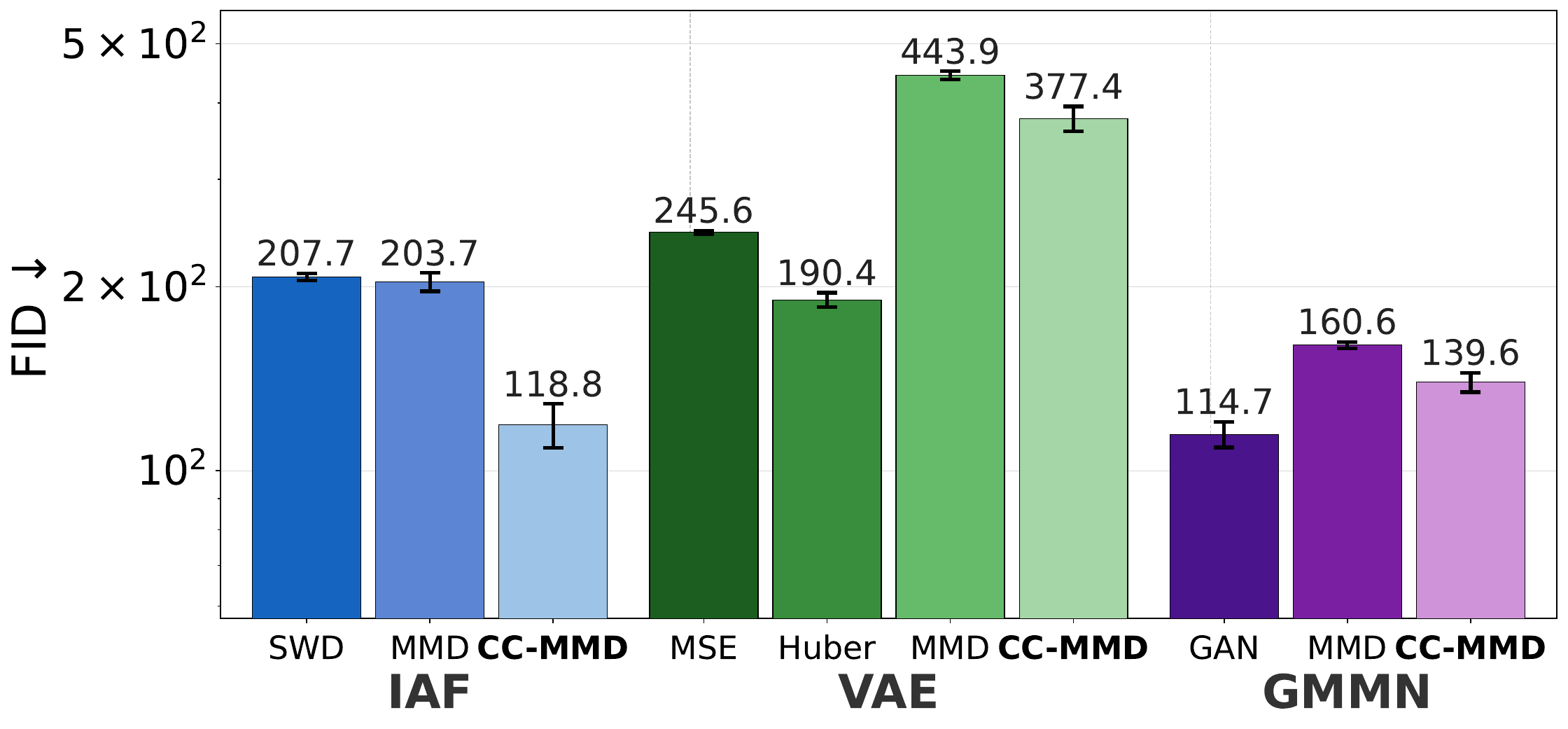}
    \hfill
    \includegraphics[width=0.49\linewidth]{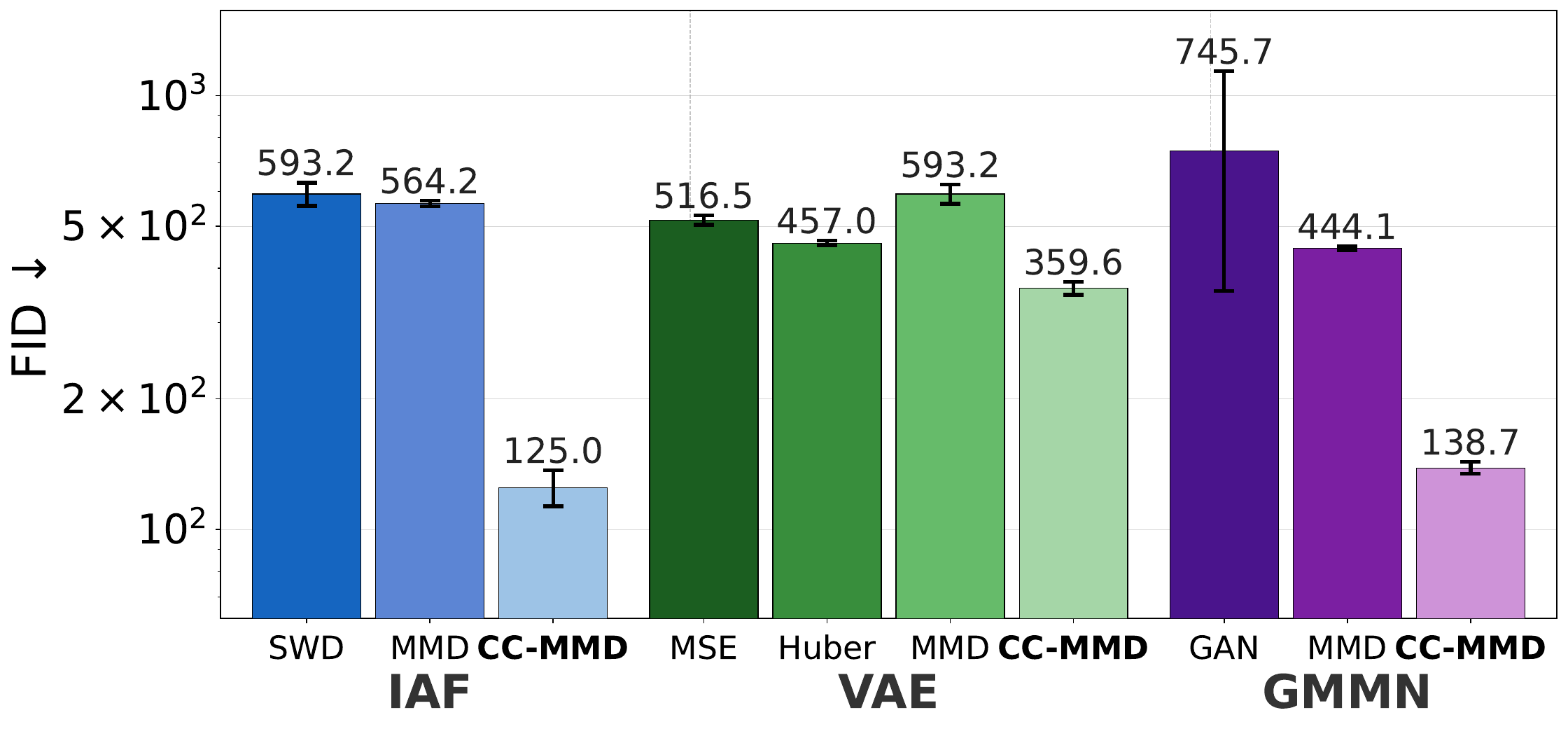} \\
     \vspace{-1mm}
    \caption{\textbf{Generative modeling on MNIST.} Comparison of test MMD$^2$ (\textbf{top}) and FID (\textbf{bottom}) at $\epsilon=0.60$, for K-MNIST (\textbf{left}) and Omniglot (\textbf{right}) contaminants, across three architectures and respective baselines. Each method run on 5 seeds with error bars showing $\pm 1$ std.} \vspace{-3mm}
    \label{fig:ccmmd_contam_metrics}
\end{figure}

Under $\epsilon=0.60$ contamination, uncorrected baselines optimize distances to the contaminated training pool, thus, learning mixtures that retain substantial contaminant structure. This results in poor generative performance. CC--MMD instead uses audit-informed proxy scores to correct the distribution-matching loss toward the clean MNIST target. Across all three architectures and both contamination settings, CC--MMD achieves substantially lower test MMD$^2$ than all baselines. FID improvements are consistent for IAF and under Omniglot contamination, where baselines degrade most severely. For VAE and GMMN under K-MNIST contamination, ELBO-based reconstruction and adversarial training achieve competitive FID despite higher test MMD$^2$, suggesting these objectives partially capture perceptual quality while still fitting the contaminated distribution. The main result is that, regardless of the architecture, the CC-MMD loss improves performance under severely contaminated data conditions.

The contamination-rate ablation varies $\epsilon$ from $0$ to $0.8$ while fixing $N=10{,}000$, $m=1{,}000$, proxy AUROC $\approx 0.70$, and run 20 random realizations with IAF (Figure~\ref{fig:ablation_epsilon_mmd2}). Na\"ive MMD and SWD degrade rapidly as contamination increases. At $\epsilon=0.8$, na\"ive MMD reaches test MMD$^2=0.046$, more than two orders of magnitude above the oracle baseline of $0.00055$. CC--MMD is robust to this degradation. It obtains test MMD$^2=0.00121$ at $\epsilon=0.6$ and $0.00371$ at $\epsilon=0.8$, giving $21\times$ and $12\times$ reductions relative to na\"ive MMD. Audit-only training performs competitively at very high contamination because it discards the contaminated pool, but CC--MMD outperforms audit-only for $\epsilon \leq 0.5$ on test MMD$^2$ by using non-audited clean samples through the corrected full-pool objective.

The proxy-quality and audit-size ablations show that CC--MMD benefits from a larger audit sample but does not require a near-perfect proxy to improve over contaminated distribution matching. As proxy AUROC increases, performance weakly improves, consistent with the interpretation that proxy quality mainly affects finite-sample efficiency and optimization rather than the validity of the correction itself (Figure~\ref{fig:ablation_proxy}). Increasing the audit budget significantly improves performance by producing stronger proxy scores and more stable clean-audit kernel estimates (Figure~\ref{fig:ablation-audit-size}), as anticipated by Theorem~\ref{thm:design_clt_full}. The gains are largest when moving from very small to moderate audit sizes, after which returns diminish. Even with a small audit set, CC--MMD meaningfully outperforms na\"ive MMD trained on the full contaminated pool, indicating that limited but informative audit supervision is enough to correct much of the contamination-induced bias.

\begin{figure}[th]
    \centering
    \includegraphics[width=0.49\linewidth]{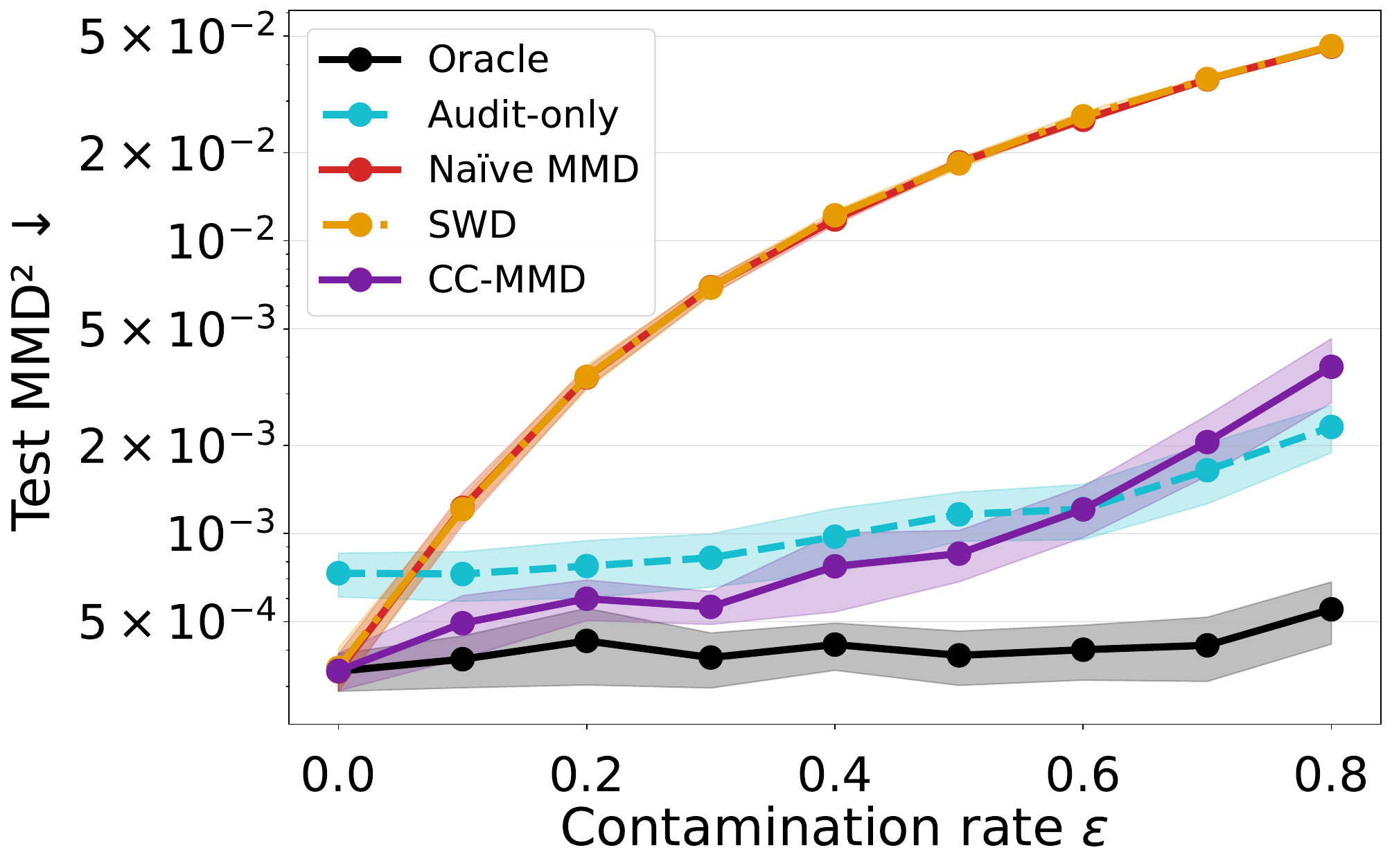}
    \hfill
    \includegraphics[width=0.48\linewidth]{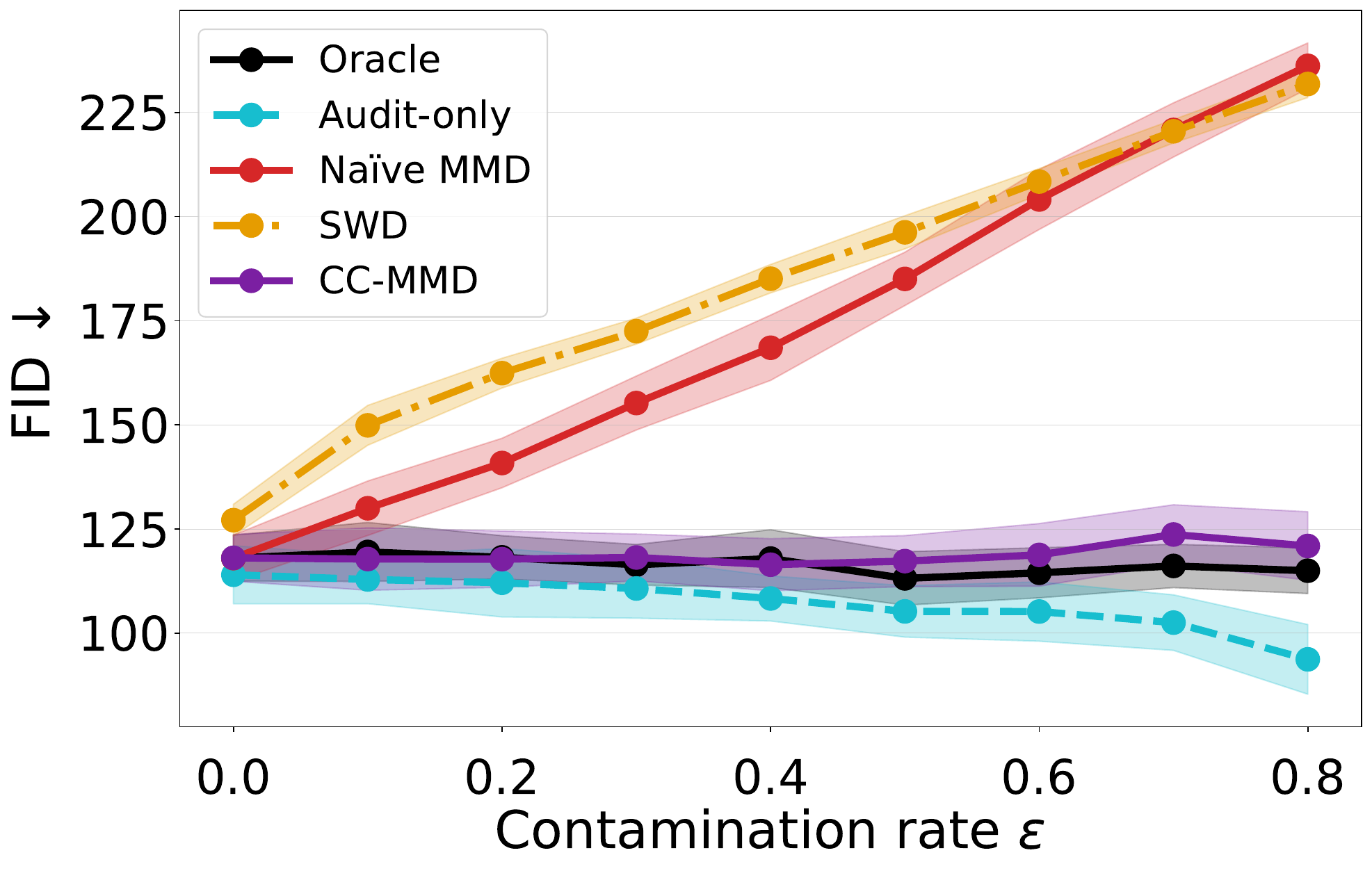}
    \caption{
        \textbf{Generative quality vs.\ contamination rate on MNIST.} 
        Test $\mathrm{MMD}^2$ in log scale (\textbf{left}) and FID (\textbf{right}) against the clean MNIST test set. CC-MMD uses a proxy with AUROC $\approx 0.70$ and audited $m/N=10\%$ of the training pool. Shaded bands show $\pm 1$ standard deviation over 20 random seeds. }
    \label{fig:ablation_epsilon_mmd2}
\end{figure}

\section{Discussion}
\label{sec:discussion}

Our results show that CC--MMD addresses a central failure mode in learning from contaminated data \citep{huber1992robust, hampel1974influence}: standard losses can remain stable and well optimized while still targeting the wrong distribution. MMD, SWD, MSE, and robust pointwise losses treat the contaminated pool as the target sample, so they often learn the mixture rather than the clean distribution. Robust losses such as Huber can reduce the influence of large residuals, and existing MMD robustness results explain why MMD may behave more stably than likelihood- or squared-error-based alternatives under contamination \citep{briol2019statistical, cherief2020mmd, barp2019minimum, cherief2022finite, dellaporta2022robust, alquier2024universal}. However, robust losses typically control, bound, or downweight contamination; but they do not determine the target distribution. As a result, when contaminants lie within the support of the target data, a fixed contamination level can still induce nonvanishing asymptotic biases. This matters when contaminants are not residual outliers, as in the concrete prediction experiment, where off-target samples shift the conditional relationship within overlapping feature support. In contrast, CC--MMD is robust by design. It uses audit information to reweight the contaminated pool and correct the distribution-matching objective itself, shifting the estimand toward the clean target discrepancy rather than the mixture discrepancy. Thus, our contribution is not another bounded-influence guarantee, but a design-based correction that removes contamination-induced bias from the MMD estimand.

\paragraph{Limitations and Open Challenges.} CC--MMD has not been systematically evaluated in high-dimensional settings, and how the required audit sample size should scale with dimension remains an open question. CC--MMD is most effective when the contaminated pool still contains enough clean samples and the audit set provides sufficient signal to guide reweighting. In this regime, it combines the efficiency of full-pool training with the bias reduction of audit supervision, explaining why it can outperform both na\"ive full-pool and audit-only training. At very high contamination, or when audits are too small, biased, or unavailable, performance may degrade because the unaudited pool contributes fewer clean samples and correction variance increases (e.g., Figure~\ref{fig:exp4_quant}). Additionally, poor proxies increase variance and can make optimization more challenging, particularly at high contamination levels. Understanding the trade-offs among audit size, proxy accuracy, and estimator variance remains an avenue of further investigation. 

CC--MMD inherits the computational and modeling considerations of MMD. Exact computation is dominated by pairwise kernel evaluations and therefore requires $O(N^2)$ time. The additional correction terms involve the audited subset, with $m \ll N$, and do not change the leading-order complexity. As with standard MMD, this quadratic cost may become a bottleneck for very large datasets. However, existing scalable approximations (e.g., random Fourier features \cite{rahimi2007random}, Nyström methods \cite{williams2000using}, block estimators \cite{zaremba2013b}, and linear-time MMD estimators \cite{gretton2012kernel}) can be adapted to CC-MMD with minimal modification because the correction is expressed through the same empirical kernel sums. Performance depends on the choice of kernel and bandwidth, and there is no guarantee that a single kernel configuration is optimal for both the full-pool and audit-correction components. Systematic kernel selection and bandwidth tuning remain open challenges. While we demonstrate strong empirical performance across inference, prediction, and generative modeling tasks, the current study focuses on settings with a single contaminating distribution and relatively simple proxy models. Real-world data may involve multiple sources of contamination, structured or adversarial shifts, and more complex dependence between features and membership. Extending or evaluating CC--MMD for these richer contamination regimes and integrating it with domain-specific models or representations presents an important avenue for future development.

\begin{ack}
The authors acknowledge support from the National Science Foundation under Cooperative Agreement No. 2421782 and from the Simons Foundation under Award No. MPS-AI-00010515, through the NSF–Simons AI Institute for Cosmic Origins (CosmicAI, \url{https://cosmicai.org/}). Additionally, the authors acknowledge the Texas Advanced Computing Center (TACC, \url{http://www.tacc.utexas.edu}) at The University of Texas at Austin for providing computational resources that have contributed to the research results reported within this paper. 
\end{ack}

\bibliographystyle{unsrt}
\bibliography{bibliography}


\appendix

\section*{Related Work}
\label{sec:related_work}

Our setting (a contaminated target pool, a noisy membership score available for all pooled points, and an audited uniformly random labeled subset) touches three mature literatures: (i) kernel mean embeddings and MMD-based two-sample methodology, (ii) contamination/mixture and weak-supervision paradigms with latent membership, and (iii) design-based estimation with auxiliary information, including recent prediction-powered (``predict-then-correct'') inference.

\paragraph{Kernel mean embeddings, MMD, and two-sample testing.} Kernel mean embeddings map probability measures into an RKHS, enabling comparison of distributions via distances between embeddings \citep{muandet2017kernel}.
Maximum mean discrepancy (MMD) is the induced integral probability metric obtained by taking the supremum over the unit ball of the RKHS; empirical MMD estimators are typically U-statistics (unbiased) or V-statistics (biased but simpler) built from pairwise kernel evaluations \citep{gretton2006kernel,gretton2012kernel}.
A substantial body of work develops calibration and power analyses for MMD two-sample tests, including asymptotic null theory for degenerate U-statistics and practical resampling methods.
In particular, wild bootstrap procedures provide consistent calibration for degenerate kernel tests in settings where na\"ive permutation can fail (e.g., dependence) \citep{chwialkowski2014wild}, and aggregated/ensemble constructions improve adaptivity and empirical power across kernel choices \citep{schrab2023mmd}.
Most of this literature assumes direct access to i.i.d.\ samples from each distribution; our goal is instead to target the latent clean component $q$ when the observed target pool is a mixture.

\paragraph{MMD as a learning objective and distribution alignment.} Beyond testing, MMD is widely used as a discrepancy to \emph{learn} representations or generative models by matching distributions in feature space.
In domain adaptation, MMD penalties are used to align source and target feature distributions.
Under covariate shift and related dataset-shift settings, kernel mean matching (KMM) uses RKHS moment matching to construct importance weights \citep{gretton2009covariate}.
In generative modeling, MMD objectives provide a tractable alternative to likelihoods and adversarial training; for example, generative moment matching networks minimize empirical MMD between generated and observed samples \citep{li2015generative,li2017mmd}.
More recently, MMD and related kernel discrepancies have been developed as tools for \emph{parameter estimation} and \emph{likelihood-free} (simulation-based) inference. In particular, kernel-based minimum-distance objectives yield consistent estimators under weak assumptions and provide a principled way to fit implicit models by matching distributions in RKHS \citep{briol2019statistical}. Complementary work in simulation-based inference uses MMD as a discrepancy between observed and simulated samples (for example, kernel-embedding ABC methods such as K2-ABC and robust simulator-based Bayesian procedures based on the MMD posterior bootstrap) and further develops MMD estimators tailored to simulator-based objectives, including optimally weighted constructions that improve efficiency and sample complexity \citep{park2016k2,dellaporta2022robust,bharti2023optimally}. Collectively, this literature establishes MMD as a practical objective for estimation and inference, motivating the question of whether such objectives can be made to target a clean component when the observed sample is contaminated. 

\paragraph{Robustness and contamination in kernel methods.}
A parallel literature studies the robustness of kernel mean embeddings and MMD-type quantities under outliers or H\"uber/adversarial contamination. Median-of-means and related robust-mean principles have been adapted to kernel mean embedding estimation and MMD, yielding estimators with high resistance to outliers and sub-Gaussian deviation bounds under mild conditions \citep{lerasle2019monk}.
In parametric and regression contexts, minimum-distance estimators based on MMD have been developed as ``universal'' estimators that can be robust to misspecification and contamination when bounded kernels are used \citep{cherief2022finite,alquier2024universal}.
Related robust constructions in RKHSs appear in nonparametric density estimation as well \citep{vandermeulen2014robust}.
These works typically pursue robustness by modifying the estimator or objective (e.g., robustification of the embedding or the empirical risk).
Our focus is complementary: while similar to robustness literature, we do not attempt to model contamination; we aim to \emph{debias toward a latent clean component} using an audit set, rather than to robustify against arbitrary outliers in the observed sample.

\paragraph{Mixtures, latent membership, and weak supervision (PU, label noise).}
Modeling a contaminated pool as a mixture $r=(1-\epsilon)q+\epsilon z$ connects to mixture proportion estimation (MPE) and to positive-unlabeled (PU) learning \citep{du2014analysis,du2015convex,bekker2020learning,zhao2022dist}, where one observes samples from a component and from a mixture and seeks to recover properties of the latent component.
Rates and identifiability conditions for MPE have been studied extensively \citep{scott2015rate}, and kernel mean embeddings have been used to design practical MPE procedures with convergence guarantees \citep{ramaswamy2016mixture}.
In PU learning, classical methods exploit assumptions on the labeling mechanism to learn classifiers from positives and unlabeled data \citep{elkan2008learning}.
This literature motivates our latent-indicator formulation and clarifies what can and cannot be identified from mixtures alone.
When a credible parametric model for the contamination process is available, likelihood-based mixture modeling — for instance via EM with latent membership indicators — offers an alternative route to the same latent component. That approach requires specifying the number and form of the mixture components and assuming their identifiability \cite{scott2015rate, ramaswamy2016mixture}, whereas our guarantees require no assumptions on $z$ beyond kernel regularity, and $z$ may itself be a heterogeneous mixture of unknown sources.
However, unlike MPE/PU work, we assume access to a small gold-standard labeled subset sampled by a known design; this enables exact design-based bias correction of the \emph{kernel U-statistics} that define MMD.

\paragraph{Design-based estimation with auxiliary information: HT, calibration, and two-phase sampling.}
Our correction is rooted in finite-population sampling theory.
The Horvitz--Thompson (HT) estimator provides design-unbiased estimation of population totals under known inclusion probabilities \citep{horvitz1952generalization}, with simple random sampling without replacement (SRSWOR) as a basic case \citep{cochran1977sampling}.
When auxiliary information is available for the whole population, model-assisted and calibration/GREG estimators improve efficiency while retaining design-based validity; a common form is ``plug-in prediction + design-unbiased residual correction'' \citep{cassel1976some,deville1992calibration,sarndal2003model}.
In our setting, the proxy membership score $\widetilde I$ plays the role of an auxiliary/predictive variable observed for all $W_j$, while the audited subset provides unbiased estimation of residual totals.
A key difference is that MMD involves \emph{pairwise} (U-statistic) terms, so the correction must address both linear and quadratic components---hence the appearance of both single-unit and ordered-pair HT scalings.

\paragraph{Prediction-powered inference and modern audit-based correction.}
Recent work in prediction-powered inference (PPI) formalizes the idea of combining a large set of ML predictions with a smaller set of gold labels to obtain valid inference without requiring calibration or correct specification of the predictor \citep{angelopoulos2023prediction}.
These methods are conceptually close to model-assisted survey sampling but are designed for modern ML pipelines and downstream inferential targets.
Extensions explicitly connect PPI rectifiers to HT/H\'ajek-style estimators under informative or nonuniform labeling and to bootstrap-based procedures under complex sampling \citep{datta2025prediction,song2026demystifying}.
Our estimator can be viewed as an instance of this ``predict-then-correct'' template for kernel U-statistics: $\widetilde I$ supplies a cheap global plug-in, and the audit set supplies an exactly unbiased design-based correction.

\paragraph{Positioning.}
Overall, our method synthesizes (i) kernel two-sample testing via MMD and its U-statistic structure \citep{gretton2012kernel}, (ii) mixture/contamination and weak-supervision formulations with latent membership \citep{scott2015rate, ramaswamy2016mixture}, and (iii) design-based correction with auxiliary predictions via HT/calibration and prediction-powered inference \citep{horvitz1952generalization,deville1992calibration,angelopoulos2023prediction}.
The distinctive feature of our setting is that the audited sample is drawn under a known probability design (SRSWOR), enabling exact design-unbiased recovery of the oracle kernel sums without assumptions on the nuisance distribution $z$ or on the calibration of $\widetilde I$.

\paragraph{Extended comparison with robustness literature.}
The proposed estimator is complementary to, rather than a replacement for, the existing robustness literature on MMD and kernel-based inference. Classical robust statistics typically asks whether an estimator remains stable when the data-generating distribution is perturbed by outliers, misspecification, or Huber contamination \citep{huber1992robust,hampel1974influence}. Recent MMD-based work follows this tradition by exploiting the boundedness and geometry of kernel mean embeddings: because bounded kernels limit the influence any single observation can have on the empirical embedding, MMD-based minimum-distance estimators can exhibit robustness to misspecification, dependence, Huber contamination, and adversarial outliers \citep{briol2019statistical,cherief2020mmd,barp2019minimum,alquier2024universal}. Similar ideas appear in robust Bayesian and simulator-based inference, where the likelihood is replaced or supplemented by an MMD-based discrepancy to reduce sensitivity to model misspecification; for example, MMD-Bayes and the MMD posterior bootstrap use kernel discrepancies to obtain posterior procedures with improved stability relative to likelihood-based alternatives \citep{cherief2020mmd,dellaporta2022robust}. In these settings, robustness usually means that contamination has a controlled effect on the estimator: the error bound may depend on the contamination fraction $\epsilon$, or the estimator may remain close to the best-fitting model under a contaminated or misspecified distribution. This is an important but different notion from the one considered here. Standard MMD robustness can explain why a naive MMD estimator often behaves better than squared-error or likelihood-based methods under outliers, but it does not generally remove the estimand shift from $q$ to the contaminated mixture $r=(1-\epsilon)q+\epsilon z$. With fixed $\epsilon>0$, the naive empirical MMD converges to a discrepancy involving $r$, not the desired discrepancy involving the clean target $q$. Thus, the residual bias is not a finite-sample instability but a population-level identification problem. Our contribution addresses this different issue: when partial ground-truth membership labels are available through an audit design, one can correct the contaminated kernel sums so that the resulting estimator targets the same oracle quantity that would have been computed from the latent clean points. In this sense, existing MMD robustness is primarily \emph{stability-based}: it controls the damage caused by contamination. The present approach is \emph{design-based}: it uses randomized auditing to debias the contaminated sample and recover the clean-target discrepancy. The two perspectives are therefore compatible. Bounded-kernel MMD provides a stable discrepancy even before correction, while the audit-based correction changes the target from the contaminated distribution back to the scientifically relevant clean distribution.

\subsection*{Illustrative Examples}
\label{sec:illustrative_exmpl}

\paragraph{Example 1: LLM-assisted filtering of political content.}
Suppose we wish to study the distribution $q$ of social media posts related to U.S. politics (e.g., for measuring narrative dynamics or topic drift). A large collection of posts $W_1,\dots,W_N$ is retrieved using keyword queries and automated filtering. However, this retrieval pipeline is imperfect: some posts truly concern U.S. politics, while others are irrelevant but still appear in the pool. Hence, the pooled sample follows $W_j\sim r=(1-\epsilon)q+\epsilon z$, where $q$ denotes genuinely relevant political posts and $z$ represents unrelated or weakly related content. Each post $W_j$ has a latent membership indicator $I_j\in\{0,1\}$, where $I_j=1$ if the post is genuinely about U.S. politics and $I_j=0$ otherwise. Manually labeling every post is infeasible, so an LLM is used as a judge to produce a relevance score $\widetilde I_j\in[0,1]$, interpreted as the model's confidence that the post concerns U.S. politics. Because LLM judgments may be biased, inconsistent, or poorly calibrated, it is infeasible to assume $\widetilde I_j=\mathbb{P}(I_j=1\mid W_j)$. Instead, a small subset $L$ of posts is manually annotated by human reviewers, providing the true indicators $I_j$ for $j\in L$. 

\paragraph{Example 2: Galaxy population inference with star contamination.}
In astronomical surveys, catalogs of galaxies are constructed by automated detection and classification pipelines applied to telescope images. Let $W_{1,\dots,_N}$ denote objects classified as galaxies by the pipeline. In practice, some stars are mistakenly classified as galaxies \citep{fadely2012star}, thereby contaminating the observed catalog and following the mixture $r=(1-\epsilon)q+\epsilon z$, where $q$ is the true distribution of galaxies and $z$ is the distribution of misclassified objects. For each object $W_j$, the latent indicator $I_j$ equals $1$ if the object is truly a galaxy and $0$ otherwise. The pipeline provides a classification score $\widetilde I_j\in[0,1]$ representing the algorithm's confidence that the object is a galaxy. Because classification models are trained on limited labeled data and may suffer from domain shift or instrumental artifacts, these scores are generally not perfectly calibrated posterior probabilities. A small subset $L$ of objects is therefore inspected by astronomers using higher-resolution imaging or spectroscopic follow-up, yielding the true labels $I_j$ for those objects. 

\paragraph{Example 3: Species-distribution inference with environmental DNA contamination.}
In ecological surveys, environmental DNA (eDNA) samples are often used to study the distribution $q$ of a target species across sites or habitats. Let $W_1,\dots,W_N$ denote genetic sequences or site-level detections retained by an automated bioinformatic pipeline as candidate evidence for the target species. In practice, some retained detections may arise from contamination, sequencing error, barcode misassignment, or DNA transported from other locations, so the pooled sample follows $r=(1-\epsilon)q+\epsilon z$, where $q$ is the distribution of genuine target-species detections and $z$ is the distribution of spurious or non-target detections. For each observation $W_j$, the latent indicator $I_j$ equals $1$ if the detection genuinely corresponds to the target species at the sampled site and $0$ otherwise. The pipeline provides a confidence or assignment score $\widetilde I_j\in[0,1]$, reflecting sequence quality, taxonomic match strength, or model-based confidence. Because these scores may be affected by incomplete reference databases, PCR bias, environmental degradation, or site-specific contamination, they cannot generally be treated as calibrated posterior probabilities. A small subset $L$ is therefore validated through expert review, replicate assays, negative controls, or independent field confirmation, yielding exact labels $I_j$ for those observations.

\section{Assumptions}
\label{sec:assumptions}

We separate assumptions used for (i) the design-based correction, which conditions on the realized finite population, from those used to link the oracle estimator to the population MMD.

\begin{assumption}[Kernel regularity]
\label{ass:kernel}
The kernel is bounded:
\begin{equation}
\label{eq:bounded_kernel}
\sup_{x,x'\in\mathcal{X}} |k(x,x')| \le K < \infty.
\end{equation}
\end{assumption}

\begin{remark}[Unbounded kernels]
Boundedness is used throughout our proofs. We expect the analysis to extend to unbounded kernels under the moment conditions
\begin{equation}
\label{eq:moment_unbounded}
\mathbb{E}_{X,X'\sim p}[k(X,X')^2] < \infty,\quad
\mathbb{E}_{W,W'\sim r}[k(W,W')^2] < \infty,\quad
\mathbb{E}_{X\sim p,\,W\sim r}[k(X,W)^2] < \infty,
\end{equation}
which require finite second moments of $k$ under the mixture distribution $r$, and therefore under both $q$ and $z$. A full treatment of this case would replace the uniform bounds in the proofs with $L^2$ arguments and is left to future work.
\end{remark}

\begin{assumption}[Mixture-with-indicator model]
\label{ass:mixture_model}
The contaminated pool is generated via latent membership indicators:
\begin{equation}
\label{eq:indicator_mixture}
I_j \sim \mathrm{Bernoulli}(1-\epsilon), \qquad
W_j \mid (I_j=1) \sim q, \qquad
W_j \mid (I_j=0) \sim z,
\end{equation}
independently across $j=1,\dots,N$. In particular, conditional on $I_j=1$, $W_j$ is an unbiased draw from the target distribution $q$.
\end{assumption}

\begin{assumption}[Simple random labeling]
\label{ass:srs}
The labeled subset $L\subset\{1,\dots,N\}$ is a simple random sample without replacement of size $m$, independent of $(W,I,\widetilde I)$. 
\end{assumption}

\begin{assumption}[Asymptotic regime]
\label{ass:asymptotic_regime}
We consider $n,N,m\to\infty$ such that
\begin{equation}
\label{eq:asymp_rates}
\frac{m}{N}\to \rho\in(0,1],
\qquad
\frac{n}{N}\to \gamma\in(0,\infty).
\end{equation}
The case $\rho\downarrow 0$ corresponds to vanishing audit rates and leads to slower convergence.
\end{assumption}

\begin{assumption}[Finite-population variance stability for linear corrections]
\label{ass:linear_variance}
Conditionally on the realized finite population, the per-unit residual contributions entering the linear $pq$ corrections have stable finite-population variance. Specifically, with
\[
g_j := \sum_{i=1}^n (I_j-\widetilde I_j)\,k(X_i,W_j),
\qquad
u_j := I_j-\widetilde I_j,
\]
\[
\tilde g_j := \frac{1}{n}\sum_{i=1}^n (I_j-\widetilde I_j)\,k(X_i,W_j),
\qquad
u_j := I_j-\widetilde I_j,
\]

their finite-population variances satisfy
\[
S_{\tilde g}^2 \to v_{pq}\in[0,\infty),
\qquad
S_u^2 \to w_{pq}\in[0,\infty).
\]
along the asymptotic regime of Assumption~\ref{ass:asymptotic_regime}. When a nondegenerate Gaussian limit is desired, the limits are assumed strictly positive.
\end{assumption}

\begin{assumption}[Finite-population variance stability for the $qq$ linear corrections]
\label{ass:linear_variance_qq}
Conditionally on the realized finite population, the per-unit residual contributions entering the linear part of the $qq$ corrections have stable finite-population variance. Specifically, define the rescaled per-unit contributions
\[
\tilde a_j := \frac{2}{N-1}\sum_{j'\neq j} \Delta_j\widetilde I_{j'}\,k(W_j,W_{j'}),
\]
\[
\tilde a_j^{(B)} := \frac{2}{N-1}\sum_{j'\neq j} \Delta_j\widetilde I_{j'},
\]
where $\Delta_j := I_j-\widetilde I_j$ on the realized finite population. Their finite-population variances satisfy
\[
S_{\tilde a}^2 \to v_{qq}\in[0,\infty),
\qquad
S_{\tilde a^{(B)}}^2 \to w_{qq}\in[0,\infty),
\]
along the asymptotic regime of Assumption~\ref{ass:asymptotic_regime}. When a nondegenerate Gaussian limit is desired, the limits are assumed strictly positive.
\end{assumption}

\begin{assumption}[Gaussian limit for sampled-pair corrections]
\label{ass:pair_clt}
Conditionally on the realized finite population, the sampled-pair Horvitz--Thompson correction terms entering $\widehat A_{qq}$ and $\widehat B_{qq}$, when rescaled by $1/N^2$, admit a joint asymptotically Gaussian limit under the simple-random-sampling design. Equivalently, the relevant incomplete U-statistic / sampled-pair averages satisfy a nondegenerate multivariate CLT at the $\sqrt m$ scale. In addition, this convergence holds jointly with the centered linear correction terms entering all four corrected sums, normalized by $nN$ for the $pq$ terms and by $N^2$ for the $qq$ terms, at the $\sqrt m$ scale.
\end{assumption}

\begin{assumption}[Denominator nondegeneracy]
\label{ass:denom_nondeg}
The oracle and corrected denominators remain separated from zero with high probability along the asymptotic regime. In particular, for the events under consideration,
\[
B_{qq}^\star>0,\qquad B_{pq}^\star>0,
\]
and
\[
\widehat B_{qq}>0,\qquad \widehat B_{pq}>0
\]
with probability tending to one. 
\end{assumption}

\paragraph*{Non-degeneracy of denominators.}
Let $N_q^\star:=\sum_{j=1}^N I_j$ denote the (random) number of genuine $q$ points in the pool.
Then $B_{qq}^\star = N_q^\star(N_q^\star-1)$ (ordered-pair convention) and $B_{pq}^\star = n N_q^\star$.
Under \ref{ass:mixture_model} and $\epsilon<1$, $N_q^\star/N \to (1-\epsilon)$ almost surely, so denominators scale as $N^2$ and $nN$ with high probability.

\begin{remark}
Assumption~\ref{ass:srs} implies
\begin{equation}
\Pr(j\in L)=\frac{m}{N}, \qquad
\Pr(j,\ell\in L)=\frac{m(m-1)}{N(N-1)}, \quad j\neq \ell.    
\end{equation}
\end{remark}

Assumption~\ref{ass:denom_nondeg} is required to guarantee that the ratio expansions used for CC-MMD are well defined.

\begin{remark}
When the audit is not a simple random sample, Assumption~\ref{ass:srs} can be relaxed to requiring that the inclusion probabilities $\pi_j = \Pr(j \in L)$ and pair probabilities $\pi_{jj'} = \Pr(j, j' \in L)$ are strictly positive and known by design. Then, the corrected sums in Equations~\eqref{eq:Aqq_hat}--\eqref{eq:Apq_hat} and denominators in Equation~\eqref{eq:Bpq_hat} remain design-unbiased by replacing the uniform weights $N/m$ and $N(N-1)/\{m(m-1)\}$ with $1/\pi_j$ and $1/\pi_{jj'}$. This is the general Horvitz--Thompson construction \citep{horvitz1952generalization} of which our estimator is the equal-probability case. When some inclusion probabilities are very small, so that their inverses become large, self-normalized H\'ajek weights can be used. The extension is straightforward when the probabilities are controlled by the study design, and requires additional conditions (adequate overlap, stable weights, sufficiently accurate propensity estimation) when they must be estimated from data. Estimated inclusion probabilities and self-normalized Hájek weights do not generally preserve exact design-unbiasedness. The corresponding development for prediction-powered inference under informative labeling is given by Datta and Polson \citep{datta2025prediction}; a full treatment for kernel $U$-statistics raises design questions specific to the audit mechanism and is beyond the scope of this work.
\end{remark}

\begin{remark}[Computational bookkeeping for $\Delta_j$]
The latent residual $\Delta_j := I_j - \widetilde{I}_j$ is defined
for every pooled unit, but is observed only for $j\in L$, where the true label $I_j$ is observed. In implementations, unaudited entries, $j \notin L$, of the stored residual array may be set to zero, $\Delta_j := 0$, since the estimator accesses residuals only through audit-restricted sums. In population identities  and proofs, however, $\Delta_j$ always denotes the latent residual, not the zero-filled computational array. So this is purely a bookkeeping device and has no effect on the estimator's value or its statistical properties.
\end{remark}

\section{Extra Equations}

\begin{align}
\label{eq:Bpq_hat}
\widehat B_{pq} &:= n\left( \sum_{j=1}^N \widetilde I_j + \frac{N}{m}\sum_{j\in L} \Delta_j
\right), \\
\widehat B_{qq} &:= \sum_{j\neq j'} \widetilde I_j \widetilde I_{j'} + \frac{N}{m}\sum_{j\in L}\sum_{j'\neq j} \Delta_j \widetilde I_{j'} + \frac{N}{m}\sum_{j'\in L}\sum_{j\neq j'} \widetilde I_j \Delta_{j'} + \frac{N(N-1)}{m(m-1)} \sum_{\substack{j,j'\in L\\ j\neq j'}} \Delta_j \Delta_{j'}.  \nonumber
\end{align}

\section{Proof of Lemmas, Theorems, Corollaries in the main text}
\label{sec:all_proofs}

\begin{lemma} \label{lem:ht_identities}
Fix any finite population $\{(W_j,\widetilde I_j,I_j)\}_{j=1}^N$ and let $L$ be a simple random sample without replacement of size $m$.
Then for any real-value function $g:\{1,\dots,N\}\to\mathbb{R}$,
\begin{equation}
\label{eq:ht_single}
\mathbb{E}_L\!\left[\frac{N}{m}\sum_{j\in L} g(j)\right] = \sum_{j=1}^N g(j),
\end{equation}
and for any real-value function $h:\{1,\dots,N\}^2\to\mathbb{R}$ with $h(j,j)=0$,
\begin{equation}
\label{eq:ht_pair}
\mathbb{E}_L\!\left[\frac{N(N-1)}{m(m-1)}\sum_{\substack{j,j'\in L\\ j\neq j'}} h(j,j')\right] = \sum_{\substack{j,j'\in\{1,\dots,N\}\\ j\neq j'}} h(j,j').
\end{equation}
\end{lemma}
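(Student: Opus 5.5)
The plan is to rewrite each audit-restricted sum as a full-population sum weighted by inclusion indicators, and then use linearity of expectation. For each $j$, let $\xi_j := \mathbf 1\{j\in L\}$. The population $\{(W_j,\widetilde I_j,I_j)\}_{j=1}^N$ is fixed, and $L$ is the only source of randomness. Hence $g$ and $h$ are deterministic functions of the indices, and
\[
\sum_{j\in L} g(j) = \sum_{j=1}^N \xi_j\, g(j),
\qquad
\sum_{\substack{j,j'\in L\\ j\neq j'}} h(j,j') = \sum_{j\neq j'} \xi_j\xi_{j'}\, h(j,j').
\]
Both sums on the right are finite, so expectation passes through them without any integrability issues.

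For the single-unit identity \eqref{eq:ht_single}, I will use $\mathbb{E}_L[\xi_j]=\Pr(j\in L)=m/N$. This is the inclusion probability recorded in the remark after Assumption~\ref{ass:srs}. To justify it directly, note that under SRSWOR the number of size-$m$ subsets containing $j$ is $\binom{N-1}{m-1}$, and $\binom{N-1}{m-1}/\binom{N}{m}=m/N$. Multiplying $\mathbb{E}_L\bigl[\sum_j \xi_j g(j)\bigr]=\frac{m}{N}\sum_j g(j)$ by $N/m$ gives the claim.

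For the pair identity \eqref{eq:ht_pair}, I will use $\mathbb{E}_L[\xi_j\xi_{j'}]=\Pr(j,j'\in L)=\frac{m(m-1)}{N(N-1)}$ for $j\neq j'$. This follows from the count $\binom{N-2}{m-2}/\binom{N}{m}=\frac{m(m-1)}{N(N-1)}$, and it requires $m\ge 2$. That condition is implicit here, because the normalization $\frac{N(N-1)}{m(m-1)}$ only makes sense when $m\ge2$, which also matches the hypothesis of Theorem~\ref{thm:unbiased_sums}.

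The role of the hypothesis $h(j,j)=0$ is to make the restriction to $j\neq j'$ harmless. It ensures that diagonal terms, which would carry inclusion probability $m/N$ rather than the pair probability, never appear. Applying linearity and multiplying by $\frac{N(N-1)}{m(m-1)}$ then yields the result.

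There is no real obstacle here. The only point needing care is the pair inclusion probability, specifically confirming the combinatorial count and checking that ordered pairs are treated consistently on both sides, since each unordered pair appears twice in each sum.
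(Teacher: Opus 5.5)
Your proposal is correct and follows essentially the same route as the paper's proof. You write the audit-restricted sums with inclusion indicators, apply linearity, and plug in $\Pr(j\in L)=m/N$ and $\Pr(j,j'\in L)=m(m-1)/[N(N-1)]$; your combinatorial derivations of these probabilities and the explicit $m\ge 2$ caveat are welcome additions, though note that both sides of \eqref{eq:ht_pair} already sum only over $j\neq j'$, so the hypothesis $h(j,j)=0$ is never used and is simply redundant rather than what keeps diagonal terms out.
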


\begin{proof}
We prove Equation~\eqref{eq:ht_single}; Equation~\eqref{eq:ht_pair} is analogous.

Under simple random sampling without replacement, $\mathbb{P}(j\in L)=m/N$. Hence
\[
\mathbb{E}_L\!\left[\frac{N}{m}\sum_{j\in L} g(j)\right] = \frac{N}{m}\sum_{j=1}^N g(j)\,\mathbb{P}(j\in L) = \frac{N}{m}\sum_{j=1}^N g(j)\,\frac{m}{N} = \sum_{j=1}^N g(j).
\]
For Equation~\eqref{eq:ht_pair}, note that for $j\neq j'$,
$\mathbb{P}(j,j'\in L)=m(m-1)/[N(N-1)]$, and compute
\[
\mathbb{E}_L\!\left[\frac{N(N-1)}{m(m-1)}\sum_{j\neq j'\in L} h(j,j')\right] = \frac{N(N-1)}{m(m-1)}\sum_{j\neq j'} h(j,j')\,\mathbb{P}(j,j'\in L) = \sum_{j\neq j'} h(j,j').
\]
\end{proof}

\begin{proof}[Proof of Theorem~\ref{thm:unbiased_sums}]
We prove the claim for $\widehat A_{qq}$; the other three follow by identical arguments with $k\equiv 1$ or with the linear identity Equation~\eqref{eq:I_identity}.

Fix $j\neq j'$ and define $K_{jj'}:=k(W_j,W_{j'})$.
By Equation~\eqref{eq:I_identity},
\begin{equation}
\label{eq:expand_Aqq_star}
A_{qq}^\star = \sum_{j\neq j'} I_j I_{j'} K_{jj'} =
\sum_{j\neq j'}\Big(
\widetilde I_j \widetilde I_{j'} + \Delta_j \widetilde I_{j'}
+ \widetilde I_j \Delta_{j'}
+ \Delta_j \Delta_{j'}
\Big)K_{jj'}.
\end{equation}
The first term $\sum_{j\neq j'} \widetilde I_j \widetilde I_{j'}K_{jj'}$ is deterministic given the finite population (it does not depend on $L$).
For the second term in Equation~\eqref{eq:expand_Aqq_star}, define
\[
g(j) := \sum_{j'\neq j} \Delta_j \widetilde I_{j'} K_{jj'}.
\]
Conditionally on the realized finite population, $\Delta_j$ and hence $g(j)$ are fixed for every $j$ and do not depend on $L$.
Therefore,
\[
\frac{N}{m}\sum_{j\in L}\sum_{j'\neq j}\Delta_j \widetilde I_{j'} K_{jj'}
=
\frac{N}{m}\sum_{j\in L} \underbrace{\sum_{j'\neq j}(I_j-\widetilde I_j)\widetilde I_{j'}K_{jj'}}_{=:g^\star(j)},
\]
and by Lemma~\ref{lem:ht_identities} (single-sum identity Equation~\eqref{eq:ht_single}),
\[
\mathbb{E}_L\!\left[\frac{N}{m}\sum_{j\in L} g^\star(j)\right] 
= \sum_{j=1}^N g^\star(j)
= \sum_{j\neq j'} (I_j-\widetilde I_j)\widetilde I_{j'}K_{jj'}.
\]
This is exactly the second term on the right-hand side of Equation~\eqref{eq:expand_Aqq_star}. The third term is symmetric and follows by the same argument, exchanging roles of $j$ and $j'$.

Finally, for the fourth term define
\[
h(j,j') := (I_j-\widetilde I_j)(I_{j'}-\widetilde I_{j'})K_{jj'}
\quad\text{for } j\neq j',
\qquad h(j,j):=0.
\]
Then, by Lemma~\ref{lem:ht_identities} (pair-sum identity Equation~\eqref{eq:ht_pair}),
\[
\mathbb{E}_L\!\left[\frac{N(N-1)}{m(m-1)}\sum_{j\neq j'\in L} \Delta_j\Delta_{j'}K_{jj'}\right]
= \sum_{j\neq j'} (I_j-\widetilde I_j)(I_{j'}-\widetilde I_{j'})K_{jj'}.
\]
Combining the expectations of all four pieces shows $\mathbb{E}_L[\widehat A_{qq}]=A_{qq}^\star$.

For $\widehat A_{pq}$, apply the linear identity $I_j=\widetilde I_j+(I_j-\widetilde I_j)$ and Lemma~\ref{lem:ht_identities} for the single HT correction term. For $\widehat B_{qq}$ and $\widehat B_{pq}$ set $k\equiv 1$.
\end{proof}

\subsection{Design-based Asymptotic Normality [proofs]}
\label{app:design_clt_proofs}

\begin{proof}[Proof of Lemma~\ref{lem:ratio_linearization}.]
Starting from a common denominator,
\[
\frac{\widehat A}{\widehat B}-\frac{A}{B}
= \frac{B\widehat A-A\widehat B}{B\widehat B}
= \frac{B(\widehat A-A)-A(\widehat B-B)}{B\widehat B}.
\]
Add and subtract the first-order terms with denominator $B$:
\[
\frac{\widehat A-A}{\widehat B}
=
\frac{\widehat A-A}{B} - \frac{(\widehat A-A)(\widehat B-B)}{B\widehat B},
\]
and
\[
-\frac{A(\widehat B-B)}{B\widehat B}
= -\frac{A}{B^2}(\widehat B-B) + \frac{A(\widehat B-B)^2}{B^2\widehat B}.
\]
Substituting these two identities into the previous display yields
Equation~\eqref{eq:ratio_identity} with remainder
Equation~\eqref{eq:ratio_remainder_exact}. The bound
Equation~\eqref{eq:ratio_remainder_bound} follows immediately by the triangle inequality.
\end{proof}

Lemma~\ref{lem:ratio_linearization} shows that controlling the CC-MMD estimator reduces to controlling the fluctuations of its corrected numerators and denominators. We now study those fluctuations under the audit design, conditional on the realized finite population. The linear $pq$ terms are standard Horvitz--Thompson corrections for finite-population totals, so classical finite-population central limit theory applies. The quadratic $qq$ terms are more delicate: in addition to a linear Horvitz--Thompson component, they contain a second-order correction over sampled ordered pairs. We therefore analyze these two cases separately before combining them into a joint limit theorem.

For any finite-population sequence $\{g_j\}_{j=1}^N$, define
\[
\bar g := \frac{1}{N}\sum_{j=1}^N g_j,
\qquad
S_g^2 := \frac{1}{N-1}\sum_{j=1}^N (g_j-\bar g)^2.
\]

\begin{theorem}[Design-based CLT for the linear $pq$ terms]
\label{thm:design_clt_pq}
Under Assumptions~\ref{ass:kernel}, \ref{ass:srs}, \ref{ass:asymptotic_regime}, and \ref{ass:linear_variance}, condition on the realized samples $\{X_i\}_{i=1}^n$ and $\{(W_j,I_j,\widetilde I_j)\}_{j=1}^N$. Define
\[
g_j := \sum_{i=1}^n (I_j-\widetilde I_j)\,k(X_i,W_j),
\qquad
u_j := I_j-\widetilde I_j.
\]
Then, as $N,m\to\infty$ with $m/N\to \rho\in(0,1]$,
\begin{align}
\frac{\sqrt{m}}{nN}\,(\widehat A_{pq}-A_{pq}^\star)
&\Rightarrow
\mathcal{N}\bigl(0,(1-\rho)v_{pq}\bigr), \\
\frac{\sqrt{m}}{nN}\,(\widehat B_{pq}-B_{pq}^\star)
&\Rightarrow
\mathcal{N}\bigl(0,(1-\rho)w_{pq}\bigr).
\end{align}
In particular, both corrected $pq$ sums fluctuate at the $m^{-1/2}nN$ scale under the audit design.
\end{theorem}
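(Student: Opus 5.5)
The plan is to reduce both statements to the classical finite-population CLT for the sample mean under simple random sampling without replacement (the Hájek / Erdős--Rényi theorem). By definition of $\widehat A_{pq}$ in Equation~\eqref{eq:Apq_hat} and the linear identity in Equation~\eqref{eq:I_identity},
\[
\widehat A_{pq}-A_{pq}^\star=\frac{N}{m}\sum_{j\in L} g_j-\sum_{j=1}^N g_j = N\bigl(\bar g_L-\bar g\bigr),
\]
where $\bar g_L:=m^{-1}\sum_{j\in L}g_j$. Dividing by $nN$ gives $\tilde g_L-\bar{\tilde g}$, the deviation of the sample mean of $\tilde g_j=g_j/n$ from its population mean. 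Likewise, by Equation~\eqref{eq:Bpq_hat}, $(\widehat B_{pq}-B_{pq}^\star)/(nN)=\bar u_L-\bar u$. So each statement is exactly a centered SRSWOR sample mean of a fixed finite-population array, conditional on the realized samples. Theorem~\ref{thm:unbiased_sums} already gives the centering.

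The first step is the variance calculation. The standard SRSWOR formula gives $\mathrm{Var}_L(\bar{\tilde g}_L)=(1-m/N)S_{\tilde g}^2/m$. Hence $m\,\mathrm{Var}_L(\tilde g_L-\bar{\tilde g})=(1-m/N)S_{\tilde g}^2\to(1-\rho)v_{pq}$ by Assumptions~\ref{ass:asymptotic_regime} and~\ref{ass:linear_variance}. The same computation for $u$ gives $(1-\rho)w_{pq}$.

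The second step is to verify Hájek's Lindeberg-type condition for the finite-population CLT. For bounded arrays it is immediate from Assumption~\ref{ass:kernel}. We have $|u_j|\le 1$, since $I_j,\widetilde I_j\in[0,1]$, and $|\tilde g_j|\le K$. Thus $\max_j(\tilde g_j-\bar{\tilde g})^2\le 4K^2$. The sufficient condition
\[
\frac{\max_j(\tilde g_j-\bar{\tilde g})^2}{\min(m,N-m)\,S_{\tilde g}^2}\to 0
\]
then holds whenever $v_{pq}>0$ and $\min(m,N-m)\to\infty$, which is the case when $\rho<1$. The theorem then yields $\sqrt m(\tilde g_L-\bar{\tilde g})\Rightarrow\mathcal N(0,(1-\rho)v_{pq})$, and similarly for $u$.

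The main obstacle is handling the degenerate cases cleanly.

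\begin{itemize}
\item \emph{Zero limiting variance.} If $v_{pq}=0$ (or $w_{pq}=0$), or if $\rho=1$, the limit is the point mass at zero. This case follows directly from Chebyshev's inequality, since the scaled variance tends to $0$.
\item \emph{$\rho=1$ with $N-m$ bounded.} Here the Lindeberg denominator does not diverge. The variance bound still settles the matter, because $(1-m/N)\to0$.
\item \emph{Triangular array.} The population changes with $N$ (and with $n$, through $\tilde g_j$), so the Hájek theorem must be applied in its triangular-array form. Conditioning on the realized samples makes each array deterministic, so that form applies directly.
\end{itemize}

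Joint convergence of the two coordinates is not claimed here. If needed, it would follow by applying the same argument to $a\tilde g_j+bu_j$ and invoking the Cramér--Wold device.
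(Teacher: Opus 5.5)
Your proposal is correct and takes essentially the same route as the paper. Both arguments rewrite $(\widehat A_{pq}-A_{pq}^\star)/(nN)$ and $(\widehat B_{pq}-B_{pq}^\star)/(nN)$ as centered SRSWOR sample means of the bounded arrays $\tilde g_j$ and $u_j$, use the variance $(1-m/N)S^2/m$, apply the finite-population CLT when $\rho<1$ and the limiting variance is positive, and use Chebyshev's inequality when $\rho=1$ or $v_{pq}=0$ (resp.\ $w_{pq}=0$); your explicit check of the H\'ajek-type condition $\max_j(\tilde g_j-\bar{\tilde g})^2/\{\min(m,N-m)\,S_{\tilde g}^2\}\to 0$ spells out what the paper covers by citing the standard finite-population CLT for a uniformly bounded population.
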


\begin{proof}[Proof of Theorem~\ref{thm:design_clt_pq}]
From Equation~\eqref{eq:Apq_hat},
\[
\widehat A_{pq}
=
\sum_{i=1}^n\sum_{j=1}^N \widetilde I_j k(X_i,W_j)
+
\frac{N}{m}\sum_{j\in L}\sum_{i=1}^n (I_j-\widetilde I_j)k(X_i,W_j).
\]
Since
\[
A_{pq}^\star
=
\sum_{i=1}^n\sum_{j=1}^N I_j k(X_i,W_j)
=
\sum_{i=1}^n\sum_{j=1}^N \widetilde I_j k(X_i,W_j)
+
\sum_{j=1}^N g_j,
\]
we obtain
\[
\widehat A_{pq}-A_{pq}^\star
=
\frac{N}{m}\sum_{j\in L} g_j - \sum_{j=1}^N g_j.
\]
Equivalently, with $\tilde g_j := g_j/n$,
\[
\frac{1}{nN}(\widehat A_{pq}-A_{pq}^\star)
=
\frac{1}{m}\sum_{j\in L}\tilde g_j - \frac{1}{N}\sum_{j=1}^N \tilde g_j.
\]
Thus, $\frac{1}{nN}(\widehat A_{pq}-A_{pq}^\star)$ is exactly the Horvitz--Thompson error for the finite-population mean $\bar{\tilde g} = \frac{1}{N}\sum_{j=1}^N \tilde g_j$ under simple random sampling without replacement. Standard sampling theory gives
\[
\mathrm{Var}_L\!\left(\frac{1}{m}\sum_{j\in L}\tilde g_j\right)
=
\Bigl(1-\frac{m}{N}\Bigr)\frac{S_{\tilde g}^2}{m}.
\]
Under Assumption~\ref{ass:kernel}, $|k|\le K$ and $|I_j-\widetilde I_j|\le 1$, so
\[
|\tilde g_j| \leq K.
\]

Hence the finite population is uniformly bounded. When $0<\rho<1$ and $v_{pq}>0$, the first claim follows from the standard finite-population CLT \citep{fuller2011sampling,cochran1977sampling} and Assumption~\ref{ass:linear_variance}:
\[
\frac{\frac{1}{m}\sum_{j\in L} \tilde g_j - \bar{\tilde g}}
{\sqrt{(1-m/N)S_{\tilde g}^2/m}}
\Rightarrow \mathcal{N}(0,1).
\]
If $\rho=1$ or $v_{pq}=0$, the same claim follows instead from Chebyshev's inequality, since the centered quantity $\sqrt m(\widehat A_{pq}-A_{pq}^\star)/(nN)$ has variance $(1-m/N)S_{\tilde g}^2\to0$.

For $\widehat B_{pq}$, Equation~\eqref{eq:Bpq_hat} gives
\[
\frac{1}{nN}(\widehat B_{pq}-B_{pq}^\star)
=
\frac{1}{m}\sum_{j\in L} u_j - \frac{1}{N} \sum_{j=1}^N u_j,
\]
where $u_j=I_j-\widetilde I_j$ and $|u_j|\le 1$. The same argument (using the CLT or Chebyshev's inequality, as appropriate), with $u_j$ in place of $\tilde g_j$ and $S_u^2\to w_{pq}$, therefore applies, with variance
\[
\Bigl(1-\frac{m}{N}\Bigr)\frac{S_u^2}{m},
\]
which yields the second display after multiplying by $\sqrt m$ and using $S_u^2\to w_{pq}$.
\end{proof}

Theorem~\ref{thm:design_clt_pq} shows that the linear corrections behave exactly as in classical survey sampling: if the audit fraction vanishes, the correction becomes noisy, whereas a proportional audit budget $m\asymp N$ preserves the usual $N^{-1/2}$ scale. This is directly relevant for CC-MMD because the $pq$ component is one of the two nontrivial ratio terms, and Lemma~\ref{lem:ratio_linearization} turns the CLT above into first-order control of the corresponding ratio error.

We next consider the quadratic $qq$ terms. Unlike the $pq$ terms, these contain both a linear Horvitz--Thompson correction and a second-order sampled-pair correction. The latter is the only place where we invoke the additional high-level Gaussian-limit assumption from the appendix.

\begin{theorem}[Design-based asymptotic normality for the $qq$ corrections]
\label{thm:design_clt_qq}
Under Assumptions~\ref{ass:kernel}, \ref{ass:srs}, \ref{ass:asymptotic_regime}, \ref{ass:linear_variance_qq}, and \ref{ass:pair_clt}, condition on the realized samples $\{X_i\}_{i=1}^n$ and $\{(W_j,I_j,\widetilde I_j)\}_{j=1}^N$. Then
\[
\frac{\sqrt m}{N^2}\,(\widehat A_{qq}-A_{qq}^\star)
\Rightarrow \mathcal N(0,\sigma_{qq}^2),
\qquad
\frac{\sqrt m}{N^2}\,(\widehat B_{qq}-B_{qq}^\star)
\Rightarrow \mathcal N(0,\tau_{qq}^2),
\]
for finite design-based variances $\sigma_{qq}^2,\tau_{qq}^2\in[0,\infty)$, with possibly degenerate limits when the corresponding asymptotic variances vanish.
\end{theorem}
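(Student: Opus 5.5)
The plan is to decompose the centered error $\widehat A_{qq}-A_{qq}^\star$ into a linear Horvitz--Thompson part and a sampled-pair part, mirroring the expansion in the proof of Theorem~\ref{thm:unbiased_sums}. Write $K_{jj'}=k(W_j,W_{j'})$. Subtracting the four-term expansion of $A_{qq}^\star$ from the definition of $\widehat A_{qq}$ cancels the deterministic $\widetilde I\widetilde I$ term. Since $K$ is symmetric, the two cross terms combine into a single sum over units:
\[
\frac{\widehat A_{qq}-A_{qq}^\star}{N^2}
= \frac{N-1}{2N}\Bigl(\frac{1}{m}\sum_{j\in L}\tilde a_j-\frac{1}{N}\sum_{j=1}^N\tilde a_j\Bigr)
+\frac{1}{N^2}\Bigl(\frac{N(N-1)}{m(m-1)}\sum_{j\neq j'\in L}h_{jj'}-\sum_{j\neq j'}h_{jj'}\Bigr),
\]
where $\tilde a_j$ is as in Assumption~\ref{ass:linear_variance_qq} and $h_{jj'}=\Delta_j\Delta_{j'}K_{jj'}$. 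The same identity with $K\equiv1$ gives the analogous decomposition for $\widehat B_{qq}$, with $\tilde a^{(B)}_j$ in place of $\tilde a_j$.

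The first (linear) piece is treated exactly as in Theorem~\ref{thm:design_clt_pq}. The population $\tilde a_j$ is uniformly bounded, since $|\tilde a_j|\le 2K$ because $|\Delta_j|\le1$ and $|\widetilde I_{j'}|\le 1$. Its design variance is $(1-m/N)S_{\tilde a}^2/m$. By the finite-population CLT and Assumption~\ref{ass:linear_variance_qq}, $\sqrt m$ times this piece converges to $\mathcal N\bigl(0,\tfrac14(1-\rho)v_{qq}\bigr)$, using $(N-1)/(2N)\to 1/2$. In the degenerate cases ($\rho=1$ or $v_{qq}=0$), Chebyshev's inequality gives convergence to $0$ instead.

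The second (pair) piece, normalized by $N^{-2}$, is the centered incomplete U-statistic $\frac{1}{m(m-1)}\sum_{j\neq j'\in L}h_{jj'}-\frac{1}{N(N-1)}\sum_{j\neq j'}h_{jj'}$, up to a factor $(N-1)/N\to1$. Assumption~\ref{ass:pair_clt} supplies directly a Gaussian limit at the $\sqrt m$ scale for this piece, jointly with the centered linear corrections. Alternatively, one can see why this is plausible through a Hoeffding-type decomposition under SRSWOR. The pair average splits into a linear projection $\frac{2}{m}\sum_{j\in L}(\bar h_j-\bar h)$, which contributes at order $m^{-1/2}$, plus a degenerate remainder. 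By boundedness of $h$, the remainder has variance $O(m^{-2})$ and therefore vanishes after multiplying by $\sqrt m$.

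I would combine the two pieces using the joint convergence in Assumption~\ref{ass:pair_clt}. Joint convergence plus the continuous mapping theorem (for the sum) gives $\sqrt m(\widehat A_{qq}-A_{qq}^\star)/N^2\Rightarrow\mathcal N(0,\sigma_{qq}^2)$. The variance $\sigma_{qq}^2$ includes the linear variance, the pair variance and twice their covariance, and it is finite by boundedness. The statement allows $\sigma_{qq}^2=0$, in which case the limit is the point mass at zero. The $\widehat B_{qq}$ claim is the identical argument with $K\equiv1$, yielding $\tau_{qq}^2$.

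The main obstacle is the pair term. Its CLT is not a classical result and must be taken from Assumption~\ref{ass:pair_clt}. The real care is in checking that the normalizations match, since the assumption is stated at scale $1/N^2$ while the HT weight is $N(N-1)/(m(m-1))$. One must also rely on joint rather than marginal convergence, so that the sum of the two pieces has a Gaussian limit.
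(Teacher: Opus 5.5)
Your argument is the same as the paper's proof. You merge the two cross terms by symmetry of $k$ into one linear Horvitz--Thompson error, treat it with the finite-population CLT under Assumption~\ref{ass:linear_variance_qq}, take the sampled-pair error from Assumption~\ref{ass:pair_clt}, and add the two pieces via the joint convergence and the continuous mapping theorem, using $k\equiv 1$ for $\widehat B_{qq}$.

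There is one arithmetic slip, and it does not affect the conclusion, since the theorem only asserts the existence of some finite $\sigma_{qq}^2$. Because $\tilde a_j$ already carries the factor $2$, the merged linear term equals $N(N-1)\bigl(\frac1m\sum_{j\in L}\tilde a_j-\frac1N\sum_{j}\tilde a_j\bigr)$. After dividing by $N^2$ the prefactor is therefore $\frac{N-1}{N}\to 1$, not $\frac{N-1}{2N}$. The linear-part variance is then $(1-\rho)v_{qq}$, as in the paper, rather than $\tfrac14(1-\rho)v_{qq}$.
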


\begin{proof}
We first treat $\widehat A_{qq}-A_{qq}^\star$. Recall
\[
\Delta_j := I_j-\widetilde I_j,
\]
and the identity
\[
I_jI_{j'} =
  \widetilde I_j\widetilde I_{j'}
+ \Delta_j\widetilde I_{j'}
+ \widetilde I_j\Delta_{j'}
+ \Delta_j\Delta_{j'},
\qquad j\neq j',
\]
the oracle sum expands as
\begin{align}
A_{qq}^\star
&= \sum_{j\neq j'} I_jI_{j'}\,k(W_j,W_{j'}) \nonumber\\
&= \sum_{j\neq j'} \widetilde I_j\widetilde I_{j'}\,k(W_j,W_{j'})
+ \sum_{j\neq j'} \Delta_j\widetilde I_{j'}\,k(W_j,W_{j'}) \nonumber\\
&\quad + \sum_{j\neq j'} \widetilde I_j\Delta_{j'}\,k(W_j,W_{j'})
+ \sum_{j\neq j'} \Delta_j\Delta_{j'}\,k(W_j,W_{j'}).
\label{eq:Aqq_oracle_expand_revised}
\end{align}
Likewise, by definition of $\widehat A_{qq}$,
\begin{align}
\widehat A_{qq}
&=
\sum_{j\neq j'} \widetilde I_j\widetilde I_{j'}\,k(W_j,W_{j'})
+
\frac{N}{m}\sum_{j\in L}\sum_{j'\neq j} \Delta_j\widetilde I_{j'}\,k(W_j,W_{j'}) \nonumber\\
&\qquad+
\frac{N}{m}\sum_{j'\in L}\sum_{j\neq j'} \widetilde I_j\Delta_{j'}\,k(W_j,W_{j'})
+
\frac{N(N-1)}{m(m-1)}
\sum_{\substack{j,j'\in L\\j\neq j'}}
\Delta_j\Delta_{j'}\,k(W_j,W_{j'}).
\label{eq:Aqq_hat_expand_revised}
\end{align}
Subtracting \eqref{eq:Aqq_oracle_expand_revised} from \eqref{eq:Aqq_hat_expand_revised}, the proxy-only term cancels, yielding
\begin{align}
\widehat A_{qq}-A_{qq}^\star
&=
\left[
\frac{N}{m}\sum_{j\in L}\sum_{j'\neq j} \Delta_j\widetilde I_{j'}\,k(W_j,W_{j'})
-
\sum_{j\neq j'} \Delta_j\widetilde I_{j'}\,k(W_j,W_{j'})
\right]
\nonumber\\
&\qquad+
\left[
\frac{N}{m}\sum_{j'\in L}\sum_{j\neq j'} \widetilde I_j\Delta_{j'}\,k(W_j,W_{j'})
-
\sum_{j\neq j'} \widetilde I_j\Delta_{j'}\,k(W_j,W_{j'})
\right]
\nonumber\\
&\qquad+
\left[
\frac{N(N-1)}{m(m-1)}
\sum_{\substack{j,j'\in L\\j\neq j'}}
\Delta_j\Delta_{j'}\,k(W_j,W_{j'})
-
\sum_{j\neq j'} \Delta_j\Delta_{j'}\,k(W_j,W_{j'})
\right].
\label{eq:Aqq_diff_three_terms_revised}
\end{align}

Define the per-unit contribution
\begin{equation}
\label{eq:a_j_def_revised}
{
a_j
:=
2\sum_{j'\neq j} \Delta_j\widetilde I_{j'}\,k(W_j,W_{j'}),
}
\end{equation}
and the ordered-pair contribution
\begin{equation}
\label{eq:b_jj_def_revised}
b_{jj'}
:=
\Delta_j\Delta_{j'}\,k(W_j,W_{j'}),
\qquad j\neq j'.
\end{equation}
Then
\begin{equation}
\label{eq:Aqq_linear_pair_decomp_revised}
\widehat A_{qq}-A_{qq}^\star
=
\underbrace{\left(
\frac{N}{m}\sum_{j\in L} a_j - \sum_{j=1}^N a_j
\right)}_{=:U_{A,N}}
+
\underbrace{\left(
\frac{N(N-1)}{m(m-1)}
\sum_{\substack{j,j'\in L\\j\neq j'}}
b_{jj'}
-
\sum_{j\neq j'} b_{jj'}
\right)}_{=:V_{A,N}}.
\end{equation}

The first term $U_{A,N}$ is an ordinary Horvitz--Thompson error for the finite-population total $\sum_{j=1}^N a_j$. 
Define the rescaled per-unit contribution
\[
\tilde a_j := \frac{a_j}{N-1},
\]
so that, after dividing by $N(N-1)$,
\[
\frac{1}{N(N-1)}\,U_{A,N}
=
\frac{1}{m}\sum_{j\in L}\tilde a_j - \frac{1}{N}\sum_{j=1}^N \tilde a_j.
\]
This is the standard centered sample-mean form of a Horvitz--Thompson estimator for the finite-population mean $\bar{\tilde a} := N^{-1}\sum_{j=1}^N \tilde a_j$ under simple random sampling without replacement.

Under Assumption~\ref{ass:kernel},
\[
|\tilde a_j|
\le
\frac{2}{N-1} \sum_{j'\neq j}|\Delta_j|\,|\widetilde I_{j'}|\,|k(W_j,W_{j'})|
\le 2K.
\]

Thus the linear component is uniformly bounded after normalization. Standard sampling theory gives
\[
\Var_L\!\left(\frac{1}{m}\sum_{j\in L}\tilde a_j\right)
=
\Bigl(1-\frac{m}{N}\Bigr)\frac{S_{\tilde a}^2}{m},
\qquad
S_{\tilde a}^2 := \frac{1}{N-1}\sum_{j=1}^N(\tilde a_j-\bar{\tilde a})^2.
\]
Assumption~\ref{ass:linear_variance_qq} provides $S_{\tilde a}^2\to v_{qq}\in[0,\infty)$. 

A finite-population CLT for simple random sampling therefore yields
\[
\sqrt m\,\Bigl(\frac{1}{m}\sum_{j\in L}\tilde a_j - \bar{\tilde a}\Bigr)
\Rightarrow \mathcal N\bigl(0,(1-\rho)v_{qq}\bigr).
\]

Multiplying $U_{A,N}$ by $\sqrt m$ and rescaling by $N(N-1)$, since $N(N-1)/N^2\to 1$, by Slutsky's Theorem we have:
\begin{equation}
\label{eq:UA_qq_scaled}
\frac{\sqrt m}{N^2}\,U_{A,N}
\Rightarrow \mathcal N\bigl(0,\sigma_{A,\mathrm{lin}}^2\bigr),
\qquad
\sigma_{A,\mathrm{lin}}^2 := (1-\rho)v_{qq}.
\end{equation}

The second term $V_{A,N}$ is a second-order Horvitz--Thompson correction over sampled ordered pairs. Again by Assumption~\ref{ass:kernel},
\[
|b_{jj'}|
=
|\Delta_j\Delta_{j'}k(W_j,W_{j'})|
\le K.
\]
Assumption~\ref{ass:pair_clt} now implies that this sampled-pair term satisfies a design-based Gaussian limit at the $\frac{\sqrt m}{N^2}$ scale:
\[
\frac{\sqrt m}{N^2}\,V_{A,N}\Rightarrow \mathcal N(0,\sigma_{A,\mathrm{pair}}^2)
\]
for some finite $\sigma_{A,\mathrm{pair}}^2\in[0,\infty)$.

By the joint version of Assumption~\ref{ass:pair_clt}, the vector
\[
\left( \frac{\sqrt m}{N^2}\,U_{A,N},\frac{\sqrt m}{N^2}\,V_{A,N} \right)
\]
is asymptotically jointly Gaussian. Since
\[
\frac{\sqrt m}{N^2}\,(\widehat A_{qq}-A_{qq}^\star)
=
\frac{\sqrt m}{N^2}\,U_{A,N}+\frac{\sqrt m}{N^2}\,V_{A,N},
\]
the continuous mapping theorem implies
\[
\frac{\sqrt m}{N^2}\,(\widehat A_{qq}-A_{qq}^\star)
\Rightarrow
\mathcal N(0,\sigma_{qq}^2),
\]
where
\[
\sigma_{qq}^2
=
\Var(Z_{A,\mathrm{lin}}+Z_{A,\mathrm{pair}})
\]
for the limiting jointly Gaussian components.

The proof for $\widehat B_{qq}-B_{qq}^\star$ is identical with the kernel replaced by the constant function $k\equiv 1$. Specifically, define
\[
a_j^{(B)}
:=
2\sum_{j'\neq j} \Delta_j\widetilde I_{j'},
\qquad
b_{jj'}^{(B)}
:=
\Delta_j\Delta_{j'},
\]
and the rescaled per-unit contribution $\tilde a_j^{(B)} := a_j^{(B)}/(N-1)$, so that
\[
\widehat B_{qq}-B_{qq}^\star
=
\left(\frac{N}{m}\sum_{j\in L} a_j^{(B)}-\sum_{j=1}^N a_j^{(B)}\right)
+
\left(
\frac{N(N-1)}{m(m-1)}
\sum_{\substack{j,j'\in L\\j\neq j'}}
b_{jj'}^{(B)}
-
\sum_{j\neq j'} b_{jj'}^{(B)}
\right).
\]
Since $|\Delta_j|\le 1$ and $0\le \widetilde I_j\le 1$,
\[
|a_j^{(B)}|\le 2(N-1),
\qquad
|b_{jj'}^{(B)}|\le 1.
\]

Therefore,
\[
|\tilde a_j^{(B)}|\le 2.
\]
Applying the same argument, with $S_{\tilde a^{(B)}}^2\to w_{qq}\in[0,\infty)$ from Assumption~\ref{ass:linear_variance_qq} and the corresponding rescaled pair-CLT from Assumption~\ref{ass:pair_clt}, yields
\[
\frac{\sqrt m}{N^2}\,(\widehat B_{qq}-B_{qq}^\star)
\Rightarrow
\mathcal N(0,\tau_{qq}^2)
\]
for some finite $\tau_{qq}^2\in[0,\infty)$.
\end{proof}

Theorem~\ref{thm:design_clt_qq} shows that the quadratic correction remains asymptotically Gaussian despite the presence of a sampled-pair term. This matters because the $qq$ contribution is the only genuinely second-order component of CC-MMD. Together with Theorem~\ref{thm:design_clt_pq}, it shows that every corrected sum entering the estimator fluctuates at a scale proportional to $m^{-1/2}$ relative to its natural size ($N^2$ and $nN$, respectively). This ensures the final ratio estimator achieves the standard $m^{-1/2}$ convergence rate.

\begin{proof}[Proof of Theorem~\ref{thm:design_clt_full}.]
Theorem~\ref{thm:design_clt_pq} gives asymptotic normality of the linear $pq$ corrections, and Theorem~\ref{thm:design_clt_qq} gives asymptotic normality of the quadratic $qq$ corrections. By Assumption~\ref{ass:pair_clt}, the centered linear and sampled-pair components entering these four quantities converge jointly to a multivariate Gaussian limit at the $\sqrt m$ scale, after normalization by $nN$ for the $pq$ components and by $N^2$ for the $qq$ components. The displayed vector is obtained from those components by fixed linear combinations. The claim therefore follows from the Cram\'er--Wold device \citep{van2000asymptotic}.
\end{proof}

Theorem~\ref{thm:design_clt_full} is the design-based input needed for asymptotic analysis of CC-MMD itself. Combined with Lemma~\ref{lem:ratio_linearization}, it implies that the discrepancy between CC-MMD and the oracle estimator is asymptotically Gaussian at the $\sqrt m$ scale, provided the oracle and corrected denominators remain nondegenerate. In particular, when the audit fraction satisfies $m/N\to\rho>0$, the correction does not introduce a slower rate than the usual finite-sample fluctuations of empirical MMD.

\begin{proof}[Proof of Theorem~\ref{thm:cc_to_oracle}.]
Since the $pp$ term is observed exactly in both estimators,
\[
\widehat{\mathrm{MMD}}^2_{\mathrm{CC}}-\widehat{\mathrm{MMD}}^2_\star
=
\left(\frac{\widehat A_{qq}}{\widehat B_{qq}}-\frac{A_{qq}^\star}{B_{qq}^\star}\right)
-
2\left(\frac{\widehat A_{pq}}{\widehat B_{pq}}-\frac{A_{pq}^\star}{B_{pq}^\star}\right).
\]
Thus it suffices to control the two ratio differences.

Apply Lemma~\ref{lem:ratio_linearization} to the $qq$ ratio with
\[
A=\frac{A_{qq}^\star}{N^2},\quad
B=\frac{B_{qq}^\star}{N^2},\quad
\widehat A=\frac{\widehat A_{qq}}{N^2},\quad
\widehat B=\frac{\widehat B_{qq}}{N^2},
\]
and to the $pq$ ratio with
\[
A=\frac{A_{pq}^\star}{nN},\quad
B=\frac{B_{pq}^\star}{nN},\quad
\widehat A=\frac{\widehat A_{pq}}{nN},\quad
\widehat B=\frac{\widehat B_{pq}}{nN}.
\]
By Assumption~\ref{ass:mixture_model} with $\epsilon<1$, $N_q^\star/N\to 1-\epsilon$ a.s., so the denominators satisfy $B_{qq}^\star=\Theta_{\mathbb P}(N^2)$ and $B_{pq}^\star=\Theta_{\mathbb P}(nN)$, while boundedness of $k$ gives $A_{qq}^\star=O_{\mathbb P}(N^2)$ and $A_{pq}^\star=O_{\mathbb P}(nN)$.
Under Theorem~\ref{thm:design_clt_full},
\[
\widehat A-A=O_{\mathbb P_L}(m^{-1/2}),\qquad
\widehat B-B=O_{\mathbb P_L}(m^{-1/2}).
\]

By Assumption~\ref{ass:denom_nondeg}, the oracle denominators are strictly positive and the corrected denominators are bounded away from zero with probability tending to one. Therefore, the coefficients appearing in Lemma~\ref{lem:ratio_linearization} are well defined and $O_{\mathbb P_L}(1)$. It follows that each leading linear term is $O_{\mathbb P_L}(m^{-1/2})$, while the remainder term is
\[
O_{\mathbb P_L}\!\left(
|\widehat A-A|\,|\widehat B-B| + |\widehat B-B|^2
\right)
=
O_{\mathbb P_L}(m^{-1}),
\]
since both $|\widehat A-A|$ and $|\widehat B-B|$ are $O_{\mathbb P_L}(m^{-1/2})$. 
Hence
\[
\frac{\widehat A_{qq}}{\widehat B_{qq}}-\frac{A_{qq}^\star}{B_{qq}^\star}
=
O_{\mathbb P_L}(m^{-1/2}),
\qquad
\frac{\widehat A_{pq}}{\widehat B_{pq}}-\frac{A_{pq}^\star}{B_{pq}^\star}
=
O_{\mathbb P_L}(m^{-1/2}),
\]
and therefore
\[
\widehat{\mathrm{MMD}}^2_{\mathrm{CC}}-\widehat{\mathrm{MMD}}^2_\star
=
O_{\mathbb P_L}(m^{-1/2}).
\]
This proves Equation~\eqref{eq:cc_oracle_consistency}.

For the asymptotic normality statement, define
\[
U_m:=
\bigl(
\frac{\widehat A_{qq}}{N^2},\frac{\widehat B_{qq}}{N^2},\frac{\widehat A_{pq}}{nN},\frac{\widehat B_{pq}}{nN}
\bigr),
\qquad
U^\star:=
\bigl(
\frac{A_{qq}^\star}{N^2},\frac{B_{qq}^\star}{N^2},\frac{A_{pq}^\star}{nN},\frac{B_{pq}^\star}{nN}
\bigr).
\]
By Theorem~\ref{thm:design_clt_full},
\[
\sqrt m\,(U_m-U^\star)\Rightarrow \mathcal N(0,{\Sigma}_{\mathrm{design}})
\]
for some finite covariance matrix $\Sigma_{\mathrm{design}}$. Now define the smooth map
\[
\phi(a,b,c,d):= \frac{a}{b}-2\frac{c}{d}.
\]
Then
\[
\widehat{\mathrm{MMD}}^2_{\mathrm{CC}}-\widehat{\mathrm{MMD}}^2_\star
= \phi(U_m)-\phi(U^\star).
\]
Since $b$ and $d$ are bounded away from zero on the event guaranteed by Assumption~\ref{ass:denom_nondeg}, the map $\phi$ is continuously differentiable in a neighborhood of $U^\star$. Its gradient is
\[
\nabla\phi(a,b,c,d)
=
\left( \frac{1}{b},\; -\frac{a}{b^2},\; -\frac{2}{d},\; \frac{2c}{d^2} \right).
\]

By the law of large numbers for bounded oracle kernel
averages, $U^\star$ converges almost surely to a limit
with positive denominator coordinates.
Therefore, by the multivariate delta method,
\[
\sqrt m\,\bigl(\phi(U_m)-\phi(U^\star)\bigr)
\Rightarrow
\mathcal N(0,V_{\mathrm{design}}).
\]
This is exactly Equation~\eqref{eq:cc_oracle_clt}, with
\[
V_{\mathrm{design}}
:= \lim_{m\to\infty}\nabla\phi(U^\star)^\top {\Sigma}_{\mathrm{design}} \nabla\phi(U^\star).
\]
\end{proof}

\begin{proof}[Proof of Theorem~\ref{thm:consistency_population}.]
Decompose
\[
\widehat{\mathrm{MMD}}^2_{\mathrm{CC}}-\mathrm{MMD}^2(p,q;k)
=
\underbrace{\Bigl(\widehat{\mathrm{MMD}}^2_{\mathrm{CC}}-\widehat{\mathrm{MMD}}^2_\star\Bigr)}_{(\mathrm{I})}
+
\underbrace{\Bigl(\widehat{\mathrm{MMD}}^2_\star-\mathrm{MMD}^2(p,q;k)\Bigr)}_{(\mathrm{II})}.
\]

We first treat term $(\mathrm{I})$. By Theorem~\ref{thm:cc_to_oracle}, conditional on the realized samples,
\[
\widehat{\mathrm{MMD}}^2_{\mathrm{CC}}-\widehat{\mathrm{MMD}}^2_\star
\xrightarrow{\mathbb P_L} 0
\qquad\text{as } m\to\infty.
\]
Since Assumption~\ref{ass:asymptotic_regime} implies $m\to\infty$, it follows that
\[
\widehat{\mathrm{MMD}}^2_{\mathrm{CC}}-\widehat{\mathrm{MMD}}^2_\star
\xrightarrow{\mathbb P} 0
\]
under the full probability law. Thus $(\mathrm{I})\to 0$ in probability.

We next treat term $(\mathrm{II})$. Let
\[
N_q^\star := \sum_{j=1}^N I_j
\]
denote the number of genuine $q$-points in the contaminated pool. Under Assumption~\ref{ass:mixture_model}, the indicators $I_j$ are i.i.d.\ $\mathrm{Bernoulli}(1-\epsilon)$, so by the strong law of large numbers,
\[
\frac{N_q^\star}{N}\xrightarrow{\mathrm{a.s.}} 1-\epsilon.
\]
Because $\epsilon<1$, this implies
\[
N_q^\star \xrightarrow{\mathbb P} \infty,
\qquad
\mathbb P(N_q^\star\ge 2)\to 1.
\]
Hence the oracle denominators are eventually well defined with probability tending to one.

Conditional on the event $\{N_q^\star=r\}$, the retained points $\{W_j:I_j=1\}$ are i.i.d.\ draws from $q$. Therefore, conditional on $N_q^\star$, the oracle $qq$ term is the usual ratio-form U-statistic based on an i.i.d.\ sample of size $r$ from $q$:
\[
\frac{A_{qq}^\star}{B_{qq}^\star}
=
\frac{1}{N_q^\star(N_q^\star-1)}
\sum_{j\neq j'} I_jI_{j'}\,k(W_j,W_{j'}).
\]
Under Assumption~\ref{ass:kernel}, the kernel is bounded, so the law of large numbers for U-statistics yields
\[
\frac{A_{qq}^\star}{B_{qq}^\star}
\xrightarrow{\mathbb P}
\mathbb E[k(Y,Y')],
\qquad Y,Y'\overset{iid}{\sim} q.
\]

Likewise,
\[
\frac{A_{pp}^\star}{B_{pp}^\star}
=
\frac{1}{n(n-1)}\sum_{i\neq i'} k(X_i,X_{i'}),
\]
which is the standard U-statistic estimator of $\mathbb E[k(X,X')]$ based on $X_1,\dots,X_n\overset{iid}{\sim} p$. Again by the law of large numbers for U-statistics under bounded kernels,
\[
\frac{A_{pp}^\star}{B_{pp}^\star}
\xrightarrow{\mathbb P}
\mathbb E[k(X,X')],
\qquad X,X'\overset{iid}{\sim} p.
\]

For the cross term, conditional on $N_q^\star$, the oracle ratio can be written as
\[
\frac{A_{pq}^\star}{B_{pq}^\star}
=
\frac{1}{nN_q^\star}
\sum_{i=1}^n\sum_{j=1}^N I_j\,k(X_i,W_j).
\]
Equivalently, if $Y_1,\dots,Y_{N_q^\star}\overset{iid}{\sim} q$ denote the retained target points, then
\[
\frac{A_{pq}^\star}{B_{pq}^\star}
=
\frac{1}{nN_q^\star}\sum_{i=1}^n\sum_{\ell=1}^{N_q^\star} k(X_i,Y_\ell).
\]
Since $X_i\sim p$ and $Y_\ell\sim q$ are independent, boundedness of $k$ implies by the law of large numbers for averages of independent product samples that
\[
\frac{A_{pq}^\star}{B_{pq}^\star}
\xrightarrow{\mathbb P}
\mathbb E[k(X,Y)],
\qquad X\sim p,\;Y\sim q,\;X\perp Y.
\]

Combining the three convergences, we obtain
\[
\widehat{\mathrm{MMD}}^2_\star
= \frac{A_{pp}^\star}{B_{pp}^\star} + \frac{A_{qq}^\star}{B_{qq}^\star} - 2\frac{A_{pq}^\star}{B_{pq}^\star}
\xrightarrow{\mathbb P}
\mathrm{MMD}^2(p,q;k).
\]
Thus $(\mathrm{II})\to 0$ in probability.

Since both $(\mathrm{I})$ and $(\mathrm{II})$ converge to zero in probability, Slutsky's theorem yields
\[
\widehat{\mathrm{MMD}}^2_{\mathrm{CC}} \xrightarrow{\mathbb P} \mathrm{MMD}^2(p,q;k),
\]
which proves Equation~\eqref{eq:cc_consistency_population}.
\end{proof}

The result also clarifies the role of the assumptions. Assumptions~\ref{ass:srs}, \ref{ass:linear_variance}, and \ref{ass:pair_clt} control the audit correction, whereas Assumption~\ref{ass:mixture_model} is what turns the retained latent target points into an i.i.d.\ sample from $q$. Without that link, one could still recover the oracle finite-sample estimator, but there would be no reason for the oracle estimator to target $\mathrm{MMD}^2(p,q;k)$.

\section{Algorithm}
\label{sec:algorithm}

Algorithm~\ref{alg:pp_mmd_contam} computes a contamination-corrected estimate of $\mathrm{MMD}^2$ when the reference sample ${X}_{i=1}^n$ is clean, but the comparison sample is observed only through a contaminated pool ${W}_{j=1}^N$. Each pooled observation has a noisy membership score $\widetilde I_j$, while exact target-membership labels $I_j$ are available only on a small labeled subset $L$. The algorithm first forms residual corrections $\Delta_j=I_j-\widetilde I_j$ on the labeled subset, treating unlabeled residuals as zero. These residuals are then used in the estimators $\widehat A_{qq}$, $\widehat B_{qq}$, $\widehat A_{pq}$, and $\widehat B_{pq}$ to correct the kernel sums involving the contaminated $q$-pool. The clean-sample term $A_{pp}^\star/B_{pp}^\star$ is computed by the usual unbiased U-statistic. The final estimate combines the clean-clean, corrected pool-pool, and corrected cross-sample terms in the standard MMD decomposition, yielding a design-based contamination-corrected estimator that leverages noisy scores for all pooled observations and exact labels from the audited subset.

\begin{algorithm}[t]
\caption{Contamination-Corrected MMD with contaminated $q$-pool}
\label{alg:pp_mmd_contam}
\begin{algorithmic}[1]
\Require Clean sample $\{X_i\}_{i=1}^n$; pooled sample $\{(W_j,\widetilde I_j)\}_{j=1}^N$; labeled set $L$ with true $I_j$; kernel $k$
\Ensure $\widehat{\mathrm{MMD}}^2_{\mathrm{CC}}$
\State $\Delta_j \gets I_j-\widetilde I_j$ for $j\in L$; $\Delta_j\gets 0$ for $j\notin L$
\State $A_{pp}^\star \gets \sum_{i\neq i'} k(X_i,X_{i'})$;\quad $B_{pp}^\star \gets n(n-1)$
\State Compute $\widehat A_{qq}$ from Equation~\eqref{eq:Aqq_hat}; compute $\widehat B_{qq}$ from Equation~\eqref{eq:Bpq_hat}
\State Compute $\widehat A_{pq}$ from Equation~\eqref{eq:Apq_hat}; compute $\widehat B_{pq}$ from Equation~\eqref{eq:Bpq_hat}
\State \Return $\widehat{\mathrm{MMD}}^2_{\mathrm{CC}}
\gets A_{pp}^\star/B_{pp}^\star + \widehat A_{qq}/\widehat B_{qq} - 2\,\widehat A_{pq}/\widehat B_{pq}$
\end{algorithmic}
\end{algorithm}

\section{Details of Simulation Experiment 1}
\label{sec:appendix_simulation}

This section describes the simulation experiment used to evaluate the contamination-corrected MMD estimator. The goal of the experiment is to study how different estimators behave when one sample is contaminated by an unknown background distribution while only a small subset of observations are audited. In particular, we compare four estimators: (i) the oracle estimator that uses the true latent labels, (ii) the naive estimator that ignores contamination, (iii) an estimator that uses only the audited subset, and (iv) the proposed contamination-corrected estimator (CC--MMD).

\subsection{Reference and Target Distributions}

All experiments are conducted in $\mathbb{R}^2$. The target distribution $q$ is defined as a mixture of five Gaussian clusters arranged on a circle. Let
\[
\theta_k = \frac{2\pi k}{5}, \qquad k=0,1,2,3,4,
\]
and define the cluster centers
\[
\mu_k = \left(R\cos\theta_k,\; R\sin\theta_k\right),
\]
where the circle radius is $R=5$. The target distribution is the equally weighted Gaussian mixture
\[
q(x) = \frac{1}{5}\sum_{k=0}^{4} \mathcal{N}\!\left(x \mid \mu_k,\, \sigma_q^2 I_2\right),
\]
where $\sigma_q = 0.6$ and $I_2$ is the $2\times2$ identity matrix.

The clean reference distribution $p$ is identical to $q$ in the simulation. Thus any difference between the samples arises solely from contamination in the pooled sample rather than from a mismatch between $p$ and $q$.

For each simulation replicate we draw
\[
X_1,\dots,X_n \sim p
\]
with $n=400$ reference observations.

\subsection{Contamination Model}

The pooled sample is drawn from the mixture distribution
\[
r = (1-\epsilon) q + \epsilon z,
\]
where $z$ is a nuisance distribution representing background contamination and $\epsilon \in [0,1)$ controls the contamination level.

For each pooled observation $W_j$, a latent membership indicator
\[
I_j \sim \text{Bernoulli}(1-\epsilon)
\]
is first generated. If $I_j=1$, then $W_j$ is sampled from $q$; otherwise it is sampled from the nuisance distribution $z$. The pooled sample size is fixed at
\[
N = 400 .
\]

\subsection{Background Contamination Distribution}

The nuisance distribution $z$ is designed to produce structured contamination rather than uniform noise. Points are first drawn uniformly from the square
\[
[-8,8]^2 .
\]
These candidate points are then accepted with a probability that depends on their horizontal coordinate. Let $x$ denote the first coordinate of a candidate point. Define the normalized coordinate
\[
u = \frac{x + 8}{16} \in [0,1].
\]
The acceptance probability is
\[
\alpha(x) = 0.25 + 0.75\,u.
\]

This construction induces a spatial gradient in the contamination: background points are more likely to appear on the right side of the domain than on the left. Accepted points form the nuisance distribution $z$. Rejection sampling is used until the required number of nuisance samples is obtained.

The purpose of this gradient structure is to produce contamination that partially overlaps with the target clusters while remaining statistically distinguishable.

\subsection{Weak Proxy Classifier}

Each pooled observation $W_j$ is assigned a proxy score
\[
\widetilde I_j \in [0,1]
\]
intended to approximate the latent membership indicator $I_j$. This score represents the output of a weak classifier that attempts to distinguish target points from contaminants using simple geometric features.

Let
\[
r_j = \|W_j\|_2
\]
denote the radial distance of $W_j$ from the origin. Because the target clusters lie on a circle of radius $R=5$, points close to this radius are more likely to belong to the target distribution. The classifier therefore uses the radial deviation
\[
d_j = |r_j - 5|.
\]

In addition, the background contamination distribution exhibits a gradient in the horizontal coordinate. Points with large positive $x$ values are therefore more likely to be contaminants. The classifier incorporates this information using the feature
\[
x_j = W_{j,1}.
\]

A raw classifier score is computed as
\[
s_j
=
1.5
-
0.9\, d_j
-
0.20\, x_j
+
\eta_j ,
\]
where
\[
\eta_j \sim \mathcal{N}(0,\sigma_{\text{cls}}^2)
\]
is Gaussian noise with standard deviation $\sigma_{\text{cls}}=1.0$. The noise term intentionally weakens the classifier so that the proxy scores are imperfect and miscalibrated.

The final proxy score is obtained by passing the raw score through a logistic transformation:
\[
\widetilde I_j = \sigma(s_j)
= \frac{1}{1 + e^{-s_j}} .
\]

This construction produces scores that correlate with the true membership labels but are not calibrated probabilities. In particular, the proxy scores do not satisfy
\[
\widetilde I_j = \mathbb{P}(I_j=1 \mid W_j),
\]
and may systematically over- or under-estimate the true membership probability in different regions of the feature space.

\subsection{Audited Subset}

A small subset of pooled observations is selected for auditing. Let
\[
L \subset \{1,\dots,N\}
\]
denote the audited indices. The subset is sampled uniformly without replacement, with audit size
\[
m = 80 .
\]
For each $j \in L$, the true membership label $I_j$ is observed without error. For the remaining points, the analyst only observes the proxy score $\widetilde I_j$.

\subsection{Kernel and MMD Estimation}

All MMD estimators use a Gaussian radial basis function kernel
\[
k(x,y)
=
\exp\!\left(
-\frac{\|x-y\|^2}{2\sigma_k^2}
\right),
\]
with bandwidth
\[
\sigma_k = 1.5 .
\]

The following estimators are evaluated in each simulation replicate:

\begin{enumerate}
\item \textbf{Oracle MMD.}  
The oracle estimator uses the true membership indicators $I_j$ to restrict the pooled sample to genuine target observations.

\item \textbf{Naive contaminated MMD.}  
This estimator ignores contamination and computes MMD between the clean reference sample and the entire pooled sample.

\item \textbf{Audit-only estimator.}  
This estimator uses only audited points that are labeled as target samples. Because the audit subset is small, this estimator typically has high variance.

\item \textbf{Contamination-corrected estimator (CC--MMD).}  
This estimator combines the proxy scores $\widetilde I_j$ with the audited residuals $\Delta_j$ using the Horvitz--Thompson correction described in Section~\ref{sec:contamination_problem}.
\end{enumerate}

\subsection{Contamination Sweep}

To study the effect of contamination, experiments are repeated across a grid of contamination levels
\[
\epsilon \in \{0.0, 0.1, 0.2, 0.3, 0.4, 0.5, 0.6\}.
\]

For each value of $\epsilon$, the experiment is repeated multiple times (50) with independent random seeds. The resulting estimates are averaged to evaluate the bias and variability of the different estimators as contamination increases.

The experiment therefore isolates the impact of contamination on MMD estimation while maintaining a controlled setting in which the underlying target distribution remains fixed.

Across the contamination sweep, for seeds $42 + 10000 \times i + r$ where $r=0, \dots, 49$, the weak proxy classifier attains AUROC $0.78 \pm 0.03$ against the true membership indicators (mean ± standard deviation over the 50 replicates at each contamination level; AUROC is undefined at $\epsilon = 0$, where the pool contains no contaminants). The value is essentially constant in $\epsilon$.

\begin{figure}[th!]
    \centering
    \includegraphics[width=0.98\linewidth]{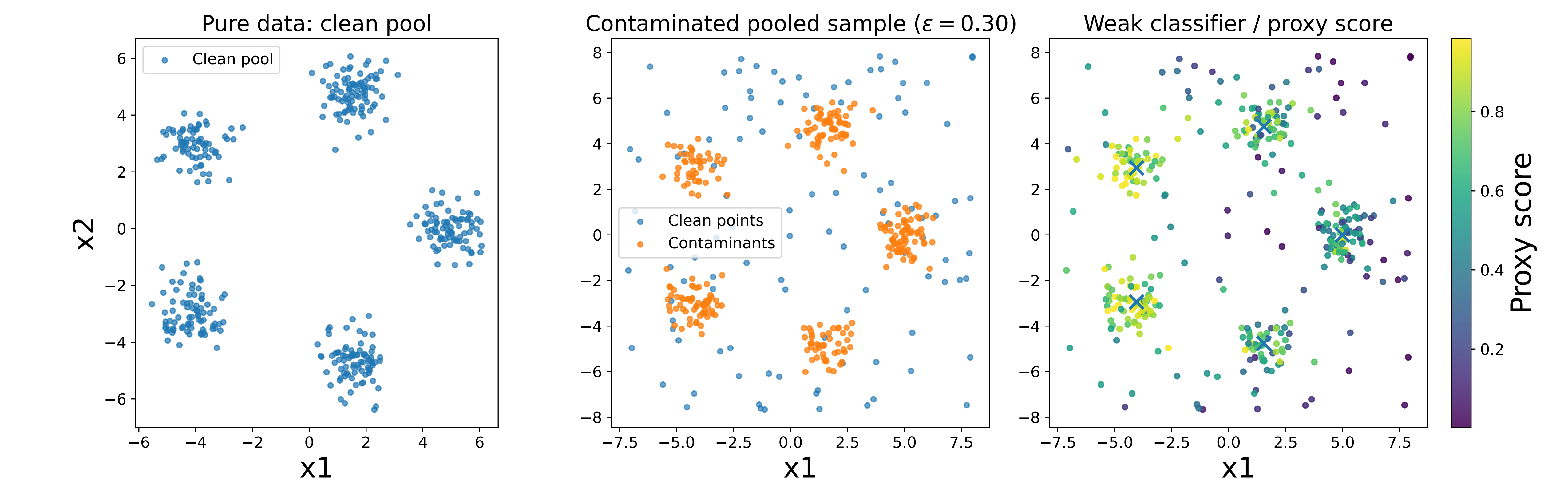}
    \caption{\textbf{Simulation setup for Experiment 1.} \textbf{Left:} the clean target distribution $q$, a mixture of five Gaussian clusters on a circle of radius 5. \textbf{Middle:} a contaminated pooled sample at $\epsilon = 0.30$, with clean points in blue and contaminants from $z$ in orange. \textbf{Right:} the same pooled sample colored by the weak proxy score $\widetilde{I}_j$ of Appendix~\ref{sec:appendix_simulation}.}
    \label{fig:placeholder}
\end{figure}

\section{Contaminated Linear Regression Experiment}
\label{sec:appendix_linear_regression}

This section describes the synthetic linear regression experiment used to evaluate the behavior of standard, robust, and MMD-based estimators under clustered contamination with partial auditing.

\subsection{Data Generating Process}

We consider a univariate linear regression model with additive noise. The clean (inlier) data are generated according to
\begin{equation}
\label{eq:linear_model}
Y = \beta_0^\star + \beta_1^\star X + \varepsilon,
\qquad
\varepsilon \sim \mathcal{N}(0, \sigma^2),
\end{equation}
where the covariate is sampled as
\[
X \sim \mathrm{Uniform}[-3,3].
\]
In all experiments, the true parameters are fixed to
\[
\beta_0^\star = 1.0, \qquad \beta_1^\star = 2.0, \qquad \sigma = 0.5.
\]

\paragraph{Contamination model.}
We observe a pooled dataset of size $N=300$ generated from the mixture
\begin{equation}
\label{eq:regression_contamination}
r = (1-\epsilon) q + \epsilon z,
\end{equation}
where $q$ denotes the clean linear model in Equation~\eqref{eq:linear_model} and $z$ is a nuisance distribution. The contamination rate is set to
\[
\epsilon = 0.40,
\]
so that $40\%$ of the observations are outliers.

Each observation $(X_j,Y_j)$ is associated with a latent indicator
\[
I_j \in \{0,1\},
\]
where $I_j=1$ denotes target population (an inlier) and $I_j=0$ denotes a contaminated point.

\paragraph{Clustered contamination.}
The nuisance distribution $z$ is constructed as a mixture of Gaussian clusters in the $(x,y)$ plane. Specifically, we sample $K=4$ cluster centers
\[
(\mu^{(x)}_k, \mu^{(y)}_k), \quad k=1,\dots,K,
\]
uniformly from rectangular domains
\[
\mu^{(x)}_k \sim \mathrm{Uniform}[-4.5,4.5],
\qquad
\mu^{(y)}_k \sim \mathrm{Uniform}[-8,8].
\]
Each contaminated observation is then generated by first selecting a cluster index uniformly and sampling
\[
X \sim \mathcal{N}(\mu^{(x)}_k, \sigma_x^2),
\qquad
Y \sim \mathcal{N}(\mu^{(y)}_k, \sigma_y^2),
\]
with
\[
\sigma_x = 0.35, \qquad \sigma_y = 0.75.
\]
This construction produces spatially localized clusters of outliers that do not follow the linear relationship, creating a challenging contamination pattern.

\subsection{Proxy Scores and Auditing}

For each observation, we construct a proxy score $\widetilde I_j \in [0,1]$ that approximates the latent indicator $I_j$. The proxy is based on residuals from an initial ordinary least squares (OLS) fit.

Let $(\widehat\beta_0^{\mathrm{OLS}}, \widehat\beta_1^{\mathrm{OLS}})$ denote the OLS estimate computed on the full contaminated dataset. Define residuals
\[
r_j = Y_j - \widehat\beta_0^{\mathrm{OLS}} - \widehat\beta_1^{\mathrm{OLS}} X_j.
\]
We compute a robust scale estimate using the median absolute deviation (MAD),
\[
\widehat\sigma_{\mathrm{MAD}} = 1.4826 \cdot \mathrm{median}\big(|r_j - \mathrm{median}(r)|\big),
\]
and define a standardized residual
\[
z_j = \frac{|r_j|}{\widehat\sigma_{\mathrm{MAD}}}.
\]
The proxy score is then constructed as
\[
\widetilde I_j
=
\sigma\big(1.5 - 1.2 z_j + \eta_j\big),
\qquad
\eta_j \sim \mathcal{N}(0,\,0.9^2),
\]
where $\sigma(\cdot)$ denotes the logistic function. This yields a weak, noisy classifier that correlates with inlier status but is not calibrated.

\paragraph{Audit subset.}
An audited subset $L \subset \{1,\dots,N\}$ is drawn uniformly without replacement with size
\[
|L| = m = 40.
\]
For indices $j \in L$, the true labels $I_j$ are observed without error. The audit rate is therefore approximately $13\%$.

\subsection{Estimators}

We estimate the regression parameters $(\beta_0,\beta_1)$ using five methods:

\paragraph{(i) Ordinary least squares (MSE).}
The classical estimator minimizes
\[
\frac{1}{N} \sum_{j=1}^N (Y_j - \beta_0 - \beta_1 X_j)^2.
\]

\paragraph{(ii) Huber regression.}
This estimator minimizes the Huber loss
\[
\frac{1}{N} \sum_{j=1}^N \rho_\delta(Y_j - \beta_0 - \beta_1 X_j),
\]
with threshold $\delta = 1.25$.

\paragraph{(iii) Tukey biweight regression.}
This method uses Tukey's redescending loss applied to standardized residuals based on a robust scale estimate (MAD). The tuning constant is set to $c = 4.685$.

\paragraph{(iv) Simple MMD estimator.}
We adopt a simulation-based approach. For each candidate parameter $\beta = (\beta_0,\beta_1)$, we generate a synthetic sample
\[
(X_i^{(\beta)}, Y_i^{(\beta)}),
\quad i=1,\dots,n_{\mathrm{model}},
\]
with
\[
X_i^{(\beta)} \sim \mathrm{Uniform}[-3,3],
\qquad
Y_i^{(\beta)} = \beta_0 + \beta_1 X_i^{(\beta)} + \varepsilon_i,
\]
where $\varepsilon_i \sim \mathcal{N}(0,\sigma^2)$. The model sample size is
\[
n_{\mathrm{model}} = 10{,}000.
\]
The parameter is estimated by minimizing
\[
\widehat{\mathrm{MMD}}^2\big(q_\beta, r\big),
\]
computed using a Gaussian kernel.

\paragraph{(v) Contamination-corrected MMD (CC--MMD).}
This method replaces the naive MMD objective with the contamination-corrected estimator described in Section~\ref{sec:contamination_problem}. It combines:
\begin{itemize}
\item proxy scores $\widetilde I_j$,
\item audited residuals $\Delta_j = I_j - \widetilde I_j$,
\end{itemize}
to construct a Horvitz--Thompson correction of the kernel U-statistics. The resulting objective approximates the oracle MMD between $q_\beta$ and the clean distribution $q$.

\subsection{Kernel and Optimization}

All MMD-based methods use a Gaussian kernel
\[
k((x,y),(x',y'))
=
\exp\!\left(
-\frac{\|(x,y)-(x',y')\|^2}{2\sigma_k^2}
\right),
\]
with bandwidth
\[
\sigma_k = 1.0.
\]

All estimators are optimized using the L-BFGS-B algorithm. The initialization for all methods is the OLS estimate computed on the full contaminated dataset. The model-based MMD objectives use a fixed synthetic sample (shared across iterations) to reduce stochastic variability and ensure a smooth optimization landscape.

\subsection{Summary of Parameters}

The full experimental configuration is:
\[
\begin{aligned}
&\text{Sample size:} && N = 300, \\
&\text{Contamination rate:} && \epsilon = 0.40, \\
&\text{Audit size:} && m = 40, \\
&\text{True parameters:} && (\beta_0^\star, \beta_1^\star) = (1.0, 2.0), \\
&\text{Noise standard deviation:} && \sigma = 0.5, \\
&\text{Number of contamination clusters:} && K = 4, \\
&\text{Model sample size:} && n_{\mathrm{model}} = 10{,}000, \\
&\text{Kernel bandwidth:} && \sigma_k = 1.0.
\end{aligned}
\]

This setup induces structured, non-Gaussian contamination together with imperfect proxy information, providing a challenging testbed for comparing classical robust regression methods and MMD-based approaches.

\paragraph{Multiple realizations under contamination.}

Figure~\ref{fig:regression_multi} shows 15 independent realizations of the contaminated regression setting described above. Each panel is generated using the same data-generating process and parameter configuration, with independent random seeds $42, 1042, \dots, 14042$. The achieved proxy classifier's AUROC across the 15 realizations is $0.6593 \pm 0.1332$. The plots display the observed contaminated data (orange points), the target population (blue points), the audited subset, and the fitted regression lines obtained from each method (MSE, Huber, Tukey, naive MMD, audit-only, and CC--MMD).

Across realizations, the qualitative behavior of the estimators is consistent. The least-squares estimator is systematically biased toward the contamination clusters, while robust methods such as Huber partially mitigate this effect but still exhibit noticeable bias. Tukey's biweight shows high variability, often underestimating the slope due to aggressive downweighting. The naive MMD estimator captures the overall data geometry but remains sensitive to contamination, leading to systematic deviations. The audit-only estimator shows no systematic bias but varies markedly from panel to panel, reflecting the small number of audited inliers it relies on. In contrast, the CC--MMD estimator consistently aligns with the latent inlier structure, producing estimates that closely track the true regression line across all panels. This stability demonstrates the advantage of combining distribution matching with audit-based correction.

\begin{figure}[h!]
    \centering
    \includegraphics[width=0.9\linewidth]{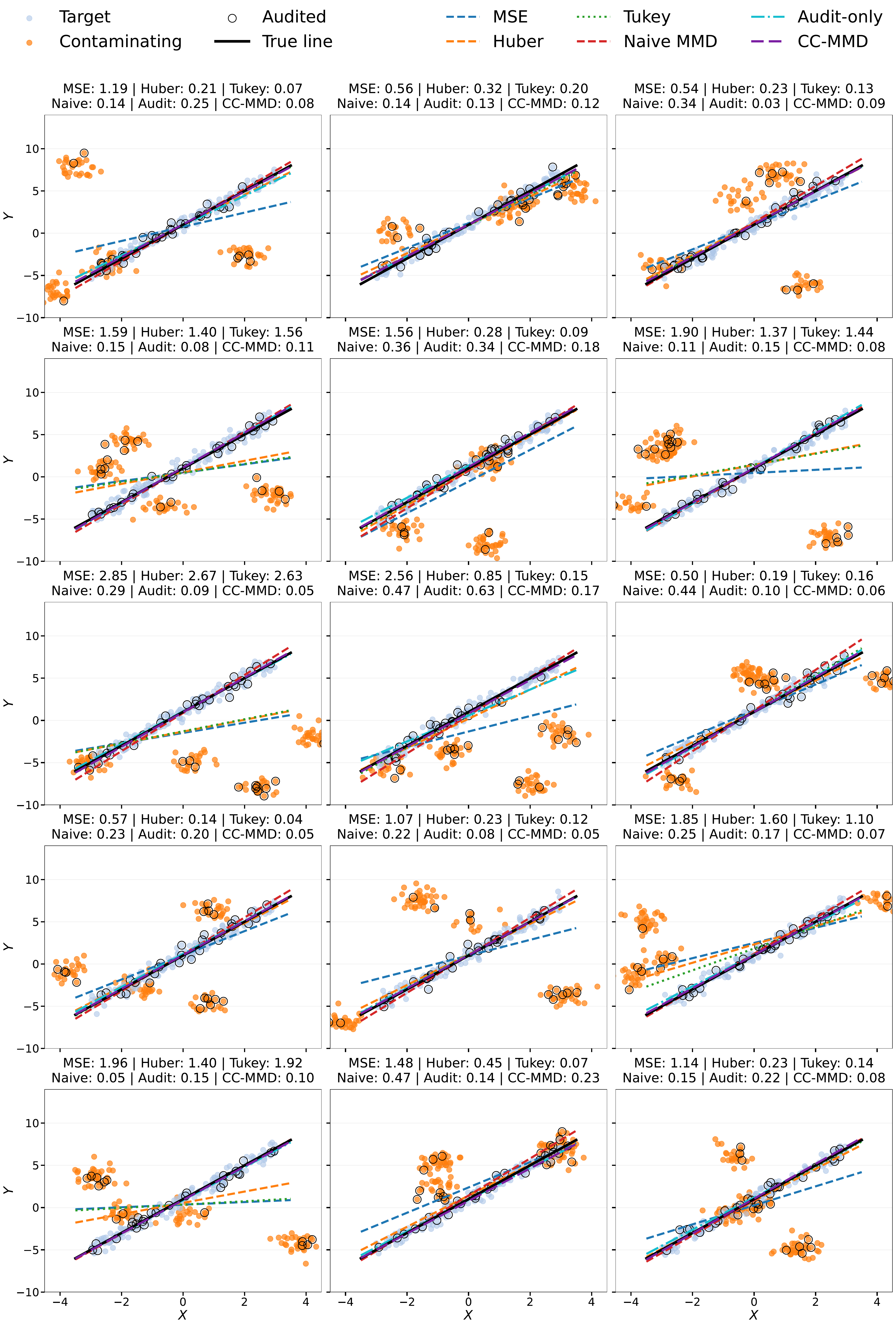}
    \caption{Fifteen independent realizations of the contaminated linear regression problem. Each panel shows the observed data, target population (latent inliers), audited points, and fitted lines for different estimators. While classical and robust methods exhibit bias or variability under clustered contamination, the CC--MMD estimator consistently recovers the true linear relationship across realizations. All realizations are generated randomly.}
    \label{fig:regression_multi}
\end{figure}

\paragraph{Parameter recovery over 200 independent realizations.}

Figure~\ref{fig:regression_200} shows the full distribution of the estimates summarized in Table~\ref{tab:regression} over 200 independent realizations of the setting above, using seeds $42, 43, \dots, 241$, and achieving a proxy AUROC of $0.6617 \pm 0.1112$. The panels show the estimated intercept, slope, and $\ell_2$ parameter error. Faint points are individual realizations; the marker and bar give the mean $\pm$ one standard deviation, and dashed lines mark the true values.

\begin{figure}[h!]
    \centering
    \includegraphics[width=0.9\linewidth]{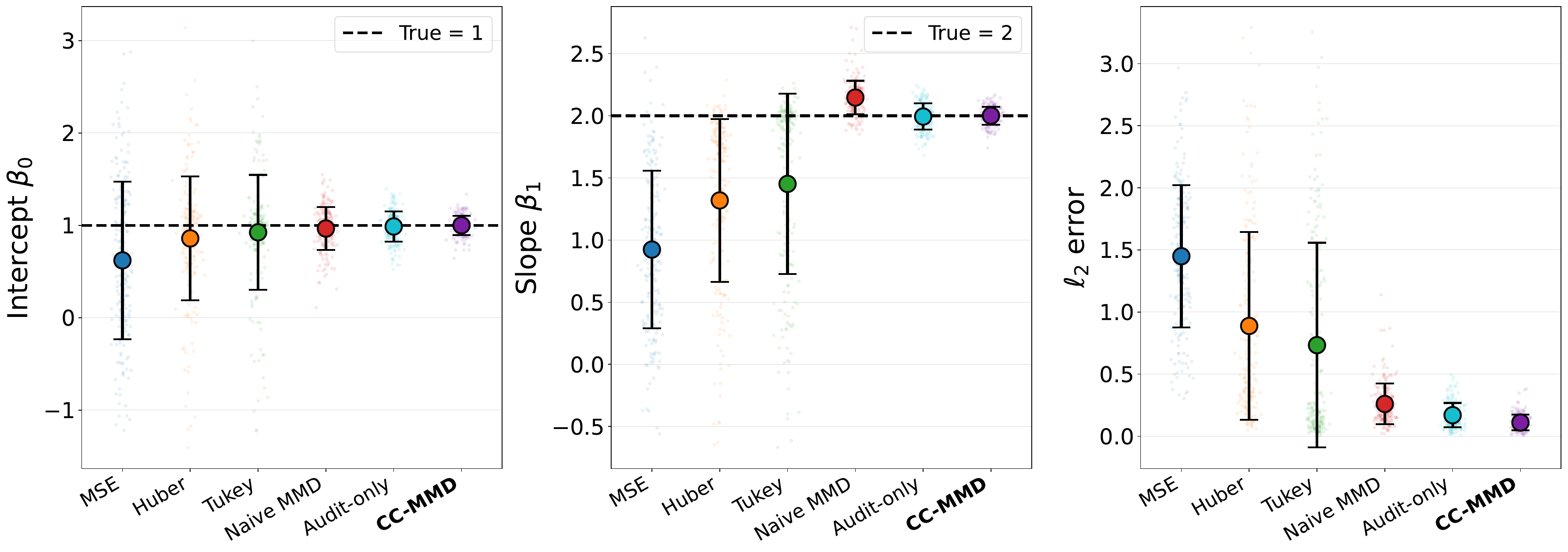}
    \caption{\textbf{Parameter recovery over 200 independent realizations.} Intercept (\textbf{left}), slope (\textbf{middle}) and $\ell_2$ parameter error (\textbf{right}) for every estimator at $\varepsilon=0.40$, $N=300$, $m=40$. Faint points show each of the 200 realizations; the filled marker and error bar give the mean $\pm$ one standard deviation across realizations, and dashed lines mark the true values $\beta_0^\star=1$ and $\beta_1^\star=2$. CC--MMD has both the smallest bias and the smallest spread on each parameter.}
    \label{fig:regression_200}
\end{figure}

\newpage

\section{Concrete Regression under Structured Contamination}
\label{sec:exp_concrete_full}

\paragraph{Dataset and contamination construction.}
We evaluate \textsc{CC-MMD} on the UCI Concrete Compressive Strength dataset~\citep{yeh1998modeling}, which contains measurements of compressive strength as a function of seven material components. The full dataset contains $1{,}030$ samples.

To construct a realistic contamination scenario, we partition the data based on curing time. The target distribution, $q$, consists of samples cured for exactly 28 days ($N_q = 425$), while the contaminating distribution, $z$, consists of samples cured for $\leq 7$ days ($N_z = 262$). The curing time variable is removed from the feature set, ensuring that contamination is not directly observable.

\begin{figure}[h]
  \centering
    \includegraphics[width=0.8\linewidth]{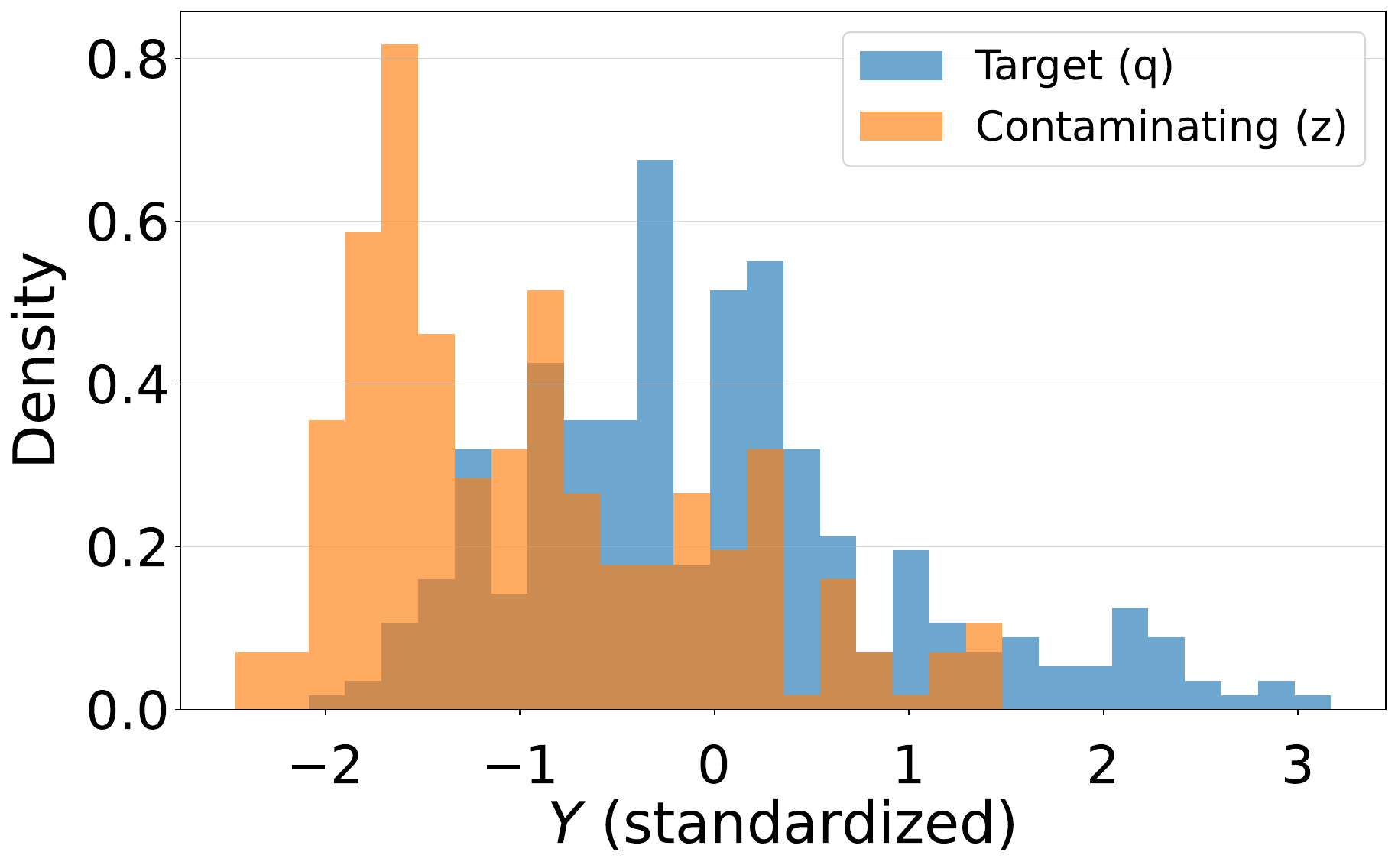}
    \caption{\textbf{Training pool composition at $\epsilon = 0.5$.}
      Contaminating samples (early-cured concrete) occupy the lower tail of
      the strength distribution, pulling the empirical pool mean downward.
      Both histograms are density-normalized.}
  \label{fig:exp4_context}
\end{figure}

This partition induces a structured shift in the joint distribution $(X,Y)$: early-cured concrete exhibits systematically lower compressive strength, resulting in a downward shift in $\mathbb{E}[Y \mid X]$. Importantly, this shift is moderate in magnitude and overlaps with the support of the clean distribution, meaning contaminants are not easily separable via residual-based methods. Figure~\ref{fig:exp4_context} visualizes the split between the latent clean distribution and the contaminating distribution.

\paragraph{Training pool construction.}
We fix the number of clean target samples at $N_q = 300$, drawn from $q$, and augment with contaminants drawn with replacement from $z$. The contamination level is varied as
\[
\epsilon \in \{0.0, 0.1, 0.2, 0.3, 0.4, 0.5, 0.6\}.
\]
For each $\epsilon$, we add $\lfloor N_q\epsilon/(1-\epsilon)\rfloor$ contaminants, so that the total pool size is $\lfloor N_q/(1-\epsilon)\rfloor$ and the fraction of contaminants equals $\epsilon$ by construction.

A clean test set of size $125$ is held out from $q$ and used for all evaluations.

\paragraph{Audit design and proxy estimation.}
We assume access to a small audit subset of size equal to $20\%$ of the total contaminated training dataset ($m = {\rm round}(0.20 * N)$), sampled uniformly from the pool. It ranges from $m=60$ at $\epsilon=0$ to $m=150$ at $\epsilon=0.6$. 

A Random Forest classifier, with $100$ trees and maximum depth 4, is trained on the joint $(X,Y)$ feature space of this noisy audit set to produce proxy membership scores $\tilde{I}_i$ for all training samples.
To simulate realistic conditions of having an imperfect proxy classifier, $\eta \approx 20\%$ of audited labels are randomly flipped before fitting the proxy. Note that the audit labels are flipped only for proxy fitting purposes, while the residual corrections $\Delta_j$ always use the true audited memberships. The resulting proxy AUROC against ground-truth membership on the full pool rises with the contamination level, from $0.758 \pm 0.091$ at $\epsilon = 0.1$ to $0.875 \pm 0.027$ at $\epsilon = 0.6$ (mean $\pm$ standard deviation) over 20 seeds ($42, 1042, \dots, 19042$). At the demonstration setting $\epsilon = 0.5$, seed 42, used in Figures~\ref{fig:exp4_scatter} and \ref{fig:exp4_qq}, achieved proxy AUROC is $0.8633$. AUROC is undefined at $\epsilon = 0$, where the pool contains no contaminants. For reference, when $\eta = 0$, the proxy classifier's AUROC in Figure~\ref{fig:exp4_quant} is $0.86-0.94$, while in Figures~\ref{fig:exp4_scatter} and \ref{fig:exp4_qq}, proxy AUROC is $0.9498$.

See Appendix~\ref{app:additional-results} for a discussion on the way proxy scores are obtained in Experiments 3 (Appendix~\ref{sec:exp_concrete_full}) and 4 (Appendix~\ref{sec:appendix_generative}) together with additional results to support the discussion.

\paragraph{Model and training details.}
All methods use the same MLP architecture with hidden layers $[64,64,32]$ and ReLU activations, optimized with Adam (learning rate $10^{-3}$, weight decay $10^{-4}$). Pointwise methods (Oracle, Audit-only, MSE, Huber) are trained for 300 epochs, while Tukey uses an additional 50-epoch MSE warm-up (350 total).
For MMD-based methods, training proceeds with a 50-epoch Huber warm-up followed by optimisation of the MMD objective for 400 epochs (450 total). The kernel bandwidth is set as
\[
\sigma = \frac{\sigma_{\mathrm{med}}}{D}, \quad D = d+1 = 8,
\]
where $\sigma_{\mathrm{med}}$ is the median pairwise distance in the joint $(X,Y)$ space. The target variable $Y$ is rescaled by $\sqrt{d_{\mathrm{feat}}} \approx 2.65$ prior to concatenation.

\paragraph{Methods.}
We compare the following MLP regressors, all using the same architecture and optimizer, as specified in the paragraph above:

\begin{itemize}
  \item \textbf{Oracle}: MSE loss on the true target subset of the pool (requires knowledge of true membership $I_i$); serves as an unattainable performance upper bound.
  \item \textbf{Audit-only}: MSE loss restricted to the labeled target points in the audit subset (approximately 60 inlier samples in expectation: $m \cdot (1-\epsilon) \approx 0.2 \times 300 = 60$); represents the strongest purely label-guided baseline achievable with the audit budget.
  \item \textbf{MSE / Huber / Tukey}: Standard mean-squared error, Huber loss ($\delta = 1$), and Tukey biweight loss ($c = 4.685$) applied to the full contaminated pool; 50 MSE warm-up epochs precede Tukey training.
  \item \textbf{Na\"{i}ve MMD}: MMD loss matching the joint $(X,Y)$ distribution of model samples to the full contaminated pool, without any correction for membership.
  \item \textbf{CC-MMD}: Horvitz--Thompson-corrected MMD loss, Equation~\eqref{eq:mmd_cc}, using bandwidth $\sigma = \sigma_{\mathrm{med}} / D$, where $\sigma_{\mathrm{med}}$ is the median pairwise distance in the $D$-dimensional joint feature--target space and $D = d + 1 = 8$. The proxy score $\tilde{I}_i$ is computed by a Random Forest classifier trained on the audit subset using the joint $(X, Y)$ feature space.
\end{itemize}

\paragraph{Evaluation protocol.}
All results are averaged over $n_{\mathrm{rep}} = 20$ random seeds. We report:
\begin{itemize}
    \item Test mean squared error (MSE) on clean data,
    \item Test MMD$^2$ between predicted and true output distributions, computed with a Gaussian kernel whose bandwidth is estimated via the median heuristic on the combined $1-D$ sample $\{Y_{\mathrm{test}}, \hat{Y}\}$ (standard heuristic without the $\div D$ normalisation used during training).
\end{itemize}

\paragraph{Quantitative results.}
Figure~\ref{fig:exp4_quant} shows that at $\epsilon=0$, all methods achieve comparable performance (MSE $\approx 0.15$), indicating that \textsc{CC-MMD} does not introduce bias in the absence of contamination. Audit-only training plateaus at higher error ($\approx 0.37$), reflecting limited sample size.

As $\epsilon$ increases, standard methods degrade sharply. MSE and Huber reach $\approx 0.76$--$0.77$ at $\epsilon=0.6$, indicating that moderate contamination is not mitigated by residual-based robustness. Tukey performs worse ($\approx 0.95$), consistent with inflated scale estimates causing indiscriminate downweighting.
Na\"{i}ve MMD similarly fails ($\approx 0.98$), as it aligns the model with the contaminated distribution.

\textbf{CC-MMD consistently outperforms all non-oracle baselines}. At $\epsilon=0.5$, it achieves $0.47 \pm 0.10$ MSE, representing a $34\%$ reduction relative to the next best baseline. At $\epsilon=0.6$, the improvement remains substantial ($25\%$ reduction). Gains are largest at intermediate contamination levels ($\epsilon=0.2$--$0.4$), reaching up to $39\%$.

In distribution space, CC-MMD maintains near-zero test MMD$^2$ across all $\epsilon$, closely matching the oracle, while all other methods trained on the full contaminated dataset, diverge monotonically.

\paragraph{Qualitative analysis.}
Figure~\ref{fig:exp4_scatter} shows true-versus-predicted compressive strength at $\epsilon=0.5$ for a representative seed. Oracle and CC--MMD predictions concentrate tightly around the $y=\hat{y}$ diagonal, indicating that they recover both the level and spread of the clean target relationship. In contrast, MSE, Huber, Tukey, and na\"ive MMD exhibit a systematic downward bias, most visibly for the largest true values. In particular, high-strength samples are underpredicted because early-cured contaminants pull the pooled conditional mean $\mathbb{E}_{\mathrm{pool}}[Y \mid X]$ below the clean target relationship. The similar failure of na\"ive MMD shows that the benefit does not come merely from replacing a pointwise loss with a distributional objective, and the MMD objective itself must be corrected for contamination.

The QQ plots in Figure~\ref{fig:exp4_qq} provide a complementary distributional view. CC--MMD closely matches the marginal test distribution of $Y$, with predicted quantiles lying near the diagonal ($R^2=0.873$). By contrast, the contaminated baselines compress the predicted range, underestimating high-strength samples and overestimating low-strength samples ($R^2 \leq 0.755$). Thus, CC--MMD corrects not only mean-level prediction error but also the broader distributional distortion induced by contamination.

\begin{figure}[th!]
  \centering
  \includegraphics[width=\linewidth]{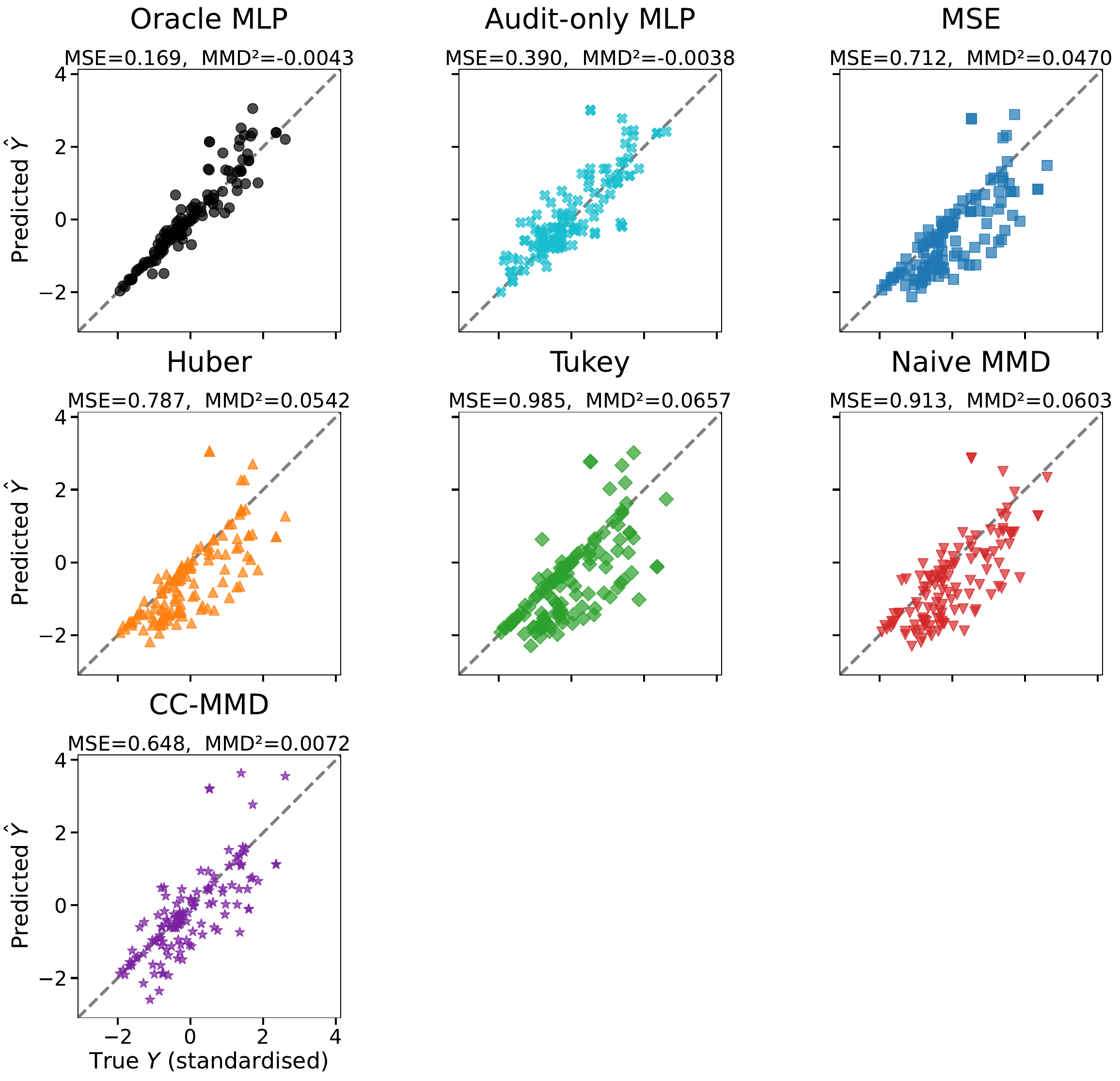}
  \caption{\textbf{True vs.\ predicted compressive strength on the clean test set at $\varepsilon = 0.5$.} Each panel reports MSE and MMD$^2$ for that method. Oracle and CC-MMD align closely with the diagonal, while the contaminated baselines exhibit a systematic downward bias in the upper range.}
  \label{fig:exp4_scatter}
\end{figure}

\begin{figure}[th!]
  \centering
  \includegraphics[width=\linewidth]{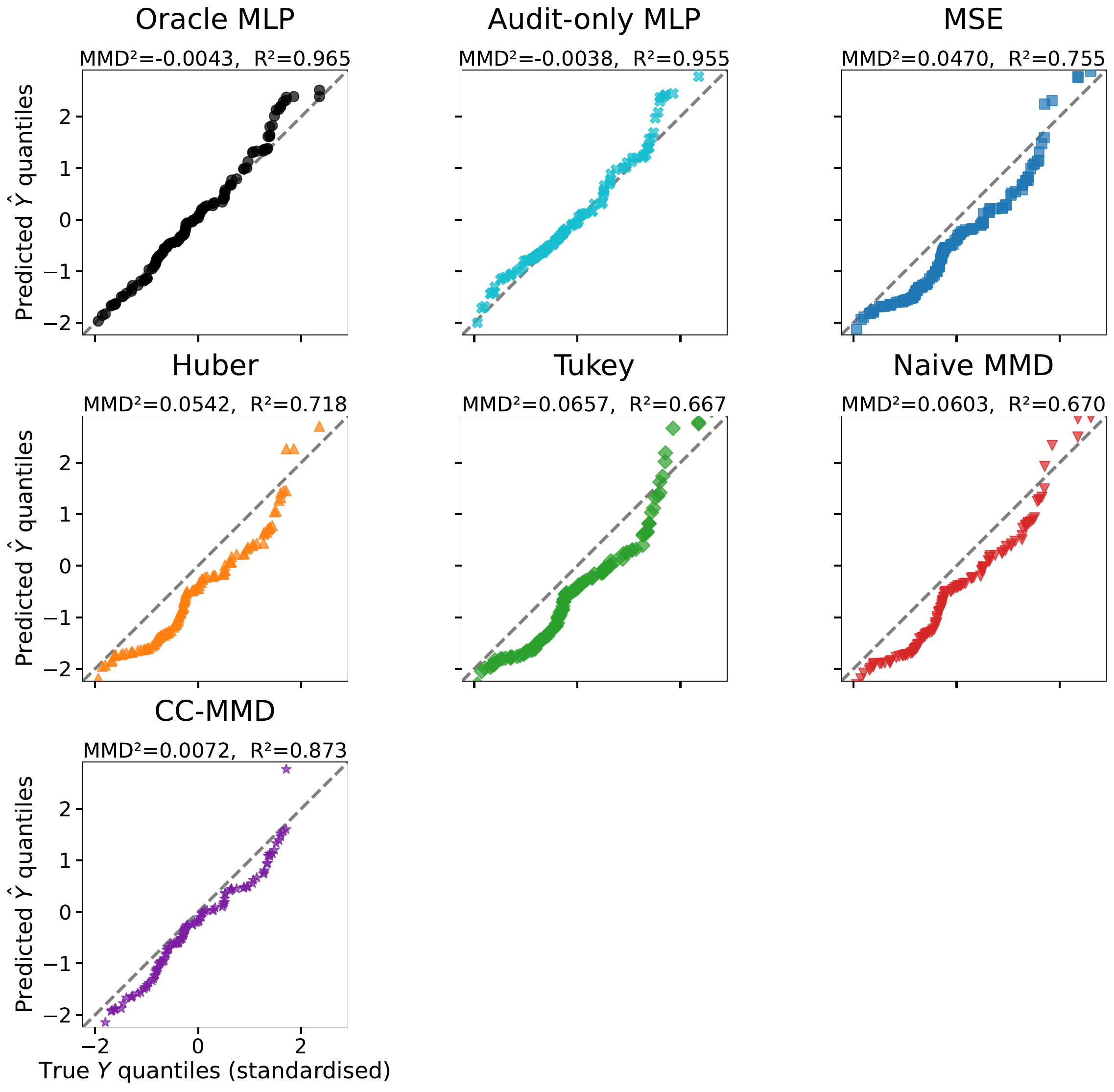}
  \caption{\textbf{Predicted-vs-true output quantile plots at $\varepsilon = 0.5$.} A perfect predictor lies on the diagonal. CC-MMD recovers the full marginal distribution of $Y$, whereas contaminated baselines compress and shift the predicted quantiles, revealing distributional bias beyond mean-level degradation.}
  \label{fig:exp4_qq}
\end{figure}

\paragraph{Discussion.}
This experiment highlights a failure mode distinct from classical outlier robustness: contamination induces a \emph{structured shift within the support}, rather than extreme residuals. In this regime, both residual-based robust losses and na\"{i}ve distribution matching fail.

\textsc{CC-MMD} succeeds because it reconstructs the clean distribution via design-based correction, leveraging both proxy scores and audit information without requiring calibrated probabilities or assumptions on the contaminating distribution.

\section{High-Dimensional Generative Modeling under Contamination}
\label{sec:appendix_generative}

\paragraph{Overview.}
We evaluate CC--MMD in a high-dimensional generative modeling setting where the training data are contaminated with samples from a different distribution. The goal is to assess whether CC--MMD can guide a generative model to recover the clean target distribution despite substantial contamination in the training pool. We consider MNIST~\citep{deng2012mnist} as the target distribution and introduce contamination from K-MNIST~\citep{clanuwat2018deep} and Omniglot~\citep{lake2015human} (not simultaneously).

\paragraph{Datasets.}
The target dataset is MNIST, consisting of grayscale images of handwritten digits of size $28 \times 28$. The contaminating datasets are:
\begin{itemize}
    \item \textbf{K-MNIST:} Kuzushiji-MNIST, containing Japanese characters with similar resolution and format.
    \item \textbf{Omniglot:} A dataset of handwritten characters from diverse alphabets.
\end{itemize}
We use the standard MNIST training set to construct the contaminated pool and the standard MNIST test set ($10{,}000$ samples) for evaluation.

Before creating a contaminated dataset, both target and contaminating datasets undergo identical, independent preprocessing so that images live in a common feature space. Omniglot images are first resized to $28 \times 28$, while MNIST and K-MNIST are already at this resolution and require no resizing. Each $28 \times 28$ grayscale image is converted to a float tensor in $[0, 1]$, clamped to $[\alpha, 1-\alpha]$ with $\alpha = 0.05$ to bound values away from the endpoints, mapped to logit space via $x \mapsto \log\!\big(x / (1-x)\big)$ element-wise, and flattened to a $784$-dimensional vector. We use continuous intensities throughout (no binarization). Unlike the other two datasets, Omniglot characters are in black pixels with background in white pixels and we do not invert this. This makes the contamination problem harder with Omniglot than with K-MNIST.

\paragraph{Contaminated training pool.}
After independent and identical preprocessing of the target and contaminating dataset, we construct a contaminated training dataset of size $N=10{,}000$. A fraction $\epsilon=0.60$ of the pool consists of contaminant samples drawn from either K-MNIST or Omniglot, while the remaining $1-\epsilon$ fraction consists of clean MNIST samples. Samples are drawn uniformly at random without replacement from each source dataset, with no explicit class-balancing constraint. After sampling, the $N = 10{,}000$ images are concatenated and a random permutation is applied to remove any ordering by source.

\paragraph{Audit subset and proxy estimation.} \label{sec:audit_and_proxy_estimation}
We assume access to an audit subset of size $m=1{,}000$ ($10\%$ of the training pool), sampled uniformly at random. For each audited sample, we observe its true membership label (clean vs.\ contaminant).

We use the audited subset to train a logistic regression classifier to predict membership. The classifier operates on the flattened $784$-dimensional logit-space pixel vectors, the same representation used by the generative model. We use scikit-learn's \texttt{LogisticRegression} with $\ell_2$ regularization (inverse strength $C = 1.0$), the SAGA solver, and a maximum of $1{,}000$ iterations. The classifier is fit on the $m = 1{,}000$ audited samples and applied to all $N$ points in the contaminated dataset. The trained classifier outputs the predicted probability of the membership label which we directly use as proxy scores $\widetilde{I}_i \in [0,1]$ for all samples in the training pool.

To assess CC--MMD's behavior in a realistic regime where the membership proxy is imperfect, we control proxy quality by perturbing audit labels rather than post-hoc rescaling the classifier scores. Before fitting the logistic regression, we sample a fraction $\eta \in [0, 1]$ of the $m$ audit indices uniformly at random without replacement and invert their membership labels ($0 \leftrightarrow 1$). Because indices are sampled uniformly across both classes, the induced per-class flip rate is also $\eta$ in expectation. Setting $\eta = 0$ recovers the clean-audit setting, while larger $\eta$ progressively degrades the classifier and lowers its AUROC against ground-truth membership on the full pool. The resulting predicted inlier probabilities are used directly as proxy scores $\widetilde{I}_i$, with no additional rescaling. 
Note that the audit labels are flipped only for proxy fitting purposes, while the residual corrections $\Delta_j$ always use the true audited memberships. For the main experiment we set $\eta = 0.25$. The resulting proxy achieves AUROC $0.754 \pm 0.019$ under K-MNIST contamination and $0.831 \pm 0.019$ under Omniglot (mean $\pm$ standard deviation) over five seeds (42-46), measured against ground-truth membership on the full training pool. Both are moderate operating points, well above chance (0.5) but well below a near-oracle proxy. The proxy separates Omniglot more easily because its characters are dark on a light background. For reference, when $\eta = 0$, the proxy classifier attains AUROC $0.995 \pm 0.001$ for K-MNIST and $1.000 \pm 0.000$ for Omniglot.

Appendix~\ref{app:additional-results} provides a discussion on the way proxy scores are obtained in Experiments 3 (Appendix~\ref{sec:exp_concrete_full}) and 4 (Appendix~\ref{sec:appendix_generative}) together with additional results to support the discussion.

\paragraph{Generative model architecture.}

We evaluate CC--MMD across three generative model families.

\begin{itemize}
\item \textbf{Inverse Autoregressive Flow (IAF).}
The IAF~\citep{kingma2016improved} is implemented using the \texttt{nflows} library. Each of the 8 flow layers wraps a masked affine autoregressive transform (MADE conditioner with hidden width 512 and 2 residual blocks), producing dimension-wise affine parameters $\mu_d(u_{<d})$ and $\sigma_d(u_{<d})$ that can be evaluated in parallel along the sampling direction. Batch normalization is applied within each MADE block and between consecutive layers. Random permutations of all 784 dimensions are inserted between layers. The base distribution is $u \sim \mathcal{N}(0, I_{784})$.

\item \textbf{Variational Autoencoder (VAE).}
The VAE~\citep{kingma2013auto} has a symmetric MLP encoder--decoder pair with ReLU activations throughout. The encoder maps a 784-dimensional logit-space input through three hidden layers, $784 \to 1024 \to 512 \to 256$, followed by two parallel linear heads that output the mean and log-variance of a diagonal-Gaussian approximate posterior in a 128-dimensional latent space. The decoder mirrors the encoder, $128 \to 256 \to 512 \to 1024 \to 784$, with no final activation on the final layer so that the output is in logit space as the data.

\item \textbf{Convolutional Generator (GMMN).}
The generator follows the DCGAN architecture~\citep{radford2015unsupervised}, trained with MMD-based objectives in the spirit of Generative Moment Matching Networks (GMMN)~\citep{li2015generative}. It maps a noise vector $u \sim \mathcal{N}(0, I_{128})$ through a fully connected layer ($128 \to 256\!\times\!7\!\times\!7$, BN, ReLU), two transposed convolutions (channels $256 \to 128 \to 64$, kernel 4, stride 2, padding 1, each followed by BN and ReLU), and a final $3\!\times\!3$ convolution to a single channel, flattened to $\mathbb{R}^{784}$ with no final activation. Weights are initialized as $\mathcal{N}(0, 0.02)$ for linear and convolutional layers and $\mathcal{N}(1, 0.02)$ for batch normalization scale parameters.

\end{itemize}

\paragraph{Training setup.}
All distribution-matching methods (SWD, Naive MMD, CC--MMD) across all three model families are trained for a maximum of $10{,}000$ gradient steps using the Adam optimizer with default momentum parameters ($\beta_1 = 0.9$, $\beta_2 = 0.999$, $\varepsilon = 10^{-8}$, no weight decay). The learning rate follows a cosine annealing schedule decaying from a peak of $3\times10^{-4}$ to a minimum of $3\times10^{-6}$ over the full $10{,}000$ steps. At each gradient step, $n_{\mathrm{gen}} = 1{,}000$ samples are drawn from the generator. Gradients are clipped to a maximum $\ell_2$ norm of $1.0$ before each parameter update. Early stopping monitors the smoothed training loss, averaged within non-overlapping windows of 100 steps. Training halts and restores the best checkpoint if no improvement is observed for 15 consecutive windows ($1{,}500$ steps of patience). All three models use identical optimization settings for their distribution-matching objectives.

The VAE introduces two additional training paradigms for the ELBO baselines (VAE\nobreakdash+MSE and VAE\nobreakdash+Huber). These are trained epoch-wise minimizing the ELBO: 
$\mathcal{L} = \mathcal{L}_{\mathrm{recon}} + \beta_{\mathrm{KL}}\, \mathrm{KL}\!\bigl(\phi_\theta(\cdot\,|\,x)\,\|\,\mathcal{N}(0,I)\bigr)$,
where $\phi_\theta(\cdot|x)$ denotes the variational posterior, and $\mathcal{L}_{\mathrm{recon}}$, MSE or Huber, is summed over all 784 dimensions and averaged over the mini-batch, and $\beta_{\mathrm{KL}} = 1.0$. Optimization uses Adam with peak learning rate $10^{-3}$, cosine annealing to $10^{-5}$ over 200 epochs, mini-batch size 256, and gradient clipping norm 5.0. A random 15\% validation split is held out for early stopping with patience of 15 epochs (best model restored). The Huber loss uses a transition point of $\delta = 1.0$ in logit space.

For the VAE distribution-matching variants (VAE\nobreakdash+MMD and VAE\nobreakdash+CC--MMD), a KL regularizer is computed from a fresh mini-batch of 256 training points at each step and added with weight $\beta_{\mathrm{KL}} = 0.01$. This small value is necessary since MMD$^2$ on 784-dimensional logit-space data is of order $10^{-2}$--$10^{-1}$, while the KL term is of order $10$--$50$, so $\beta_{\mathrm{KL}} = 1$ would collapse training to prior matching. All other settings match the flow and GMMN distribution-matching experiments.

The GAN~\citep{goodfellow2020generative} baseline is trained adversarially for $30{,}000$ steps with a batch size of 500 real and 500 fake samples. The generator uses Adam with $(\beta_1, \beta_2) = (0.5, 0.999)$ and learning rate $2\times10^{-4}$; the discriminator uses Adam with the same $\beta$ values and learning rate $5\times10^{-5}$. One discriminator update is performed per generator update. The discriminator is a three-layer MLP ($784 \to 512 \to 256 \to 1$) with LeakyReLU(0.2) activations and no final activation (logit output for use with \texttt{BCEWithLogitsLoss}). The non-saturating generator objective is used: the generator minimizes $-\log D(G(z))$ rather than $\log(1 - D(G(z)))$. Gradient clipping of norm 1.0 is applied to both generator and discriminator.

\paragraph{Loss functions.}
All kernel-based methods use a Gaussian kernel $k(x, y) = \exp\!\bigl(-\|x - y\|^2 / (2\sigma^2)\bigr)$ in the 784-dimensional logit-space. The bandwidth $\sigma$ is the median pariwise distance among the known target points in the audit set: $\sigma = \mathrm{median}\{\|w_i - w_j\| : i < j\}$, $\{W_j : j\in L,\,I_j=1\}$, subsampled to at most $2{,}000$ points for efficiency. Using only confirmed clean points prevents the contaminating distribution from biasing the bandwidth estimate. For both Naive MMD and CC--MMD, all pool-side quantities that depend only on the fixed training data are computed once before training begins and cached. For Naive MMD this is the pool self-kernel sum $\sum_{j\neq j'} k(W_j, W_{j'})$, while for CC--MMD these are the corrected sums $\widehat{A}_{qq}$, $\widehat{B}_{qq}$ and the $W$-side factors of $\widehat{A}_{pq}$ and $\widehat{B}_{pq}$. Only the terms involving newly generated samples $X_{\mathrm{gen}}$ ($K(X_{\mathrm{gen}}, X_{\mathrm{gen}})$ and $K(X_{\mathrm{gen}}, W)$)---are recomputed at each gradient step.

The loss objectives compared across all three model families are:

\begin{itemize}
  \item \textbf{Na\"{i}ve MMD.} Defined in Equation~\ref{eq:mmd_target}.

  \item \textbf{CC--MMD.} Defined in Equation~\ref{eq:mmd_cc}.

  \item \textbf{Sliced Wasserstein Distance (SWD).}~\citep{rabin2011wasserstein} Approximates the 2-Wasserstein distance using $P = 200$ random one-dimensional projections. At each training step, $W$ is subsampled to $n_{\mathrm{gen}}$ points, and the loss is the mean squared difference between sorted projections:
    \[
      \widehat{\mathrm{SWD}}^2
      = \frac{1}{P}\sum_{s=1}^{P}\frac{1}{n_{\mathrm{gen}}}
        \sum_{\ell=1}^{n_{\mathrm{gen}}}
        \bigl((X_{\mathrm{gen}}\cdot v_s)_{(\ell)}
              -(W_{\mathrm{sub}}\cdot v_s)_{(\ell)}\bigr)^2,
    \]
    where $v_s$ are random unit vectors and $(\cdot)_{(\ell)}$ is the
    $\ell$-th order statistic. SWD is applied only to the IAF.

  \item \textbf{GAN (adversarial, GMMN only).}~\citep{goodfellow2020generative} Non-saturating adversarial objective; see training details above. This baseline is included only for the GMMN experiment to assess whether an implicit generative model trained without any explicit kernel can resist contamination.

  \item \textbf{VAE+MSE and VAE+Huber (ELBO, VAE only).} These baselines train the VAE by maximizing the standard ELBO with MSE or Huber reconstruction losses respectively, without any explicit distribution-matching objective.
\end{itemize}

\paragraph{Evaluation metrics.}
We evaluate generative performance using two metrics, all computed against the $10{,}000$ clean MNIST test images.

\begin{itemize}
    \item \textbf{Test MMD$^2$.}
    We use the same Gaussian kernel as during training, with bandwidth $\sigma$ estimated via the median heuristic on the test set itself (rather than the training pool). We draw $5{,}000$ samples from the generator. The unbiased U-statistic estimator is evaluated in logit space.
    \item \textbf{Fréchet Inception Distance (FID).}
    Because standard FID uses Inception-v3 features designed for $299\times299$ RGB images, we replace the feature extractor with a small CNN trained on clean MNIST. The network applies two strided $3\times3$ convolutions (channels $1\to32\to64$, stride 2) followed by a linear projection to a 128-dimensional feature space, with a 10-class classification head used only during training. It is trained on the full $60{,}000$-image MNIST training set for 10 epochs (Adam, lr\,$=10^{-3}$, cross-entropy, batch size 256) and fixed before any generative model is trained. Test-set features are precomputed once and reused across all model comparisons. We use $50{,}000$ generated samples. Generated outputs are converted to $[0,1]$ via sigmoid before feature extraction. All reported FID values use this MNIST-specific extractor.
\end{itemize}

\paragraph{Main results ($\epsilon = 0.60$).}
Table~\ref{tab:step4_results} and Figure~\ref{fig:ccmmd_contam_metrics} show the numerical results of the generative modeling under contamination experiment. We provide further detailed commentary here.

\begin{table}[t]
\centering
\small
\begin{tabular}{llcccc}
\toprule
& & \multicolumn{2}{c}{K-MNIST} & \multicolumn{2}{c}{Omniglot} \\
\cmidrule(lr){3-4}\cmidrule(lr){5-6}
Architecture & Method & MMD$^2$ ($\times10^{-3}$) & FID & MMD$^2$ ($\times10^{-3}$) & FID \\
\midrule
\multirow{3}{*}{IAF}
 & SWD       & $26.5\pm0.6$ & $207.7\pm2.5$  & $447.7\pm12.5$ & $593.2\pm32.4$ \\
 & MMD       & $25.9\pm0.4$ & $203.7\pm6.4$  & $459.5\pm3.2$  & $564.2\pm7.5$  \\
 & \textbf{CC--MMD}   & $\mathbf{1.1\pm0.2}$  & $\mathbf{118.8\pm8.8}$  & $\mathbf{1.9\pm1.0}$    & $\mathbf{125.0\pm10.6}$ \\
\midrule
\multirow{4}{*}{VAE}
 & MSE       & $131.2\pm4.9$ & $245.6\pm1.4$  & $550.2\pm17.0$ & $516.5\pm11.5$ \\
 & Huber     & $42.6\pm3.5$  & $\mathbf{190.4\pm4.7}$  & $422.6\pm33.7$ & $457.0\pm5.3$  \\
 & MMD       & $27.9\pm1.1$  & $443.9\pm6.2$  & $452.5\pm4.9$  & $593.2\pm27.2$ \\
 & \textbf{CC--MMD}   & $\mathbf{2.4\pm0.3}$   & $377.4\pm15.7$ & $\mathbf{2.7\pm0.9}$    & $\mathbf{359.6\pm10.9}$ \\
\midrule
\multirow{3}{*}{GMMN}
 & GAN       & $27.2\pm0.7$  & $\mathbf{114.7\pm4.9}$  & $1024.9\pm509.5$ & $745.7\pm350.2$ \\
 & MMD       & $27.1\pm0.8$  & $160.6\pm1.7$  & $458.2\pm7.1$  & $444.1\pm4.1$  \\
 &\textbf{ CC--MMD}   & $\mathbf{1.2\pm0.2}$   & $139.6\pm4.6$  & $\mathbf{1.9\pm1.2}$    & $\mathbf{138.7\pm3.9}$  \\
\bottomrule
\end{tabular}
\caption{Test MMD$^2$ ($\times 10^{-3}$) and FID on clean MNIST test images under $\epsilon = 0.60$ contamination, $\mu \pm \sigma$ \ over 5 seeds. Lower is better for both metrics. The same results are also shown on Figure~\ref{fig:ccmmd_contam_metrics}.}
\label{tab:step4_results}
\end{table}

Under severe contamination, the architecture-specific baselines all fail to recover the clean MNIST distribution since each optimizes a distance to the contaminated training pool, causing the generator to reproduce the mixture rather than the target. This is reflected in high test MMD$^2$ values across all three architectures (IAF: $25.9$--$26.5 \times 10^{-3}$; GMMN: $27.1$--$27.2 \times 10^{-3}$; VAE: $27.9$--$131.2 \times 10^{-3}$) and correspondingly high FID scores.

Instead, CC--MMD uses audit-informed proxy scores to reweight the training objective toward the clean distribution, consistently recovering clean MNIST samples across all three architectures and both contamination sources. Test MMD$^2$ is reduced by approximately $23\times$ on K-MNIST and over $200\times$ on Omniglot relative to the best uncorrected baseline, with the larger gain on Omniglot reflecting the greater visual dissimilarity between that contaminant and MNIST. FID improvements are substantial in all settings. Under Omniglot contamination, where baselines degrade most severely, CC--MMD achieves FID $125.0 \pm 10.6$ versus $564.2 \pm 7.5$ for IAF and $138.7 \pm 3.9$ versus $444.1 \pm 4.1$ for GMMN. Under K-MNIST contamination the improvement is smaller in absolute terms but remains consistent across architectures, since the visual similarity between K-MNIST and MNIST means uncorrected baselines still produce partly digit-like outputs. Across both settings, CC--MMD results are stable across seeds, with low variance in both MMD$^2$ and FID.

\paragraph{Ablation study on $\epsilon$ (Robustness to contamination rate)}
In this ablation study, we demonstrate CC--MMD's robustness to increasing contamination rate, $\epsilon$. We do this by examining IAF's generative quality with different loss functions as the contamination level $\epsilon$ increases, $\epsilon \in \{0.0, 0.1, \dots, 0.8\}$, while holding fixed $N=10{,}000$, audit fraction $m/N=0.10$, and proxy quality at AUROC $\approx 0.70$ ($\eta = 0.30$). Here, we focus on contamination coming from K-MNIST.
We run each configuration over 20 random seeds, $42 + 10000 \cdot i + r$, where $i = 0, \dots, 8$ indexes $\epsilon$ and $r = 0, \dots, 19$ indexes the replicate, and report $\mu \pm \sigma$ of test MMD$^2$ and test FID evaluated against clean MNIST test set ($10{,}000$ samples). Results are shown in Figure~\ref{fig:ablation_epsilon_mmd2}.

Na\"ive MMD and SWD are blind to contamination and their test MMD$^2$ grows roughly exponentially with $\epsilon$, reaching $0.046$ at $\epsilon = 0.8$. This is nearly two orders of magnitude above the oracle baseline of $0.00055$. SWD tracks Na\"ive MMD closely throughout, confirming that the insensitivity to contamination is not specific to the MMD loss.

CC--MMD tracks the oracle trend consistently across the full sweep, achieving test MMD$^2$ of $0.001209$ at $\epsilon = 0.6$ and $0.003713$ at $\epsilon = 0.8$, corresponding to $21\times$ and $12\times$ reductions relative to Na\"ive MMD. The gap between CC--MMD and the oracle grows modestly at high contamination rates, which is expected as at large $\epsilon$, the effective number of confirmed clean points within the audit set shrinks, increasing the variance of the Horvitz--Thompson correction.

Training only on the audited clean subset avoids contamination entirely and performs competitively with CC--MMD at high $\epsilon$, reaching $0.0023$ at $\epsilon = 0.8$ where its zero proxy error is most advantageous. At lower contamination levels ($\epsilon \leq 0.5$), however, CC--MMD consistently outperforms audit-only on MMD$^2$ by leveraging the full pool through proxy correction, demonstrating advantages of such approach.

\paragraph{Ablation study on proxy scores quality (Robustness to proxy score estimator performance)}
The theoretical guarantees of CC-MMD require no assumptions on the quality of the proxy scores $\widetilde{I}_j$. The correction remains design-unbiased regardless of how well $\widetilde{I}_j$ approximates the true membership indicator $I_j$ (Theorem \ref{thm:unbiased_sums}). 
We show this empirically.

We hold the following fixed: IAF generator, $\epsilon = 0.60$, $N = 10{,}000$, $m/N=10\%$ ($m = 1{,}000$ labeled points). Again, we focus on contamination coming from K-MNIST dataset.
We vary proxy scores AUROC (in the way described in Paragraph~\ref{sec:audit_and_proxy_estimation}) across $\approx {0.5, 0.6, 0.7, 0.8, 0.9, 0.99}$, and report results as a function of the achieved proxy scores AUROC.
Each condition is repeated over 20 independent seeds ($42-61$) and evaluated againstthe  clean MNIST test set using test $\mathrm{MMD}^2$ and FID. All results report $\mu \pm \sigma$.

Figure~\ref{fig:ablation_proxy} summarizes the findings.

As proxy AUROC increases from near-random (${\approx}0.50$) to near-perfect (${\approx}0.99$), test MMD$^2$ improves weakly, from $(13.6 \pm 3.1) \times 10^{-4}$ to $(5.8 \pm 1.2) \times 10^{-4}$, with most of the change concentrated in the upper range of AUROC values and a plateau across AUROC $0.50$--$0.90$.
The widening standard-deviation bands at lower AUROC confirm that proxy quality primarily affects finite-sample efficiency and optimization rather than the validity of the correction itself. Recall that Theorem~\ref{thm:design_clt_full} guarantees design-unbiasedness for any proxy, so a weaker proxy increases the variance of the Horvitz--Thompson weights without introducing bias in the estimator.
Taken together, these results indicate that \textsc{CC-MMD} does not require a near-perfect proxy to improve over na\"ive contaminated distribution matching and a modest audit effort yielding AUROC in the $0.60$--$0.80$ range is sufficient to recover the bulk of the benefit.

\begin{figure}[t]
    \centering
    \includegraphics[width=0.49\linewidth]{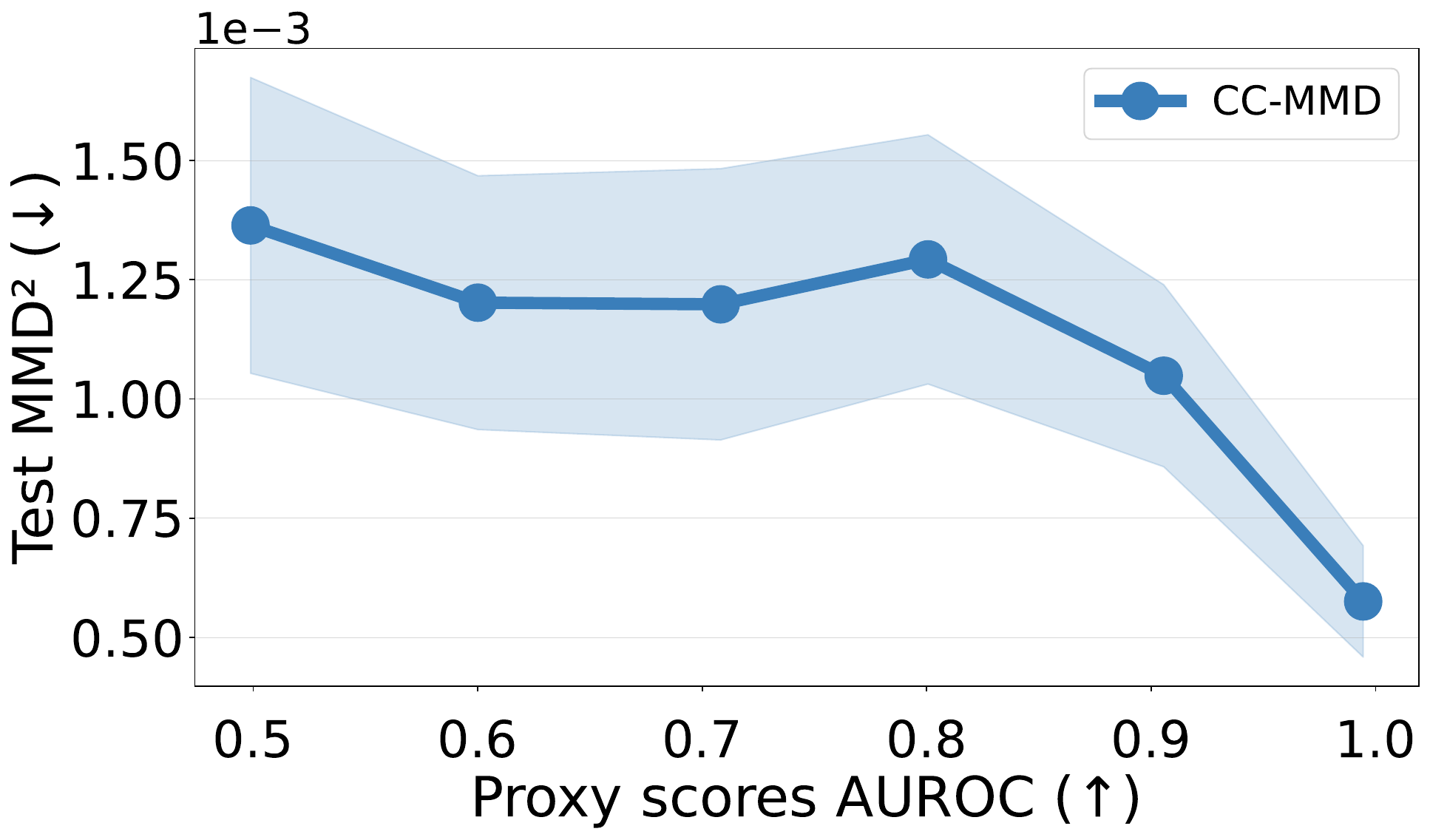}
    \hfill
    \includegraphics[width=0.49\linewidth]{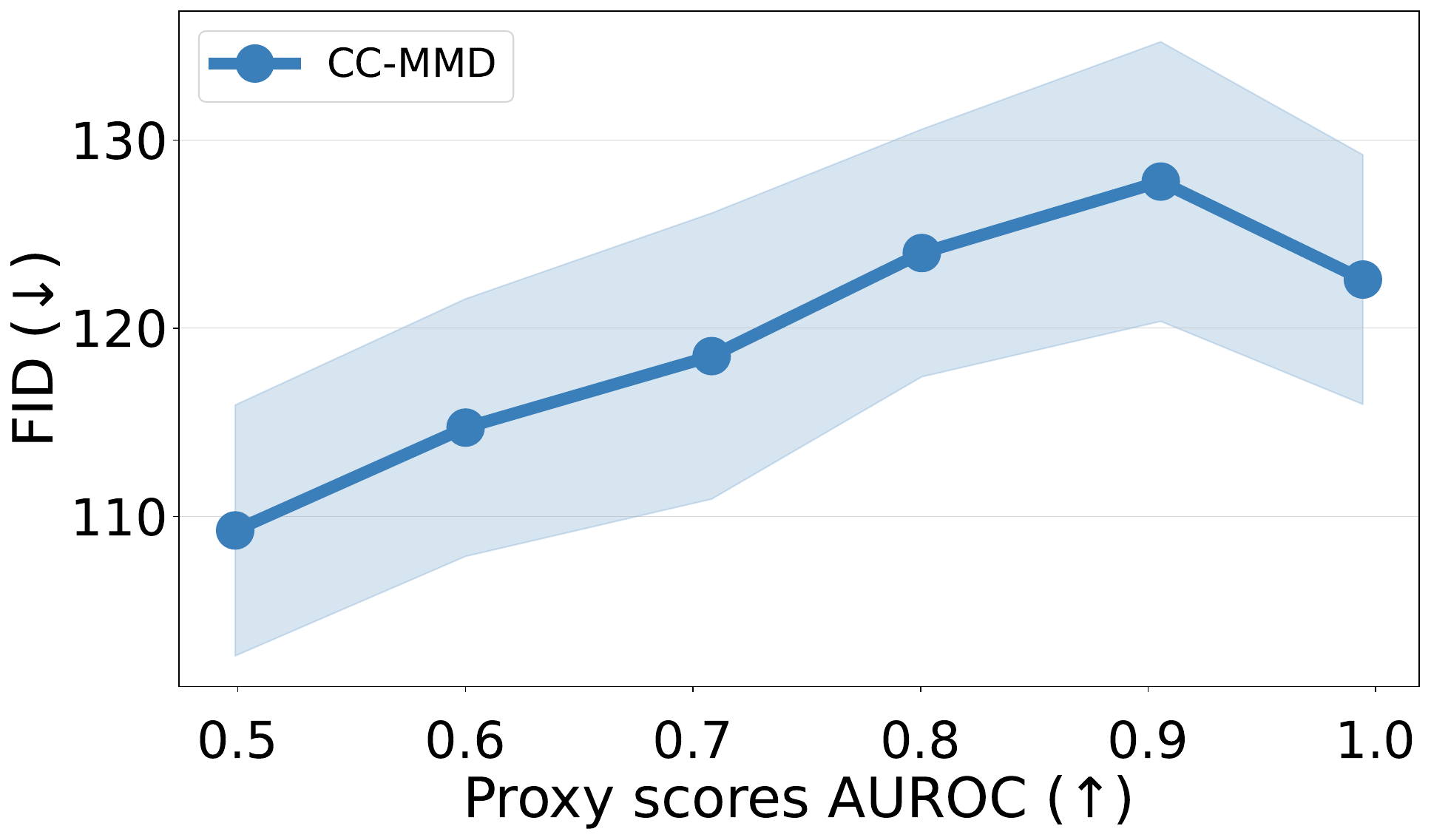}
    \caption{\textbf{Generative quality vs. proxy scores AUROC on MNIST.} Test $\mathrm{MMD}^2$ (\textbf{left}) and FID (\textbf{right}) against the clean MNIST test set. Contamination rate is fixed at $\epsilon = 0.60$ and so is audit fraction at $m/N = 0.10$. Contamination source is K-MNIST and architecture is IAF. Shaded bands show $\pm 1$ standard deviation over 20 random seeds.}
    \label{fig:ablation_proxy}
\end{figure}

\paragraph{Ablation study on $m$ (audit sample size)}
Here, we investigate the impact of the audit sample size on CC-MMD performance. We begin by fixing: IAF generator, $\epsilon = 0.6$, $N = 10{,}000$, AUROC$\approx 0.7$ ($\eta$ varies to achieve desired AUROC across different $m$), K-MNIST as a contamination source. We vary audit size $m/N \in \{5\%,10\%,20\%,30\%,40\%,50\%\}$. Each condition is repeated over 20 independent seeds ($42-61$) and evaluated against the clean MNIST test set using test $\mathrm{MMD}^2$ and FID. All results report $\mu \pm \sigma$ and are shown in Figure~\ref{fig:ablation-audit-size}.

As $m$ increases, performance improves rapidly: with only $m = 5\%$ of the pool labeled (500 points), CC--MMD already corrects the bulk of the contamination-induced bias and substantially outperforms na\"ive MMD trained on the full contaminated pool (see Figure~\ref{fig:ccmmd_contam_metrics}).
Increasing the budget to $m = 30\%$ yields a further $3.18\times$ reduction in test MMD$^2$ ($1.898 \times 10^{-3} \to 5.973 \times 10^{-4}$), driven by a more accurate bandwidth estimate from the growing clean audit set.
Beyond $m = 30\%$, performance is largely stable, indicating that a moderate annotation budget captures most of the available signal and that the method does not require exhaustive labeling to operate effectively.
These results confirm that even limited but informative audit supervision is sufficient to correct much of the contamination-induced bias, a practically important property when ground-truth membership labels are expensive to obtain

\begin{figure}[t]
  \centering
  \includegraphics[width=0.49\linewidth]{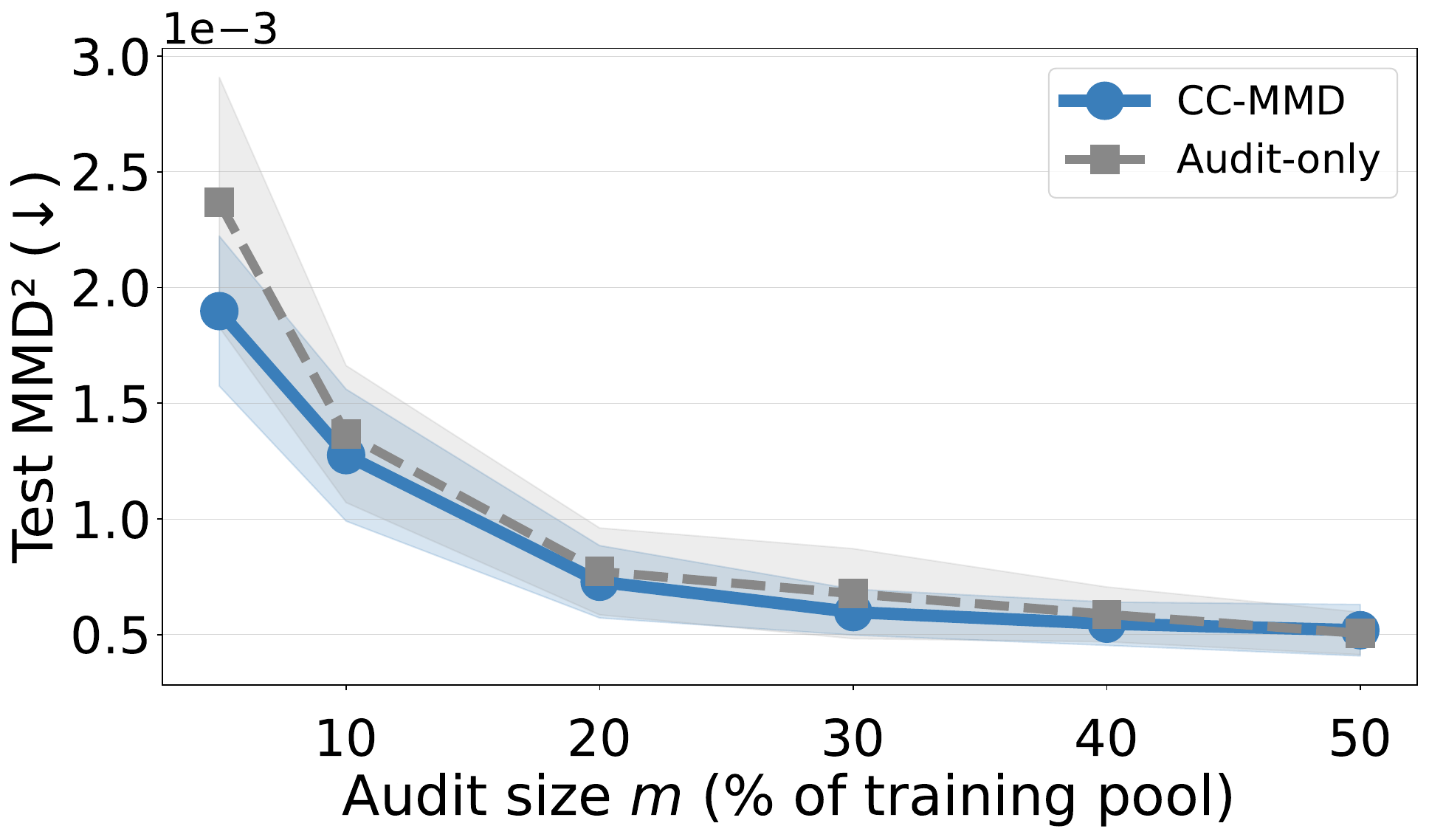}
  \hfill
  \includegraphics[width=0.49\linewidth]{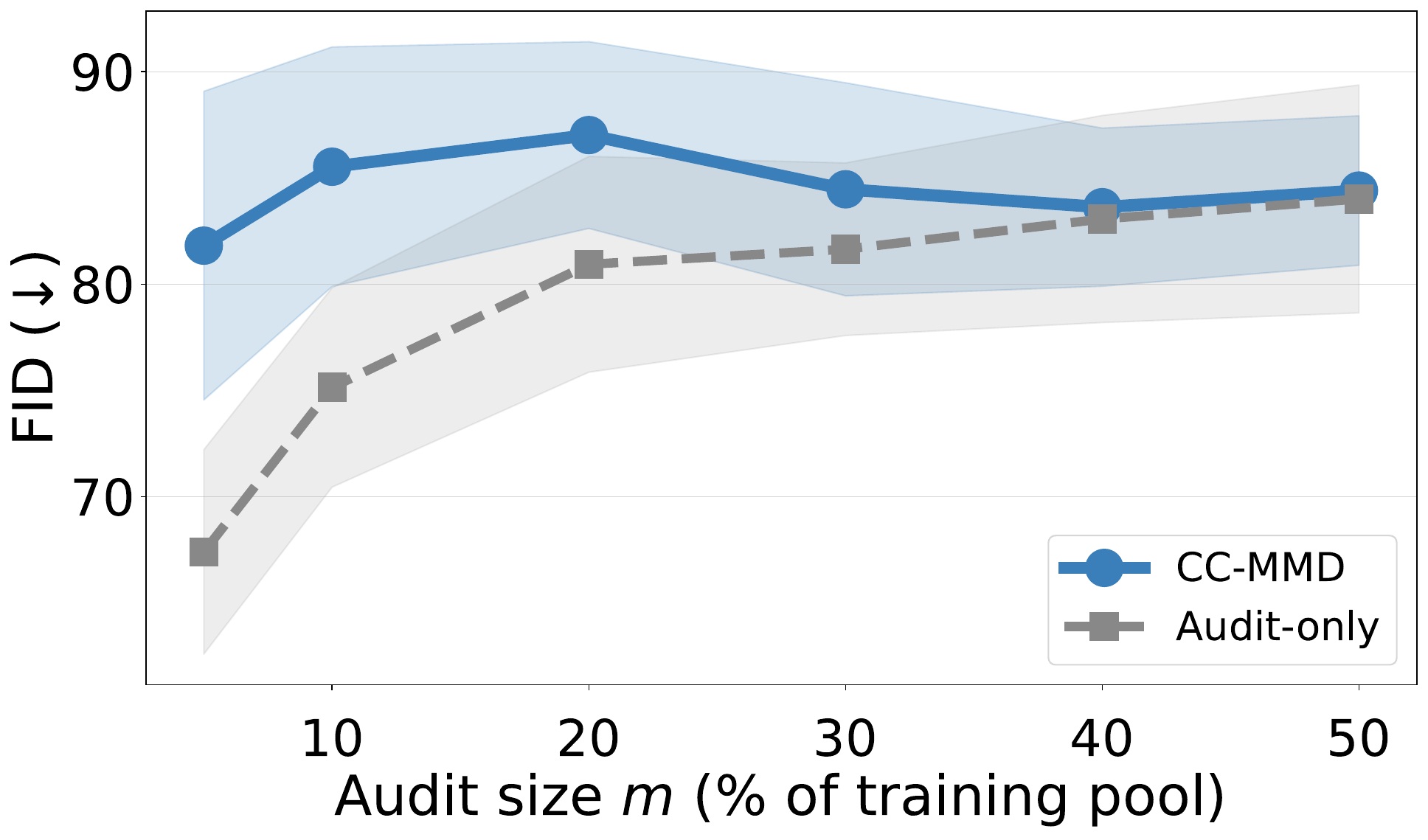}
  \caption{
    \textbf{Generative quality vs. audit size on MNIST.} Test $\mathrm{MMD}^2$ (\textbf{left}) and FID (\textbf{right}) against the clean MNIST test set. Contamination rate is fixed at $\epsilon = 0.60$ and proxy AUROC is $\approx 0.7$ across all the audit sizes. Contamination source is K-MNIST and architecture is IAF. Shaded bands show $\pm 1$ standard deviation over 20 random seeds. For reference, audit-only performance is shown too.}
  \label{fig:ablation-audit-size}
\end{figure}

\section{Additional results and discussion on proxy scores}
\label{app:additional-results}
As specified in Section~\ref{sec:contamination_problem}, the proxy scores, audited subset, and full dataset are treated as fixed inputs produced by a pre-existing data-processing pipeline. The downstream modeler does not train the proxy model to obtain proxy scores. Rather, they are generated by an upstream system, for example, labels or judgments from a pretrained LLM, scores from an automated signal-versus-noise classifier, or domain-specific classification probabilities used in astronomy to distinguish galaxies, stars, and quasars. The proxy scores exist before any audit is drawn. 
The modeler observes proxy scores for the full sample and true membership labels only for the audited subset; illustrative examples are provided in Appendix~\ref{sec:illustrative_exmpl}.

This reflects a distinction between two streams of research: designing the upstream data-processing pipeline to maximize information gain for downstream inference, and performing statistically valid downstream modeling and inference given whatever that pipeline produces. 
While the former is an interesting and important research direction, it is a fundamentally different problem from the one studied here. In astronomy, for instance, such upstream pipelines are developed over many years by large collaborations to support a broad range of scientific objectives rather than a single downstream task, and may be trained on raw data or infrastructure the downstream analyst cannot access. Therefore, the available proxy scores and audited labels are constraints imposed by the data collection and processing pipeline, not quantities the downstream analyst can readily optimize. Our work addresses the complementary problem of performing statistically valid learning given these fixed resources.

In this setting the independence Assumption~\ref{ass:srs} requires holds by construction: the analyst does not fit the proxy, and the audit is a simple random sample drawn afterwards.

Experiments 1 and 2 in Section~\ref{sec:experiments} instantiate this directly. The proxy scores there are produced by fixed score functions of $W$ alone and never touch the audit set, so Assumption~\ref{ass:srs} holds exactly as stated.

Since no such upstream pipeline exists for the benchmark datasets we use in Experiments 3 and 4, we must simulate one. One way to do it is to train a classifier using the audited labels. This draws attention to Assumption~\ref{ass:srs} as it creates dependence between $\tilde I$ and $L$. However, fitting a proxy using the audit sample is an artifact of constructing a proxy for a benchmark rather than a feature of the estimator, which never refits $\tilde I$.
An alternative is to use the audit only to fit a classifier and then either weight or filter the pool by its predictions. 

To confirm that our modeling choice does not drive our conclusions, this appendix compares the two strategies, in the regression setting of Experiment 2 and in the generative setting of Experiment 4.

\subsection{No Upstream Classifier}
\label{sec:additional-no-classifier}

First, we consider the most challenging setting, in which no upstream classifier is available. We assume $\widetilde{I}_j = 1$ for $j = 1, \dots, N$. This represents the scenario in which the downstream method has no prior information regarding which observations may be contaminated and defaults to the assumption that all observations come from the target distribution. Here, no classifier is fit at any stage.

We compare four approaches for estimating the slope of the linear model of Experiment 2 in Section~\ref{sec:experiments}, using the same contamination model:

\begin{itemize}
\item \textbf{Na\"ive MMD.} The regression parameters are estimated by minimizing the MMD objective using all observations, without any correction for contamination.
\item \textbf{CC--MMD.} The audit sample with verified labels is used directly to construct the CC-MMD debiasing correction, while the remaining observations are used in the standard MMD objective.
\item \textbf{Probability-weighted MMD.} The audit sample is used to train the best random forest classifier estimating the probability of each non-audited observation being an inlier. These estimated probabilities are then used as weights in the MMD objective.
\item \textbf{Hard-threshold MMD.} The same random forest classifier is trained on the audit sample, but only observations classified as inliers are retained in the MMD objective after removing the audited observations.
\end{itemize}

Table~\ref{tab:additional_table_1} reports the resulting MSEs for estimating the slope using 50 random realizations of data.

\begin{table}[t]
\centering
\caption{Slope MSE $\pm$ SE over 50 random realizations with no upstream classifier ($\tilde I_j \equiv 1$), under the contamination model of Experiment 2. Lower is better.}
\label{tab:additional_table_1}
\small
\begin{tabular}{llcccc}
\toprule
$N$ & $m$ & Na\"ive MMD & CC--MMD ($\tilde I_j \equiv 1$) & RF prob.-weighted & RF hard-threshold \\
\midrule
200  & 20 & 0.0579 $\pm$ 0.0103 & \textbf{0.0367 $\pm$ 0.0073} & 0.0384 $\pm$ 0.0077 & 0.0882 $\pm$ 0.0169 \\
200  & 40 & 0.0495 $\pm$ 0.0093 & \textbf{0.0120 $\pm$ 0.0024} & 0.0162 $\pm$ 0.0032 & 0.0280 $\pm$ 0.0054 \\
200  & 60 & 0.0517 $\pm$ 0.0097 & \textbf{0.0112 $\pm$ 0.0022} & 0.0160 $\pm$ 0.0032 & 0.0229 $\pm$ 0.0045 \\
300  & 20 & 0.0369 $\pm$ 0.0070 & \textbf{0.0194 $\pm$ 0.0039} & 0.0252 $\pm$ 0.0049 & 0.0923 $\pm$ 0.0168 \\
300  & 40 & 0.0457 $\pm$ 0.0084 & \textbf{0.0118 $\pm$ 0.0024} & 0.0177 $\pm$ 0.0035 & 0.0249 $\pm$ 0.0046 \\
300  & 60 & 0.0579 $\pm$ 0.0104 & \textbf{0.0077 $\pm$ 0.0015} & 0.0138 $\pm$ 0.0025 & 0.0201 $\pm$ 0.0036 \\
500  & 20 & 0.0437 $\pm$ 0.0082 & \textbf{0.0167 $\pm$ 0.0033} & 0.0249 $\pm$ 0.0049 & 0.2686 $\pm$ 0.0537 \\
500  & 40 & 0.0418 $\pm$ 0.0079 & \textbf{0.0105 $\pm$ 0.0021} & 0.0141 $\pm$ 0.0028 & 0.0294 $\pm$ 0.0057 \\
500  & 60 & 0.0548 $\pm$ 0.0097 & \textbf{0.0097 $\pm$ 0.0019} & 0.0117 $\pm$ 0.0022 & 0.0157 $\pm$ 0.0030 \\
1000 & 20 & 0.0421 $\pm$ 0.0079 & 0.0548 $\pm$ 0.0110 & \textbf{0.0177 $\pm$ 0.0035} & 0.0564 $\pm$ 0.0105 \\
1000 & 40 & 0.0809 $\pm$ 0.0161 & \textbf{0.0123 $\pm$ 0.0025} & 0.0148 $\pm$ 0.0027 & 0.0207 $\pm$ 0.0038 \\
1000 & 60 & 0.0514 $\pm$ 0.0093 & 0.0100 $\pm$ 0.0020 & \textbf{0.0085 $\pm$ 0.0017} & 0.0098 $\pm$ 0.0019 \\
\bottomrule
\end{tabular}
\end{table}

As expected, the naive MMD estimator performs worst since it completely ignores contamination. CC-MMD achieves the lowest slope MSE in 10 of the 12 experimental configurations and is consistently among the best-performing methods. The probability-weighted approach performs competitively, occasionally achieving the lowest error, particularly for large sample sizes coupled with very small audit sample. The hard-threshold approach is generally less stable and performs worse because classification errors permanently discard observations that may still contain useful information.

\subsection{Weak Upstream Classifier}
\label{sec:additional-weak-classifier}

We next consider a more realistic setting using a weak upstream classifier as a proxy. Here, the proxy is the fixed, miscalibrated score function of Experiment 2 in Section~\ref{sec:experiments} (with $\sigma = 1.0$). It is a function of the pooled sample alone and never uses the audit set. The weak upstream classifier provides noisy estimates of the inlier probability for each observation, in place of the uninformative $\tilde I_j \equiv 1$ of Appendix~\ref{sec:additional-no-classifier}. We repeat the same set of experiments as in \ref{sec:additional-no-classifier} using these imperfect scores. The results are shown in Table~\ref{tab:additional_table_2}. 

\begin{table}[t]
\centering
\caption{Slope MSE $\pm$ SE over 50 random realizations with a weak upstream
classifier providing noisy inlier probability estimates. Settings otherwise as in
Table~\ref{tab:additional_table_1}. Lower is better.}
\label{tab:additional_table_2}
\small
\begin{tabular}{llcccc}
\toprule
$N$ & $m$ & Na\"ive MMD & CC--MMD (weak proxy) & RF prob.-weighted & RF hard-threshold \\
\midrule
200  & 20 & 0.0624 $\pm$ 0.0110 & 0.0370 $\pm$ 0.0074 & \textbf{0.0343 $\pm$ 0.0068} & 0.0873 $\pm$ 0.0168 \\
200  & 40 & 0.0491 $\pm$ 0.0093 & \textbf{0.0153 $\pm$ 0.0031} & 0.0164 $\pm$ 0.0032 & 0.0297 $\pm$ 0.0058 \\
200  & 60 & 0.0552 $\pm$ 0.0104 & \textbf{0.0105 $\pm$ 0.0021} & 0.0154 $\pm$ 0.0030 & 0.0231 $\pm$ 0.0045 \\
300  & 20 & 0.0355 $\pm$ 0.0068 & \textbf{0.0167 $\pm$ 0.0033} & 0.0257 $\pm$ 0.0050 & 0.0924 $\pm$ 0.0167 \\
300  & 40 & 0.0438 $\pm$ 0.0081 & \textbf{0.0109 $\pm$ 0.0022} & 0.0185 $\pm$ 0.0036 & 0.0267 $\pm$ 0.0049 \\
300  & 60 & 0.0541 $\pm$ 0.0095 & \textbf{0.0097 $\pm$ 0.0019} & 0.0135 $\pm$ 0.0025 & 0.0204 $\pm$ 0.0037 \\
500  & 20 & 0.0445 $\pm$ 0.0084 & \textbf{0.0225 $\pm$ 0.0045} & 0.0273 $\pm$ 0.0054 & 0.5206 $\pm$ 0.1041 \\
500  & 40 & 0.0408 $\pm$ 0.0076 & \textbf{0.0100 $\pm$ 0.0020} & 0.0142 $\pm$ 0.0028 & 0.0305 $\pm$ 0.0059 \\
500  & 60 & 0.0571 $\pm$ 0.0104 & \textbf{0.0111 $\pm$ 0.0022} & 0.0119 $\pm$ 0.0022 & 0.0155 $\pm$ 0.0028 \\
1000 & 20 & 0.0457 $\pm$ 0.0086 & 0.0253 $\pm$ 0.0051 & \textbf{0.0172 $\pm$ 0.0034} & 0.0508 $\pm$ 0.0090 \\
1000 & 40 & 0.0919 $\pm$ 0.0183 & 0.0141 $\pm$ 0.0028 & \textbf{0.0125 $\pm$ 0.0023} & 0.0181 $\pm$ 0.0033 \\
1000 & 60 & 0.0577 $\pm$ 0.0102 & \textbf{0.0077 $\pm$ 0.0015} & 0.0090 $\pm$ 0.0018 & 0.0113 $\pm$ 0.0022 \\
\bottomrule
\end{tabular}
\end{table}

In this setting, even modest predictive ability improves the performance of CC--MMD in low-audit regime since the proxy contributes information that the audited residuals alone cannot supply. The other methods do not gain from the upstream classifier because they do not use the proxy scores. The ordering across methods is otherwise unchanged from Table~\ref{tab:additional_table_1}.

\subsection{High-Dimensional Generative Modeling (MNIST)}
\label{sec:additional-mnist}

To assess whether these conclusions extend beyond the linear regression simulation, we repeat the comparison on the MNIST generative experiment in Section~\ref{sec:experiments} with the same settings. We use MNIST digits as target distribution contaminated at rate $\epsilon = 0.6$ with either K-MNIST or Omniglot images, a 10\% audited subset, and the three generative architectures, each trained under 5 random seeds.

We compare five approaches:
\begin{enumerate}
    \item \textbf{Naive MMD.}
    \item \textbf{CC-MMD ($\tilde I \equiv 1$).} No upstream classifier available; we assume
    \[
    \tilde I_j \equiv 1, \quad j = 1, \dots , N,
    \]
    so that every observation is included in the analysis (as in Appendix~\ref{sec:additional-no-classifier}).
    \item \textbf{CC-MMD (trained proxy).} Proxy scores from a proxy classifier (logistic regression) and audit sample are used to construct the CC-MMD debiasing correction, as in Section~\ref{sec:experiments}, Experiment 4.
    \item \textbf{Logistic regression (LR) probability-weighted MMD.}
    \item \textbf{Logistic regression (LR) hard-threshold MMD.}
\end{enumerate}

The results are presented in Table~\ref{tab:additional_table_3} below.

\begin{table}[t]
\centering
\caption{Test $\mathrm{MMD}^2$ ($\times 10^{-3}$) $\pm$ SE over 5 seeds on clean
MNIST test images, at $\epsilon = 0.60$. Lower is better.}
\label{tab:additional_table_3}
\resizebox{\textwidth}{!}{
\begin{tabular}{lccccc}
\toprule
& Na\"ive MMD & CC--MMD ($\widetilde{I} \equiv 1$) & CC--MMD (proxy) & LR prob.-weighted & LR hard-threshold \\
\midrule
\multicolumn{6}{l}{\emph{K-MNIST contamination}} \\
\midrule
IAF  & 25.9 $\pm$ 0.2 & 2.2 $\pm$ 0.2 & \textbf{1.1 $\pm$ 0.1} & 11.1 $\pm$ 0.5 & 10.5 $\pm$ 0.5 \\
VAE  & 27.9 $\pm$ 0.5 & 3.1 $\pm$ 0.2 & \textbf{2.4 $\pm$ 0.1} & 12.4 $\pm$ 0.3 & 11.7 $\pm$ 0.6 \\
GMMN & 27.1 $\pm$ 0.4 & 2.3 $\pm$ 0.3 & \textbf{1.2 $\pm$ 0.1} & 11.3 $\pm$ 0.5 & 10.5 $\pm$ 0.6 \\
\midrule
\multicolumn{6}{l}{\emph{Omniglot contamination}} \\
\midrule
IAF  & 459.5 $\pm$ 1.6 & \textbf{1.9 $\pm$ 0.8} & \textbf{1.9 $\pm$ 0.5} & 167.6 $\pm$ 6.6 & 132.3 $\pm$ 4.4 \\
VAE  & 452.5 $\pm$ 2.5 & 3.0 $\pm$ 0.9 & \textbf{2.7 $\pm$ 0.4} & 165.7 $\pm$ 6.2 & 127.8 $\pm$ 5.6 \\
GMMN & 458.2 $\pm$ 3.5 & \textbf{1.9 $\pm$ 0.8} & \textbf{1.9 $\pm$ 0.6} & 162.7 $\pm$ 3.7 & 132.3 $\pm$ 6.0 \\
\bottomrule
\end{tabular}
}
\end{table}

Both CC-MMD variants dominate across every architecture and contamination source. Crucially, CC-MMD is nearly unaffected by contamination severity: Omniglot (more severe contamination than K-MNIST) inflates the naive-MMD baseline 18-fold over K-MNIST, yet CC-MMD’s error stays flat ($\sim 0.001 - 0.003$). The classifier-trusting methods improve over naive MMD but remain 10–100× worse than CC-MMD, and this gap grows larger exactly when contamination is worst — the opposite of what a robustness method should do. This matches the additional experiments 1 (\ref{sec:additional-no-classifier}) and 2 (\ref{sec:additional-weak-classifier}): using audited data in the estimator in addition to proxy scores remains the key advantage even in high-dimensional, non-linear settings.

\subsection{Discussion}
\label{sec:additional-discussion}

Although probability-weighted MMD performs well in these experiments, particularly because the simulated classifier is well calibrated, CC-MMD has two important conceptual advantages, in addition to empirical performance shown above.

First, CC-MMD is unbiased regardless of the audit and data sample sizes. The debiasing correction is constructed directly from the audited observations, yielding an unbiased estimator of the target MMD without requiring any assumptions about the quality of an auxiliary classifier. In contrast, the probability-weighted estimator is unbiased only if the estimated probabilities are well calibrated, a property that is often difficult to guarantee in practice. Our simulation intentionally uses a calibrated classifier, which explains the strong performance of the weighting approach.

Second, CC-MMD uses the audit sample more efficiently. Audited observations are typically the most valuable resource because obtaining verified labels is expensive. Using these samples solely to train a classifier and then weighting MMD uses information in a suboptimal way that could instead be used directly to correct the estimation objective. CC-MMD incorporates the audited observations explicitly into the estimator (analogous to the philosophy behind prediction-powered inference \cite{angelopoulos2023prediction}, where limited high-quality labels are used to debias an estimator rather than simply improve a prediction model). This direct use of audited data is particularly advantageous when learning an accurate classifier is itself difficult.

The experimental results above provide direct evidence of advantages of our method, CC-MMD, compared to the classifier-based alternatives. In particular, CC-MMD combines the efficiency of the full proxy-labeled sample with the bias correction provided by the audit.

The results are also consistent with our theory: the correction is design-unbiased for any $\tilde I$, so proxy quality affects the variance of the estimator rather than its validity, and at the audit budget used here the audited residuals alone already suffice. Any advantage over naive MMD, therefore, cannot arise from reusing the audit set.

\section{Computation details and reproducibility.}
\label{sec:computation}
All generative experiments are repeated over 5 random seeds (42--46), while the ablation studies are repeated over 20 random seeds per condition. Reported results show $\mu \pm \sigma$ across seeds. All experiments were implemented in PyTorch~2.6 with CUDA~12.4. Normalizing flows use the \texttt{nflows}~0.14 library, proxy classifiers use scikit-learn~1.7.2. Experiments were run on NVIDIA RTX~A4000 GPUs (16\,GB) and on high-performance computing cluster equipped with NVIDIA H100 GPUs.

\end{document}